\title{Tackling the Curse of Dimensionality with Physics-Informed Neural Networks}
\date{}
\patchcmd{\@maketitle}{\LARGE}{\fontsize{15}{20}\selectfont}{}{}
\newtheorem{theorem}{Theorem}[section]
\newtheorem{remark}{Remark}
\newtheorem{lemma}{Lemma}[section]
\newtheorem*{lemma*}{Lemma}
\newtheorem*{theorem*}{Theorem}
\newtheorem*{assumption*}{Assumption}
\newtheorem{assumption}{Assumption}[section]
\newtheorem{definition}{Definition}[section]
\newcommand{\bx}{\boldsymbol{x}}
\begin{document}

\author{Zheyuan Hu\thanks{Department of Computer Science, School of Computing, National University of Singapore, Singapore, 119077. Email: \href{mailto:e0792494@u.nus.edu}{e0792494@u.nus.edu} (Z Hu),\href{mailto:kenji@nus.edu.sg}{kenji@nus.edu.sg} (KK)}
\and Khemraj Shukla\thanks{Division of Applied Mathematics, Brown University, Providence, RI 02912, USA (\href{mailto:khemraj\_shukla@brown.edu}{khemraj\_shukla@brown.edu}, \href{mailto:george\_karniadakis@brown.edu}{george\_karniadakis@brown.edu})}
\and George Em Karniadakis\footnotemark[2] \and  \linebreak Kenji Kawaguchi\footnotemark[1]}

\maketitle
\begin{abstract}
The curse-of-dimensionality taxes computational resources heavily with exponentially increasing computational cost as the dimension increases. This poses great challenges in solving high-dimensional partial differential equations (PDEs), as Richard E. Bellman first pointed out over 60 years ago. While there has been some recent success in solving numerical PDEs in high dimensions, such computations are prohibitively expensive, and true scaling of general nonlinear PDEs to high dimensions has never been achieved. We develop a new method of scaling up physics-informed neural networks (PINNs) to solve arbitrary high-dimensional PDEs. The new method, called Stochastic Dimension Gradient Descent (SDGD), decomposes a gradient of PDEs' and PINNs' residual into pieces corresponding to different dimensions and randomly samples a subset of these dimensional pieces in each iteration of training PINNs. We prove theoretically the convergence and other desired properties of the proposed method. We demonstrate in various diverse tests that the proposed method can solve many notoriously hard high-dimensional PDEs, including the Hamilton-Jacobi-Bellman (HJB) and the Schr\"{o}dinger equations in tens of thousands of dimensions very fast on a single GPU using the PINNs mesh-free approach. Notably, we solve nonlinear PDEs with nontrivial, anisotropic, and inseparable solutions in less than one hour for 1,000 dimensions and in 12 hours for 100,000 dimensions on a single GPU using SDGD with PINNs. Since SDGD is a general training methodology of PINNs, it can be applied to any current and future variants of PINNs to scale them up for arbitrary high-dimensional PDEs.
\end{abstract}

\section{Introduction}
The curse-of-dimensionality (CoD) refers to the computational and memory challenges when dealing with high-dimensional problems that do not exist in low-dimensional settings. The term was first introduced in 1957 by Richard E. Bellman \cite{bellman1966dynamic,hammer1962adaptive}, who was working on dynamic programming, to describe the exponentially increasing costs due to high dimensions. Later, the concept was adopted in various areas of science, including numerical partial differential equations (PDEs), combinatorics, machine learning, probability, stochastic analysis, and data mining. 
In the context of numerically solving PDEs, an increase in the dimensionality of the PDE's independent variables tends to lead to a corresponding exponential rise in computational costs needed for obtaining an accurate solution. This poses a considerable challenge for all existing algorithms.

Physics-Informed Neural Networks (PINNs) are highly practical in solving PDE problems \cite{raissi2019physics}. Compared to other methods, PINNs can solve various PDEs in general form, enabling mesh-free solutions and handling complex geometries. Additionally, PINNs leverage the interpolation capability of neural networks to provide accurate predictions throughout the domain, unlike other methods for high-dimensional PDEs, which can only compute the value of PDEs at a single point or are restricted to certain PDE types \cite{beck2021deep, han2018solving, raissi2018forward}. Due to neural networks' flexibility and expressiveness, in principle, the set of functions representable by PINNs have the ability to approximate high-dimensional PDE solutions. However, PINNs may also fail in tackling CoD due to insufficient memory and slow convergence when dealing with high-dimensional PDEs. For example, when the dimension of PDE is very high, even just using one collocation point can lead to an insufficient memory error due to the massive high-dimensional derivatives in the PDE. Unfortunately, the aforementioned high-dimensional and high-order cases often occur for very useful PDEs, such as the Hamilton-Jacobi-Bellman (HJB) equation in stochastic optimal control, the Fokker-Planck equation in stochastic analysis and high-dimensional probability, and the Black-Scholes equation in mathematical finance, etc. These PDE examples pose a grand challenge to the application of PINNs in effectively tackling real-world large-scale problems.

To scale up PINNs for arbitrary high-dimensional PDEs, we propose a training method of PINNs by decomposing and sampling gradients of PDE's and PINN's residual, which we name {\em stochastic dimension gradient descent} (SDGD). It accelerates the training of PINNs while significantly reducing the memory cost, thereby realizing the training of any high-dimensional PDEs with acceleration. Specifically, we scale up and speed up PINNs by decomposing the computational and memory bottleneck, namely the gradient of the residual loss that contains massive PDE terms along numerous dimensions. We propose a novel decomposition of the gradient of the residual loss into each piece corresponding to each dimension of PDEs. Then, at each training iteration, we sample a subset of these dimensional pieces to optimize PINNs. Although we use only a subset per iteration, this sampling process is ensured to be an unbiased estimator of the full gradient of all dimensions, which is then used to guarantee convergence for all dimensions. 

SDGD enables more efficient parallel computations, fully leveraging multi-GPU computing to scale up and expedite the speed of PINNs. Parallelizing the computation of mini-batches of different dimensional pieces across multiple devices can expedite convergence. This is akin to traditional SGD over data points, which allows simultaneous computation on different machines. SDGD also accommodates the use of gradient accumulation on resource-limited machines to enable larger batch sizes and reduce gradient variance. It involves accumulating gradients from multiple batches to form a larger batch size for stochastic gradients, reducing the variance of each stochastic gradient on a single GPU. Due to the refined and general partitioning offered by SDGD that samples both collocation points and PDE dimensions, our gradient accumulation can be executed on devices with limited resources, ideal for edge computing. 

We theoretically prove that the stochastic gradients generated by SDGD are unbiased. Based on this, we also provide convergence guarantees for our method. Additionally, our theoretical analysis shows that under the same batch size and memory cost, proper batch sizes for PDE terms/dimensions and residual points can minimize gradient variance and accelerate convergence, an outcome not possible solely with conventional SGD over points. SDGD extends and generalizes traditional SGD over points, providing stable, low-variance stochastic gradients while enabling smaller batch sizes and reduced memory usage.

We conduct extensive experiments on several high-dimensional PDEs. We investigate the relationship between SDGD and vanilla SGD over collocation points, where the latter is stable and widely adopted to reduce memory cost in previous work. We vary SDGD's batch sizes of PDE terms/dimensions and collocation points to study the stability and convergence speed of SDGD under certain memory budgets. Experimental results show that our proposed SDGD is as stable as SGD over points on nonlinear high-dimensional PDEs and can accelerate convergence. We also demonstrate the guideline for selecting the batch sizes of PDE terms/dimensions and collocation points in SDGD.

Furthermore, we showcase large-scale PDEs where traditional PINN training directly fails due to an out-of-memory (OOM) error. Concretely, with the further speed-up method via sampling PDE terms/dimensions in both forward and backward passes, we can train PINNs on nontrivial nonlinear PDEs in 100,000 dimensions in 12 hours of training time, while the traditional PINN training methods exceed the GPU memory limit directly. An in-depth comparison between vanilla PINN training and SDGD is also provided.

After validating the effectiveness of SDGD compared to SGD over collocation points, we compare the algorithms with other methods, e.g., \cite{beck2021deep, han2018solving, raissi2018forward}. Our algorithm outperforms other methods on multiple nonlinear parabolic PDEs. Additionally, our method enables mesh-free training and prediction across the entire domain, while other methods typically only provide predictions at a single point. Furthermore, we combine our algorithm with adversarial training, notorious for its slowness in high-dimensional machine learning, and our algorithm significantly accelerates it. We also demonstrate the generality of our method by applying it to the Schr\"{o}dinger equation, which has broad connections with quantum physics. Overall, our method provides a paradigm shift for performing large-scale PINN training. 

The rest of this paper is arranged as follows. We present related work in Section 2, our main algorithm for accelerating and scaling up PINNs in Section 3, theoretical analysis of our algorithms' convergence in Section 4, numerical experiments in Section 5, and conclusion in Section 6.
In the Appendices, we include more examples and details of our method and prove the theories.

\section{Related Work}
\subsection{Physics-Informed Machine Learning}
This paper is based on the concept of Physics-Informed Machine Learning \cite{karniadakis2021physics}. Specifically, PINNs \cite{raissi2019physics} utilize neural networks as surrogate solutions for PDEs and optimize the boundary loss and residual loss to approximate the PDEs' solutions. On the other hand, DeepONet \cite{lu2019deeponet} leverages the universal approximation theorem of infinite-dimensional operators and directly fits the PDE operators in a data-driven manner. It can also incorporate physical information by using residual loss. Therefore, our algorithms can also be flexibly combined with the training of physics-informed DeepONet \cite{goswami2022physicsinformed} and other variants such as Fourier neural operators \cite{li2021fourier,JMLR:v24:21-1524} and its physics-informed version \cite{li2021physics}. Since their inception, PINNs have succeeded in solving numerous practical problems in computational science, e.g., nonlinear PDEs \cite{raissi2018hidden}, solid mechanics \cite{haghighat2021physics}, uncertainty quantification \cite{yang2019adversarial}, inverse water waves problems \cite {jagtap2022deep}, and fluid mechanics \cite{cai2021physics}, to name just a few. Neural operators also have shown their great potential in solving stiff chemical mechanics \cite{goswami2023learning}, shape optimization \cite{shukla2023deep}, and materials science problems \cite{goswami2022physics}.

The success of PINNs has also attracted considerable attention in theoretical analysis. In the study by Luo et al. \cite{Luo2020TwoLayerNN}, the authors delved into exploring PINNs' prior and posterior generalization bounds. Additionally, they utilized the neural tangent kernel to demonstrate the global convergence of PINNs. Mishra et al. \cite{mishra2020estimates} derived an estimate for the generalization error of PINNs, considering both the training error and the number of training samples. In a different approach, Shin et al. \cite{shin2020convergence} adapted the Schauder approach and the maximum principle to provide insights into the convergence behavior of the minimizer as the number of training samples tends towards infinity. They demonstrated that the minimizer converges to the solution in both $C^0$ and $H^1$. Furthermore, Lu et al. \cite{lu2021priori} employed the Barron space within the framework of two-layer neural networks to conduct a prior analysis on PINN with the softplus activation function. This analysis is made possible by drawing parallels between the softplus and ReLU activation functions. More recently, Hu et al. \cite{hu2021extended} employed the Rademacher complexity concept to measure the generalization error for both PINNs and the extended PINNs variant (XPINNs \cite{jagtap2020extended} and APINNs \cite{hu2022augmented}).

\subsection{Machine Learning PDEs Solvers in High-Dimensions}
Numerous attempts have been made to tackle high-dimensional PDEs by deep learning to overcome the curse of dimensionality.
In the PINN literature, \cite{wang20222} showed that to learn a high-dimensional HJB equation, the $L^\infty$ loss is required. The authors of \cite{he2023learning} proposed to parameterize PINNs by the Gaussian smoothed model and to optimize PINNs without back-propagation by Stein's identity, avoiding the vast amount of differentiation in high-dimensional PDE operators to accelerate the convergence of PINNs. Separable PINN \cite{cho2022separable} considers a per-axis sampling of residual points in high-dimensional spaces, thereby reducing the computational cost and enlarging the batch size of residual points in PINN via tensor product. However, this method \cite{cho2022separable} focuses on acceleration on only modest (less than 4) dimensions and is designed mainly for increasing the number of collocation points in 3D PDEs, whose separation of sampling points becomes intractable in high-dimensional spaces. 
Han et al. \cite{han2018solving, han2017deep} proposed the DeepBSDE solver for high-dimensional PDEs based on the classical BSDE method, and they used deep learning to approximate unknown functions. Extensions of the DeepBSDE method were presented in \cite{beck2019machine, chan2019machine,henry2017deep,hure2020deep,ji2020three}. Becker et al. \cite{becker2021solving} solved high-dimensional optimal stopping problems via deep learning. The authors of \cite{beck2021deep} combined splitting methods and deep learning to solve high-dimensional PDEs. Raissi \cite{raissi2018forward} leveraged the relationship between high-dimensional PDEs and stochastic processes whose trajectories provide supervision for deep learning. 
Despite the effectiveness of the methods in \cite{beck2021deep, han2018solving, raissi2018forward, han2017deep}, they can only be applied to a certain restricted class of PDEs.
Wang et al. \cite{wang2022tensor, wang2022solving} proposed tensor neural networks with efficient numerical integration and separable structures for solving high-dimensional Schr\"{o}dinger equations in quantum physics. Zhang et al. \cite{zhang2020learning} proposed using PINNs for solving stochastic differential equations by representing their solutions via spectral dynamically orthogonal and bi-orthogonal methods. Zang et al. \cite{zang2020weak} proposed a weak adversarial network that solves PDEs using the weak formulation.
The deep Galerkin method (DGM) \cite{sirignano2018dgm} optimizes networks to satisfy the high-dimensional PDE operator, where the derivatives are estimated via Monte Carlo. The deep Ritz method (DRM) \cite{Weinan2017TheDR} considers solving high-dimensional variation PDE problems.

In the numerical PDE literature, there have been attempts to scale numerical methods to high dimensions, e.g., proper generalized decomposition (PGD) \cite{chinesta2011short}, multi-fidelity information fusion algorithm \cite{perdikaris2016multifidelity}, and ANOVA \cite{martens2020neural,zhang2012error}. Darbon and Osher \cite{darbon2016algorithms} proposed a fast algorithm for solving high-dimensional Hamilton-Jacobi equations if the Hamiltonian is convex and positively homogeneous of degree one. The multilevel Picard method \cite{beck2020overcoming,beck2020overcoming_ac,becker2020numerical,hutzenthaler2020overcoming, hutzenthaler2021multilevel} is another approach for approximating solutions of high-dimensional parabolic PDEs, which reformulates the PDE problem as a stochastic fixed point
equation, which is then solved by multilevel and nonlinear Monte-Carlo. Similar to Beck et al. \cite{beck2021deep} and Han et al. \cite{han2018solving}, the multilevel Picard method can only output the solution's value on one test point and can only be applied to a certain restricted class of parabolic PDEs.

\subsection{Stochastic Gradient Descent}
This paper accelerates and scales up PINNs based on the idea of stochastic gradient descent (SGD). In particular, we propose the aforementioned SDGD method. SGD is a standard approach in machine learning for handling large-scale data. As an iterative optimization algorithm, it updates the model's parameters by computing the gradients on a small randomly selected subset of training examples, known as mini-batches. This randomness introduces stochasticity, hence enabling faster convergence and efficient utilization of large datasets, making SGD a popular choice for training deep learning models.
Among the theoretical guarantees of SGD, the works in \cite{fehrman2020convergence, lei2019stochastic, mertikopoulos2020almost} are remarkable milestones. Specifically, the condition presented in \cite{lei2019stochastic} stands as a minimum requirement within the general optimization literature, specifically about the Lipschitz continuity of the gradient estimator. Their proof can also be extended to establish convergence for our particular case, necessitating the demonstration of an upper bound on the Lipschitz constant of our gradient estimator. On the other hand, \cite{fehrman2020convergence, mertikopoulos2020almost}  adopted a classical condition that entails a finite upper bound on the stochastic gradient's variance. In this case, it suffices to compute an upper bound for the variance term. While \cite{lei2019stochastic} assumed a more relaxed condition, the conclusion is comparably weaker, demonstrating proven convergence but with a notably inferior convergence rate. 
In the theoretical analysis presented in section 4 of this paper, we will utilize the aforementioned tools to provide convergence guarantees for our SDGD, emphasizing SDGD's stochastic gradient variance and PINNs convergence.

\section{Method}\label{sec:method}
\subsection{Physics-Informed Neural Networks (PINNs)}
In this paper, we focus on solving the following partial differential equations (PDEs) defined on a domain $\Omega \subset \mathbb{R}^d$:
\begin{equation}\label{eq:PDE}
\begin{aligned}
\mathcal{L}u(\bx)=R(\bx) \ \text{in}\ \Omega, \qquad
\mathcal{B}u(\bx)=B(\bx) \ \text{on}\ \Gamma,
\end{aligned}
\end{equation}
where $\mathcal{L}$ and $\mathcal{B}$ are the differential operators for the residual in $\Omega$ and for the boundary/initial condition on $\Gamma$.
PINN \cite{raissi2019physics} is a neural network-based PDE solver via minimizing the following boundary $\mathcal{L}_b(\theta)$ and residual loss $\mathcal{L}_r(\theta)$ functions.
\begin{equation}
\begin{aligned}
\mathcal{L}(\theta) &= \lambda_b \mathcal{L}_b(\theta) + \lambda_r \mathcal{L}_r(\theta)=\frac{\lambda_b}{n_b}\sum_{i=1}^{n_b} {|\mathcal{B}u_{\theta}(\bx_{b,i})-B(\bx_{b,i})|}^2 + \frac{\lambda_r}{n_r}\sum_{i=1}^{n_r} {|\mathcal{L}u_{\theta}(\bx_{r,i})-R(\bx_{r,i})|}^2.
\end{aligned}
\end{equation}
where $\lambda_b, \lambda_r > 0$ are the weights for balancing the losses. $n_b, n_r$ are the number of boundary points $\{\bx_{b,i}\}_{i=1}^{n_b} \subset \Gamma$ and residual points $\{\bx_{r,i}\}_{i=1}^{n_r} \subset \Omega$, respectively. $u_\theta$ is the neural network model parameterized by $\theta$.

\subsection{Methodology for High-Dimensional PDEs}
We first adopt the simple high-dimensional second-order Poisson's equation for illustration; then, we move to the general case covering a variety of high-dimensional PDEs.
\subsubsection{Introductory Case of the High-Dimensional Poisson's Equation}
We consider a simple high-dimensional second-order Poisson's equation for illustration:
\begin{equation}
\Delta u(\bx) = \sum_{i=1}^d \frac{\partial^2}{\partial\bx_i^2}u(\bx) = R(\bx), \quad\bx \in \Omega \subset \mathbb{R}^d,
\end{equation}
where $u$ is the unknown exact solution, and $u_\theta$ is our PINN model parameterized by $\theta$. The memory scales quadratically as $d$ increases due to the growing network size at the first input layer of $u_\theta$ and the increasing number of second-order derivatives. So, for extremely high-dimensional PDEs containing many second-order terms, using only one collocation point can lead to insufficient memory, whose memory cost cannot be further reduced in the traditional SGD over collocation points scenario. The huge memory cost is also incurred by the large input size at the first layer of $u_\theta$, which is the same as the PDE dimensionality.

However, the memory problem is solvable by inspecting the residual loss on the collocation point $\bx$:
\begin{equation}
\ell(\theta) = \frac{1}{2}\left(\sum_{i=1}^d \frac{\partial^2}{\partial\bx_i^2}u_\theta(\bx) - R(\bx)\right)^2,
\end{equation}
The gradient with respect to the model parameters $\theta$ for training the PINN is
\begin{equation}
\begin{aligned}
\text{grad}(\theta) := \frac{\partial\ell(\theta)}{\partial \theta} &= \textcolor{blue}{\left(\sum_{i=1}^d \frac{\partial^2}{\partial\bx_i^2}u_\theta(\bx) - R(\bx)\right)}\left(\sum_{i=1}^d \textcolor{red}{\frac{\partial}{\partial\theta}\frac{\partial^2}{\partial\bx_i^2}u_\theta(\bx)}\right).
\end{aligned}
\end{equation}
We are only differentiating with respect to parameters $\theta$ on the $d$ PDE terms $\textcolor{red}{\frac{\partial^2}{\partial\bx_i^2}u_\theta(\bx)}$, which is the memory bottleneck in the backward pass since the shape of the gradient is proportional to both the PDE dimension and the parameter count of the PINN. In contrast, the first part $\textcolor{blue}{\left(\sum_{i=1}^d \frac{\partial^2}{\partial\bx_i^2}u_\theta(\bx) - R(\bx)\right)}$ is a scalar, which can be precomputed and detached from the GPU since it is not involved in the backpropagation for $\theta$. Since the full gradient grad$(\theta)$ is the sum of $d$ independent terms, we can sample several terms for stochastic gradient descent (SGD) using the sampled unbiased gradient estimator. Concretely, our algorithm can be summarized as follows:
\begin{enumerate}
\item Choose random indices $I \subset \{1,2,\cdots,d\}$ where $|I|$ is the cardinality of the set $I$, which is the batch size over PDE terms, where we can set $|I| \ll d$ to minimize memory cost. 
\item For $i=1,2,\cdots,d$, compute $\frac{\partial^2}{\partial\bx_i^2}u_\theta(\bx)$.
If $i \in I$, then keep the gradient with respect to $\theta$, else we detach it from the GPU to save memory. After detachment, the term will not be involved in the costly backpropagation, and its gradient with respect to $\theta$ will not be computed.
\item Compute the unbiased stochastic gradient used to update the model
\begin{equation}
\text{grad}_I(\theta) = \textcolor{red}{\frac{d}{|I|}}\left(\sum_{i=1}^d \frac{\partial^2}{\partial\bx_i^2}u_\theta(\bx) - R(\bx)\right)\left(\textcolor{red}{\sum_{i \in I}} \frac{\partial^2}{\partial\bx_i^2}\frac{\partial}{\partial\theta}u_\theta(\bx)\right).
\end{equation}
\item If not converged, go to 1.
\end{enumerate}
Our algorithm enjoys the following good properties and extensions:
\begin{itemize}
\item Low memory cost: Since the main cost is from the backward pass, and we are only backpropagating over terms with $i \in I$, the cost is the same as the corresponding $|I|$-dimensional PDE.
\item Unbiased stochastic gradient: Our gradient is an {unbiased} estimator of the true full batch gradient, i.e., 
$
\mathbb{E}_I \left[\text{grad}_I(\theta)\right] = \text{grad}(\theta),
$
so that modern SGD accelerators such as Adam \cite{kingma2014adam} can be adopted.
\item Accumulate gradient for full batch GD: For the full GD that exceeds the memory, we can select non-overlapping index sets $\{I_k\}_{k \in K}$ such that $\cup_k I_k = \{1,2,\cdots,d\}$, then we can combine these minibatch gradients to get the full gradient,
$
\frac{1}{|K|}\sum_{k} \text{grad}_{I_k}(\theta) = \text{grad}(\theta).
$
This baseline process is memory-efficient but time-consuming. It is memory-efficient since it divides the entire computationally intensive gradient into the sum of several stochastic gradients whose computations are memory-efficient. This operation is conducted one by one using the ``For loop from $i=1$ to $d$". But it is rather time-consuming because the ``For loop" cannot be parallelized.
\end{itemize}
A common practice in PINNs is to sample stochastic collocation points to reduce the batch size of points, which is a common way to reduce memory costs in previous methods and even the entire field of deep learning. Next, we compare SDGD with SGD over collocation points:
\begin{itemize}
\item SGD on collocation points has a minimum batch size of 1 point plus the $d$-dimensional equation, i.e., the minimal cost of vanilla SGD over train collocation/residual points solely is choosing one residual point $\bx$ and backpropagate model parameters $\theta$ through all the $d$ dimensions:
\begin{equation}
\begin{aligned}
\text{grad}(\theta) = \left(\sum_{i=1}^{d}\frac{\partial^2}{\partial\bx_i^2} u_\theta(\bx) - R(\bx)\right) \underbrace{\left( {\sum_{i = 1}^{d}}\frac{\partial^2}{\partial\bx_i^2}\frac{\partial}{\partial\theta}u_\theta(\bx)\right)}_{\text{backproprogation over } d \text{ dimensions}}
\end{aligned}
\end{equation}
\item SDGD can be combined with SGD on points, so its minimal batch size is 1 point plus 1D equation, i.e., selecting a point and then backpropagating over only one dimension with the unbiased stochastic gradient as follows:
\begin{equation}
\begin{aligned}
\text{grad}_{\{i\}}(\theta) = d\left(\sum_{i=1}^{d}\frac{\partial^2}{\partial\bx_i^2} u_\theta(\bx) - R(\bx)\right) \underbrace{\left( \frac{\partial^2}{\partial\bx_i^2}\frac{\partial}{\partial\theta}u_\theta(\bx)\right)}_{\text{backproprogation over } 1 \text{ dimension}}
\end{aligned}
\end{equation}
\item Thus, our method SDGD can solve high-dimensional PDEs arbitrarily since its minimal computational cost will grow much more slowly than conventional SGD as the dimension $d$ increases.
Empirically, it is interesting to see how the two SGDs affect convergence. In particular, $B$-point + $D$-term with the same $B \times D$ quantity has the same memory cost, e.g., 50-point-100-terms and 500-point-10-terms.
\end{itemize}

\subsubsection{General Case}
Now, we move from the simple Poisson's equation to more general cases.
We are basically performing the following decomposition on the PDE differential operator/PDE terms:
\begin{equation}
\mathcal{L}u = \sum_{i=1}^{N_{\mathcal{L}}}\mathcal{L}_i u,
\end{equation}
where $\mathcal{L}_i$ are the $N_{\mathcal{L}}$ decomposed PDE terms. We have the following examples:
\begin{itemize}
\item In the $d$-dimensional Poisson cases, $N_{\mathcal{L}} = d$ and $\mathcal{L}_i u = \frac{\partial^2}{\partial \bx_i^2} u(\bx)$. This observation also highlights the significance of decomposition methods for high-dimensional Laplacian operators. The computational bottleneck in many high-dimensional second-order PDEs often lies in the high-dimensional Laplacian operator.
\item Consider the $d$-dimensional Fokker–Planck (FP) equations, which are ubiquitous in science and engineering. It is originally derived from statistical mechanics to describe the evolution of stochastic differential equations (SDEs). Recently, a real-world application of SDE and its associated FP equation is for generative modeling \cite{song2021scorebased}. Furthermore, it also generalizes the Black-Scholes equation in mathematical finance, the Hamilton-Jacobi-Bellman equation in optimal control, and Schr\"{o}dinger equation in quantum physics.
\begin{equation}
\frac{\partial u(\boldsymbol{x},t)}{\partial t} = -\sum_{i=1}^{d} \frac{\partial}{\partial \bx_i} \left[ F_i(\boldsymbol{x}) u(\bx,t) \right] + \frac{1}{2} \sum_{i,j=1}^{d} \frac{\partial^2}{\partial \bx_i \partial \bx_j} \left[ D_{ij}(\bx) u(\bx,t) \right],
\end{equation}
where $\bx = (\bx_1, \bx_2, \dots, \bx_n)$ is a $d$-dimensional vector, $u(\bx,t)$ is the probability density function of the stochastic process $\bx(t)$, $F_i(\boldsymbol{x})$ is the $i$-th component of the drift coefficient vector $\boldsymbol{F}(\bx)$, and $D_{ij}(\bx)$ is the $(i,j)$-th component of the diffusion coefficient matrix $\boldsymbol{D}(\bx)$. 
Then, our decomposition over the PDE terms is
\begin{equation}
\begin{aligned}
\mathcal{L}u &= \frac{\partial u(\boldsymbol{x},t)}{\partial t} +\sum_{i=1}^{d} \frac{\partial}{\partial \bx_i} \left[ F_i(\boldsymbol{x}) u(\bx,t) \right] -\frac{1}{2} \sum_{i,j=1}^{d} \frac{\partial^2}{\partial \bx_i \partial \bx_j} \left[ D_{ij}(\bx) u(\bx,t) \right]\\
&= \sum_{i,j=1}^d  \left\{\frac{1}{d^2}\frac{\partial u(\boldsymbol{x},t)}{\partial t} + \frac{1}{d}\frac{\partial}{\partial \bx_i} \left[ F_i(\boldsymbol{x}) u(\bx,t) \right] - \frac{1}{2} \frac{\partial^2}{\partial \bx_i \partial \bx_j} \left[ D_{ij}(\bx) u(\bx,t) \right]\right\}= \sum_{i,j=1}^d \mathcal{L}_{ij}u(\bx),
\end{aligned}
\end{equation}
where we denoted 
$
\mathcal{L}_{ij}u(\bx) = \frac{1}{d^2}\frac{\partial u(\boldsymbol{x},t)}{\partial t} + \frac{1}{d}\frac{\partial}{\partial \bx_i} \left[ F_i(\boldsymbol{x}) u(\bx,t) \right] - \frac{1}{2} \frac{\partial^2}{\partial \bx_i \partial \bx_j} \left[ D_{ij}(\bx) u(\bx,t) \right].
$
\item The $d$-dimensional bi-harmonic equation:
$
\Delta^2 u(\bx) = 0, \bx \in \Omega \subset \mathbb{R}^d,
$
where $\Delta = \sum_{i=1}^{d}\frac{\partial^2}{\partial \bx_i^2}$ is the Laplacian operator. For this high-dimensional fourth-order PDE, we can decompose as follows for the PDE operator/terms:
\begin{equation}
\mathcal{L}u = \Delta^2 u(\bx) = \sum_{i,j=1}^d \frac{\partial^4}{\partial \bx_i^2\partial \bx_j^2} u(\bx) = \sum_{i,j=1}^d \mathcal{L}_{ij}u(\bx),
\quad \text{where }
\mathcal{L}_{ij}u(\bx) = \frac{\partial^4}{\partial \bx_i^2\partial \bx_j^2} u(\bx).
\end{equation}
\item For other high-dimensional PDEs, their complexity arises from the dimensionality itself. Therefore, it is always possible to find decomposition methods that can alleviate this complexity.
\end{itemize}

We go back to the algorithm after introducing these illustrative examples; the computational and memory bottleneck is the residual loss:
\begin{equation}
\ell(\theta) =\frac{1}{2} \left(\mathcal{L}u(\bx) - R(\bx)\right)^2.
\end{equation}
The gradient with respect to the model parameters $\theta$ for training the PINN is
\begin{equation}\label{eq:full_pinn_grad}
\begin{aligned}
\text{grad}(\theta) = \frac{\partial\ell(\theta)}{\partial \theta} &= \textcolor{blue}{\left(\sum_{i=1}^{N_{\mathcal{L}}} \mathcal{L}_iu_\theta(\bx) - R(\bx)\right)}\left(\sum_{i=1}^{N_{\mathcal{L}} }\textcolor{red}{\frac{\partial}{\partial\theta} \mathcal{L}_iu_\theta(\bx)}\right).
\end{aligned}
\end{equation}
Subsequently, we are sampling the red part to reduce memory cost and to accelerate training. Our algorithm can be summarized in Algorithm \ref{algo:1}.
\begin{algorithm}
\caption{Training Algorithm for scaling-up by sampling the dimension in the backward pass.}
\begin{algorithmic}[1]
\While{NOT CONVERGED}
\State Choose random indices $I \subset \{1,2,\cdots, N_{\mathcal{L}}\}$ where $|I|$ is the cardinality of the set $I$, which is the batch size over PDE terms, where we can set $|I| \ll N_{\mathcal{L}}$ to minimize memory cost. 
\For{$i \in \{1,2,\cdots, N_{\mathcal{L}}\}$}
    \State compute $\mathcal{L}_iu_\theta(\bx)$.
If $i \in I$, then keep the gradient with respect to $\theta$; else, we detach it from the GPU to save memory. After detachment, the term will not be involved in the costly backpropagation, and its gradient with respect to $\theta$ will not be computed, which saves GPU memory costs.
\EndFor
\State Compute the unbiased stochastic gradient used to update the model as below:
\begin{equation}\label{eq:sdgd_grad_algo1}
\text{grad}_I(\theta) = {\frac{N_{\mathcal{L}}}{|I|}}\left(\sum_{i=1}^{N_\mathcal{L}}\mathcal{L}_iu_\theta(\bx) - R(\bx)\right)\left({\sum_{i \in I}} \frac{\partial}{\partial\theta}\mathcal{L}_iu_\theta(\bx)\right).
\end{equation}
\EndWhile
\end{algorithmic}
\label{algo:1}
\end{algorithm}

\subsection{Extensions: Gradient Accumulation and Parallel Computing}\label{sec:GAPC}
This subsection introduces two straightforward extensions of our Algorithm \ref{algo:1} for further speed-up and scale-up.

\textbf{Gradient Accumulation}. Since our method involves SDGD in large-scale problems, we must reduce the batch size to fit it within the available GPU memory. However, a very small batch size can result in significant gradient variance. In such cases, gradient accumulation is a promising idea. Specifically, gradient accumulation involves sampling different index sets $I_1, I_2, \cdots, I_n$, computing the corresponding gradients $\text{grad}_{I_k}(\theta)$ for $k = 1,2,\cdots,n$, and then averaging them as the final unbiased stochastic gradient for one-step optimization to effectively increase the batch size. The entire process can be implemented on a single GPU, and we only need to be mindful of the tradeoff between computation time and gradient variance.

\textbf{Parallel Computing}. Just as parallel computing can be used in machine learning to increase batch size and accelerate convergence in SGD, our new SDGD also supports parallel computing. Recall that the stochastic gradient by sampling the PDE terms randomly is given by $\text{grad}_I(\theta)$ in equation (\ref{eq:sdgd_grad_algo1}).
We can compute the above gradient for different index sets $I_1, I_2, \cdots, I_n$ on various machines simultaneously and accumulate them to form a larger-batch stochastic gradient.

\subsection{Further Speed-up via Simultaneous Sampling in Forward and Backward Passes}
In this section, we discuss how to accelerate further large-scale PINN training based on our proposed memory reduction methods to make it both fast and memory efficient.

Although the proposed methods in the previous Algorithm \ref{algo:1} can significantly reduce memory cost and use SDGD to obtain some acceleration through gradient randomness, the speed is still slow for particularly large-scale problems because we need to calculate each term $\{\mathcal{L}_iu(\bx)\}_{i=1}^{N_{\mathcal{L}}}$ in the forward pass one by one, and we are only omitting some these terms in the backward pass to scale up. For example, in extremely high-dimensional cases, we may need to calculate thousands of second-order derivatives of PINN, and the calculation speed is clearly unacceptable.

To overcome this bottleneck, we can perform the same unbiased sampling on the forward pass to accelerate it while ensuring that the entire gradient is unbiased. I.e., we only select some indices for calculation in the forward pass and select another set of indices for the backward pass, combining them into a very cheap yet unbiased stochastic gradient.

Mathematically, consider the full gradient $\text{grad}(\theta)$ again given in equation (\ref{eq:full_pinn_grad}).
We choose two random and independent indices sets $I, J \subset \{1,2,\cdots, N_{\mathcal{L}}\}$ for the sampling of the backward and the forward passes, respectively. The corresponding stochastic gradient is:
\begin{equation}\label{eq:sdgd_grad_algo2}
\begin{aligned}
\text{grad}_{I,J}(\theta) =\frac{N_{\mathcal{L}}}{|I|}\left(\left(\frac{N_{\mathcal{L}}}{|J|}\sum_{j \in J} \mathcal{L}_ju_\theta(\bx)\right) - R(\bx)\right)\left(\sum_{i\in I}\frac{\partial}{\partial\theta} \mathcal{L}_iu_\theta(\bx)\right).
\end{aligned}
\end{equation}
Since $I, J$ are independent, the gradient estimator is unbiased:
$
\mathbb{E}_{I,J}\text{grad}_{I,J}(\theta) = \text{grad}(\theta).
$
Our algorithm can be summarized in Algorithm \ref{algo:2}.

\begin{algorithm}
\caption{Training Algorithm for scaling-up and further speeding-up by sampling forward and backward passes.}
\begin{algorithmic}[1]
\While{NOT CONVERGED}
\State Choose random indices $I, J \subset \{1,2,\cdots, N_{\mathcal{L}}\}$, where we can set $|I| \ll N_{\mathcal{L}}$ to minimize memory cost and $|J| \ll N_{\mathcal{L}}$ to further speed up.
\For{$i \in I$}
    \State compute $\mathcal{L}_iu_\theta(\bx)$ and keep the gradient with respect to $\theta$.
\EndFor
\For{$j \in J$}
    \State compute $\mathcal{L}_ju_\theta(\bx)$ and detach it to save memory cost.
\EndFor
\State Compute the unbiased stochastic gradient used to update the model given in equation (\ref{eq:sdgd_grad_algo2}).
\EndWhile
\end{algorithmic}
\label{algo:2}
\end{algorithm}

\textbf{Trading off Speed with Gradient Variance}. Obviously, since we introduce more randomness, the gradient variance of this method may be larger, and the convergence may be sub-optimal. However, at the initialization stage of the problem, an imprecise gradient is sufficient to make the loss drop significantly, which is the tradeoff between convergence quality and convergence speed. We will demonstrate in experiments that this method is particularly effective for extremely large-scale PINN training.

\begin{algorithm}
\caption{Training Algorithm for scaling-up and more speeding-up by sampling both the forward and backward passes only once.}
\begin{algorithmic}[1]
\While{NOT CONVERGED}
\State Choose random indices $I \subset \{1,2,\cdots, N_{\mathcal{L}}\}$, where we can set $|I| \ll N_{\mathcal{L}}$ to minimize memory cost and to further speed up.
\For{$i \in I$}
    \State compute $\mathcal{L}_iu_\theta(\bx)$ and keep the gradient with respect to $\theta$.
\EndFor
\State Compute the stochastic gradient used to update the model
\begin{equation*}
\begin{aligned}
\text{grad}_{I,I}(\theta) =\frac{N_{\mathcal{L}}}{|I|}\left(\left(\frac{N_{\mathcal{L}}}{|I|}\sum_{i \in I} \mathcal{L}_i u_\theta(\bx)\right) - R(\bx)\right)\left(\sum_{i\in I}\frac{\partial}{\partial\theta} \mathcal{L}_iu_\theta(\bx)\right).
\end{aligned}
\end{equation*}
\EndWhile
\end{algorithmic}
\label{algo:3}
\end{algorithm}

\subsection{Additional Speed-up via Sampling Only Once}
In the previous section, we introduced Algorithm \ref{algo:2}, which aims to accelerate SDGD by randomly sampling forward and backward passes. To ensure unbiased gradients, we independently sample the forward and backward passes. However, this leads to additional computational overhead. If we used the same sample for both the forward and backward passes, our computational cost would be even lower than Algorithm \ref{algo:2}, resulting in further acceleration. Nonetheless, the gradients obtained in this manner are biased due to the correlations between the forward and backward pass samples, and the batch size for sampling dimension $|I|$ controls the bias: a larger batch size $|I|$ leads to smaller bias. Fortunately, in practice, this approach often converges rapidly. We introduce this in Algorithm \ref{algo:3}.

The empirical success of Algorithm \ref{algo:3} can be attributed to the bias-speed tradeoff and its relatively low bias in practice. Algorithm \ref{algo:3} is the fastest due to the sampling only once. It can converge well thanks to the low bias in practice, as long as the batch size for sampling dimension $|I|$ is not too small, which will be experimentally demonstrated in Section \ref{sec:exp_bias_speed_tradeoff}.

\section{Theory}
In this section, we analyze the convergence of our proposed Algorithms \ref{algo:1} and \ref{algo:2}. 
\subsection{SDGD is Unbiased}
In previous sections, we have shown that the stochastic gradients generated by our proposed SDGD are unbiased:
\begin{theorem}\label{thm:unbiased}
The stochastic gradients $\text{grad}_I(\theta)$ and $\text{grad}_{I,J}(\theta)$ in our Algorithms \ref{algo:1} and \ref{algo:2}, respectively, parameterized by index sets $I, J$, are an unbiased estimator of the full-batch gradient $\text{grad}(\theta)$ using all PDE terms, i.e., the expected values of these estimators match that of the full-batch gradient, $\mathbb{E}_I[\text{grad}_I(\theta)]=\mathbb{E}_{I,J}[\text{grad}_{I,J}(\theta)] = \text{grad}(\theta)$.
\end{theorem}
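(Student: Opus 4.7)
The plan is to verify unbiasedness of both estimators by direct expectation calculation, relying only on linearity of expectation, the independence of $I$ and $J$ in Algorithm \ref{algo:2}, and the fact that each PDE-term index $i \in \{1,\dots,N_{\mathcal{L}}\}$ is included in a uniformly sampled subset $I$ with probability $|I|/N_{\mathcal{L}}$.

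First, I would handle $\text{grad}_I(\theta)$ from Algorithm \ref{algo:1}. The key observation is that the leading ``blue'' factor $\sum_{i=1}^{N_{\mathcal{L}}} \mathcal{L}_i u_\theta(\bx) - R(\bx)$ is a deterministic scalar with respect to the randomness in $I$, so I can pull it outside the expectation. Writing $\sum_{i \in I} \frac{\partial}{\partial \theta} \mathcal{L}_i u_\theta(\bx) = \sum_{i=1}^{N_{\mathcal{L}}} \mathbf{1}_{\{i \in I\}} \frac{\partial}{\partial \theta} \mathcal{L}_i u_\theta(\bx)$ and applying linearity, the expectation reduces to computing $\mathbb{E}[\mathbf{1}_{\{i \in I\}}]$. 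For $I$ sampled uniformly (either a uniform random subset of fixed cardinality $|I|$, or $|I|$ draws with replacement, which is the standard minibatch setup), this inclusion probability equals $|I|/N_{\mathcal{L}}$. Multiplying by the normalization factor $N_{\mathcal{L}}/|I|$ in front of the estimator exactly cancels this probability, reproducing the full sum $\sum_{i=1}^{N_{\mathcal{L}}} \frac{\partial}{\partial \theta} \mathcal{L}_i u_\theta(\bx)$ and hence $\text{grad}(\theta)$.

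Next, for $\text{grad}_{I,J}(\theta)$ from Algorithm \ref{algo:2}, I would split the estimator into the product of two random factors: the ``forward'' factor $F_J := \frac{N_{\mathcal{L}}}{|J|} \sum_{j \in J} \mathcal{L}_j u_\theta(\bx) - R(\bx)$ and the ``backward'' factor $B_I := \frac{N_{\mathcal{L}}}{|I|} \sum_{i \in I} \frac{\partial}{\partial \theta} \mathcal{L}_i u_\theta(\bx)$, so that $\text{grad}_{I,J}(\theta) = F_J \cdot B_I$. Since $I$ and $J$ are drawn independently by construction in Algorithm \ref{algo:2}, the factors are independent, and $\mathbb{E}_{I,J}[F_J B_I] = \mathbb{E}_J[F_J]\, \mathbb{E}_I[B_I]$. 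Applying the same indicator-variable calculation as before to each factor separately yields $\mathbb{E}_J[F_J] = \sum_{j=1}^{N_{\mathcal{L}}} \mathcal{L}_j u_\theta(\bx) - R(\bx)$ and $\mathbb{E}_I[B_I] = \sum_{i=1}^{N_{\mathcal{L}}} \frac{\partial}{\partial \theta} \mathcal{L}_i u_\theta(\bx)$, whose product is exactly $\text{grad}(\theta)$ as given in equation (\ref{eq:full_pinn_grad}).

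There is no substantive obstacle: the statement is essentially an accounting identity guaranteed by the normalization constants $N_{\mathcal{L}}/|I|$ and $N_{\mathcal{L}}/|J|$, which are the reciprocals of the uniform inclusion probabilities. The only point requiring mild care is flagging where each ingredient is used, namely (i) determinism of the blue factor in Algorithm \ref{algo:1}, and (ii) independence of $I$ and $J$ in Algorithm \ref{algo:2}, since without independence the product expectation would not split and the estimator would acquire the covariance bias explicitly discussed for Algorithm \ref{algo:3}.
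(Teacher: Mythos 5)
Your proof is correct and follows essentially the same route as the paper's: pull out the deterministic residual factor for Algorithm \ref{algo:1}, and factor the expectation over independent $I$ and $J$ for Algorithm \ref{algo:2}, reducing each to $\mathbb{E}\bigl[\tfrac{N_{\mathcal{L}}}{|I|}\sum_{i\in I}(\cdot)\bigr]=\sum_{i=1}^{N_{\mathcal{L}}}(\cdot)$. The only difference is that you spell out the inclusion-probability/indicator mechanism behind that last identity, which the paper states without elaboration; your closing remark that independence of $I$ and $J$ is exactly what distinguishes Algorithm \ref{algo:2} from the biased Algorithm \ref{algo:3} is a correct and useful observation that the paper makes in the surrounding text but not inside the proof itself.
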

\begin{proof}
We prove the theorem in \ref{appendix:unbiased}.
\end{proof}

\subsection{SDGD Reduces Gradient Variance}
In this subsection, we aim to show that SDGD can be regarded as another form of SGD over PDE terms, serving as a complement to the commonly used SGD over residual points. Specifically, $|B|$-point + $|I|$-term where $|B|, |I| \in \mathbb{Z}^+$ with the same $|B| \times |I|$ quantity has the same memory cost, e.g., 50-point-100-terms and 500-point-10-terms. In particular, we shall demonstrate that properly choosing the batch sizes of residual points $|B|$ and PDE terms $|I|$ under the constant memory cost ($|B| \times |I|$) can lead to reduced stochastic gradient variance and accelerated convergence compared to previous practices that use SGD over points only.
 
We assume that the total full batch is with $N_r$ residual points $\{\bx_n\}_{n=1}^{N_r}$ and $N_{\mathcal{L}}$ PDE terms $\mathcal{L} = \sum_{i=1}^{N_{\mathcal{L}}}\mathcal{L}_i$, then the loss function for PINN optimization is
$
\frac{1}{2N_rN^2_{\mathcal{L}}}\sum_{n=1}^{N_r}\left(\mathcal{L}u_\theta(\bx_n) - R(\bx_n)\right)^2,
$
where we normalize over both the number of residual points $N_r$ and the number of PDE terms $N_\mathcal{L}$, which does not impact the directions of the gradients.
More specifically, the original PDE is $\sum_{i=1}^{N_\mathcal{L}} \mathcal{L}_i u(\bx) = R(\bx)$, we normalize it into $\frac{1}{N_{\mathcal{L}}}\sum_{i=1}^{N_\mathcal{L}} \mathcal{L}_i u(\bx) = \frac{1}{N_{\mathcal{L}}}R(\bx)$. The reason for normalization lies in the increase of both dimensionality and the number of PDE terms. As these increase, the scale of the residual loss and its gradient becomes exceptionally large, leading to the issue of gradient explosions. Therefore, we normalize the loss by the dimensionality and the number of PDE terms to mitigate this problem. Additionally, during testing, we focus on relative errors, and normalization here is, in essence, a normalization concerning dimensionality following the relative error we care about.
The full batch gradient is given by
\begin{equation}\label{eq:full_batch_grad_theory}
\small
\begin{aligned}
g(\theta) &:= \frac{1}{N_rN^2_{\mathcal{L}}}\sum_{n=1}^{N_r}\left(\mathcal{L}u_\theta(\bx_n) - R(\bx_n)\right)\frac{\partial}{\partial \theta}\mathcal{L}u_\theta(\bx_n)= \frac{1}{N_rN^2_{\mathcal{L}}}\sum_{n=1}^{N_r}\left(\mathcal{L}u_\theta(\bx_n) - R(\bx_n)\right)\left(\sum_{i=1}^{N_{\mathcal{L}}}\frac{\partial}{\partial \theta}\mathcal{L}_iu_\theta(\bx_n)\right).
\end{aligned}
\end{equation}
If the random index sets $I \subset \{1,2,\cdots,N_{\mathcal{L}}\}$ and $B\subset \{1,2,\cdots,N_r\}$ are chosen, then the SDGD gradient with $|B|$ residual points and $|I|$ PDE terms using these index sets is
\begin{equation}\label{eq:algo1_grad_theory}
g_{B, I}(\theta) = \frac{1}{|B||I|N_{\mathcal{L}}}\sum_{n \in B}\left(\mathcal{L}u_\theta(\bx_n) - R(\bx_n)\right)\left(\sum_{i \in I}\frac{\partial}{\partial \theta}\mathcal{L}_iu_\theta(\bx_n)\right),
\end{equation}
where $|\cdot|$ computes the cardinality of a set. It is straightforward to show that $\mathbb{E}_{B,I}[g_{B,I}(\theta)] = g(\theta)$.
\begin{theorem}\label{thm:variance_1}
For the random index sets $(B, I)$ where $I\subset \{1,2,\cdots,N_{\mathcal{L}}\}$ is that for indices of PDE terms/dimensions and $B\subset \{1,2,\cdots,N_r\}$ is that for indices of residual points, then
\begin{equation}
\mathbb{V}_{B,I}[g_{B,I}(\theta)] = \frac{C_1|I| + C_2|B| + C_3}{|B||I|},
\end{equation}
where $\mathbb{V}$ computes the variance of a random variable, $C_1, C_2, C_3$ are constants independent of $B,I$ but dependent on model parameters $\theta$.
\end{theorem}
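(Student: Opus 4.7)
The plan is to rewrite $g_{B,I}(\theta)$ as a sample mean over the product set $B\times I$ of the bilinear quantities $h(n,i) := r_n v_{n,i}/N_{\mathcal{L}}$, where I abbreviate $r_n := \mathcal{L}u_\theta(\bx_n)-R(\bx_n)$ and $v_{n,i} := \frac{\partial}{\partial\theta}\mathcal{L}_iu_\theta(\bx_n)$. Indeed, equation~(\ref{eq:algo1_grad_theory}) becomes $g_{B,I}(\theta)=\frac{1}{|B||I|}\sum_{n\in B}\sum_{i\in I} h(n,i)$, and the full-batch gradient in (\ref{eq:full_batch_grad_theory}) is the corresponding full average over $\{1,\dots,N_r\}\times\{1,\dots,N_{\mathcal{L}}\}$. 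The central idea is then to apply the law of total variance conditioning on $B$, which cleanly separates the contribution of sampling collocation points from the contribution of sampling PDE dimensions.

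First I would evaluate the outer piece $\mathbb{V}_B[\mathbb{E}_I[g_{B,I}\mid B]]$. Conditioning on $B$ and taking expectation over $I$ replaces the inner $I$-average by the full $N_{\mathcal{L}}$-average of $v_{n,i}$, so $\mathbb{E}_I[g_{B,I}\mid B]$ is a sample mean over $B$ of the deterministic per-point vector $\bar h(n):=\frac{r_n}{N_{\mathcal{L}}^2}\sum_{i=1}^{N_{\mathcal{L}}} v_{n,i}$. Standard computation of the variance of a uniform sample mean then gives a term of the form $C_1/|B|$, where $C_1$ is the population variance of $\bar h(\cdot)$ over $\{1,\dots,N_r\}$ (with the usual finite-population correction if $B$ is sampled without replacement).

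Next I would handle the inner piece $\mathbb{E}_B[\mathbb{V}_I[g_{B,I}\mid B]]$. Given $B$, $g_{B,I}$ is a sample mean over $I$ of $\bar h_B(i):=\frac{1}{|B| N_{\mathcal{L}}}\sum_{n\in B} r_n v_{n,i}$, so $\mathbb{V}_I[g_{B,I}\mid B]=\frac{1}{|I|}\mathrm{Var}_i(\bar h_B(i))$. The key calculation is to expand $\bar h_B(i)^2$ as a double sum over $n,m\in B$ and take $\mathbb{E}_B$: the diagonal terms $n=m$ appear with weight of order $1/|B|$, and the off-diagonal terms $n\neq m$ with weight $1-O(1/|B|)$. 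This produces an expression of the form $\mathbb{E}_B[\mathrm{Var}_i(\bar h_B(i))]=C_2+C_3/|B|$ for constants $C_2,C_3$ that depend only on the full-population moments of $r_n$ and $v_{n,i}$. Dividing by $|I|$ gives $C_2/|I|+C_3/(|B||I|)$, and adding the two pieces yields
\begin{equation*}
\mathbb{V}_{B,I}[g_{B,I}(\theta)] = \frac{C_1}{|B|}+\frac{C_2}{|I|}+\frac{C_3}{|B||I|} = \frac{C_1|I|+C_2|B|+C_3}{|B||I|},
\end{equation*}
as claimed.

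I expect the main obstacle to be the combinatorial bookkeeping in the off-diagonal expansion $\mathbb{E}_B[\sum_{n,m\in B} r_n r_m v_{n,i} v_{m,i}]$: one has to be careful to collect precisely which symmetric functions of the $r_n v_{n,i}$ go into $C_2$ versus $C_3$, and to verify that the resulting constants are genuinely independent of $|B|$ and $|I|$ (only the finite-population correction factors depend on batch size, and they can be absorbed cleanly into $C_1,C_2,C_3$). A secondary subtlety is that $g_{B,I}$ is vector-valued in parameter space, so $\mathbb{V}$ should be interpreted either coordinatewise or as the trace of the covariance; either reading leaves the structural argument intact since the decomposition is linear in the underlying moments.
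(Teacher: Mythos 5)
Your proof is correct, and it takes a genuinely different route from the paper. The paper computes $\mathbb{V}_{B,I}[g_{B,I}]=\mathbb{E}_{B,I}[g_{B,I}^2]-g(\theta)^2$ directly by expanding the double sum $\sum_{n,n'\in B}\sum_{i,i'\in I}$ and partitioning into four cases according to whether $n=n'$ and whether $i=i'$; each parity class contributes one of the four terms $\frac{C_1}{|B||I|}$, $\frac{C_2(|B|-1)}{|B||I|}$, $\frac{C_3(|I|-1)}{|B||I|}$, and $\frac{(|B|-1)(|I|-1)}{|B||I|}g(\theta)^2$, which are then collected into the claimed form. You instead condition on $B$ and apply the law of total variance, $\mathbb{V}_{B,I}=\mathbb{V}_B\left[\mathbb{E}_I[\,\cdot\mid B]\right]+\mathbb{E}_B\left[\mathbb{V}_I[\,\cdot\mid B]\right]$. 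The outer term is immediately the variance of a sample mean over points, giving $C_1/|B|$; the inner term, after expanding $\bar h_B(i)^2$ as a double sum over $n,m\in B$ and separating diagonal from off-diagonal pairs, gives $C_2/|I|+C_3/(|B||I|)$. The two approaches encode the same combinatorics, but your decomposition is more structured: it isolates the between-point variance (reducible by increasing $|B|$) from the within-point-across-dimensions variance (reducible by increasing $|I|$), which is exactly the interpretation the paper gives \emph{after} the theorem rather than within the proof. The paper's brute-force expansion has the advantage of exhibiting $C_1,C_2,C_3$ as explicit sums of products of $g_{n,i}(\theta)$ terms. One thing worth tightening in your write-up: you should state explicitly, as the paper implicitly does, that $B$ and $I$ are i.i.d.\ uniform samples with replacement (so the diagonal/off-diagonal counting is $|B|$ versus $|B|(|B|-1)$), since sampling without replacement would change the batch-size dependence of the coefficients and require the finite-population correction you mention in passing.
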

\begin{proof}
We prove the theorem in \ref{appendix:variance_1}.
\end{proof}
Theorem \ref{thm:variance_1} focuses on the gradient variance after fixing the current model parameter $\theta$, i.e., given $\theta$ at the current epoch, we investigate the SDGD gradient variance at the next optimization round.

Intuitively, the variance of the stochastic gradient tends to decrease as the batch sizes ($|B|, |I|$) increase, and it converges to zero for $|B|, |I| \rightarrow \infty$, where we considered sampling with replacement to make the theorem clear. Further, the SDGD gradient variance can be rewritten as $\frac{C_1}{|B|} + \frac{C_2}{|I|} + \frac{C_3}{|B||I|}$, where $B$ is the set for sampling residual points and $I$ is that for sampling PDE terms/dimensions. $C_1$ is the variance from points, and therefore, sampling more points, i.e., increasing $|B|$, can reduce it. Similarly, $C_2$ is the variance from PDE terms/dimensions. $C_3$ is related to the variance due to the correlation between sampling points and PDE terms/dimensions.

This verifies that SDGD can be regarded as another form of SGD over PDE terms, serving as a complement to the commonly used SGD over residual points. $|B|$-point + $|I|$-term with the same $|B| \times |I|$ quantity has the same memory cost. According to Theorem \ref{thm:variance_1}, under the same memory budget, i.e., $|B| \times |I|$ is fixed, then there exists a particular choice of batch sizes $|B|$ and $|I|$ that minimizes the gradient variance, in turn accelerating and stabilizing convergence. This is because the stochastic gradient variance $\mathbb{V}_{B,I}[g_{B,I}(\theta)]$ is a function of finite batch sizes $|B|$ and $|I|$, which therefore can achieve its minimum value at a certain choice of batch sizes.

Let us take the extreme cases as illustrative examples. The first extreme case is when the PDE terms have no variance, meaning that the terms $\frac{\partial}{\partial \theta}\mathcal{L}_iu_\theta(\bx_n)$ are identical for all $i$ after we fix $n$, i.e., $C_2 = 0$, and the SDGD variance becomes $\frac{C_1|I| + C_3}{|B||I|}$. In this case, if a memory budget of, for example, $|B||I| = 100$ units is given, the optimal choice would be to select 100 points and one PDE term with the minimum variance, i.e., we obtain the lowest SDGD variance by choosing $|B| = 100$ and $|I| = 1$. Choosing more PDE terms would not decrease the gradient variance and would be less effective than using the entire memory budget for sampling points. Conversely, if the points have no variance in terms of the gradient they induce, i.e., $C_1=0$, the optimal choice would be one point and 100 PDE terms, i.e., $|I| = 100$ and $|B| = 1$. In practice, the selection of residual points and PDE terms will inevitably introduce some variance. Therefore, in the case of a fixed memory cost, the choice of batch size for PDE terms and residual points involves a tradeoff. Increasing the number of PDE terms reduces the variance contributed by the PDE terms but also decreases the number of residual points, thereby increasing the variance of the points. Conversely, decreasing the number of PDE terms reduces the variance from the points but increases the variance from the PDE terms. Thus, there exists an optimal selection strategy to minimize the overall gradient variance since the choices of batch sizes are finite.

\subsection{Gradient Variance Bound and Convergence of SDGD}
To establish the convergence of unbiased stochastic gradient descent, we require either Lipschitz continuity of the gradients \cite{lei2019stochastic} or bounded variances \cite{fehrman2020convergence,mertikopoulos2020almost}, with the latter leading to faster convergence. To prove this property, we need to take the following steps. Firstly, we define the neural network serving as the surrogate model in PINN.
\begin{definition}\label{def:DNN}
(Neural Network). A deep neural network (DNN) $u_\theta:\bx=(\bx_{1},\dots,\bx_d)\in\mathbb{R}^d\mapsto u_\theta(\bx) \in \mathbb{R}$,
parameterized by $\theta$ of depth $L$ is the composition of $L$ linear functions with element-wise non-linearity $\sigma$, is expressed as $
    u_\theta(\bx)=W_L \sigma (W_{L-1} \sigma(\cdots \sigma(W_1\bx)\cdots ),
$
where $\bx\in\mathbb{R}^d$ is the input, and $W_l\in\mathbb{R}^{m_l \times m_{l-1}}$ is the weight matrix at $l$-th layer with $d=m_0$ and $m_L=1$. The parameter vector $\theta$ is the vectorization of the collection of all parameters. We denote $h$ as the maximal width of the neural network, i.e., $ h = \max(m_L, \cdots, m_0)$.
\end{definition}
For the nonlinear activation function $\sigma$, the residual ground truth $R(\bx)$, we assume the following:
\begin{assumption}\label{assumption:activation}
We assume that the activation function is smooth and $|\sigma^{(k)}(x)| \leq 1$ and $\sigma^{(k)}$ is 1-Lipschitz, for all $0 \leq k \leq n$, where $n$ is the highest order of the PDE under consideration, $\sigma^{(k)}$ denotes the $k$th order derivative of $\sigma$, e.g., the sine and cosine activations. We assume that $|R(\bx)| \leq R$ for all $\bx \in \Omega$ where $R$ is a constant.
\end{assumption}
\begin{remark}
The sine and cosine functions naturally satisfy the aforementioned conditions. As for the hyperbolic tangent (tanh) activation function, when the order $n$ of the PDE is determined, there exists a constant $C_n$ such that the activation function $C_n \tanh(x)$ satisfies the given assumptions. This constant can be absorbed into the weights of the neural network. This is because both the tanh function and its derivatives up to order $n$ are bounded.
\end{remark}
Assumption \ref{assumption:activation} gives the boundedness of the activation function $\sigma$ and the residual ground truth $R(\bx)$. Furthermore, to establish the stochastic gradient variance bounds for SDGD, we also need to assume the boundedness of the PDE differential operators $\mathcal{L}_i$ where $\mathcal{L} = \sum_{i=1}^{N_\mathcal{L}}\mathcal{L}_i$, as follows.
\begin{assumption}\label{assumption:operator} We assume the highest-order derivative in the PDE operator $\mathcal{L} = \sum_{i=1}^{N_\mathcal{L}} \mathcal{L}_i$ is $n$.
We view each $\mathcal{L}_i$ as $\left(\bx, u(\bx), \frac{\partial u(\bx)}{\partial \bx}, \cdots, \frac{\partial^n u(\bx)}{\partial \bx^n}\right) \mapsto \mathcal{L}_i\left(\bx, u(\bx), \frac{\partial u(\bx)}{\partial \bx}, \cdots, \frac{\partial^n u(\bx)}{\partial \bx^n}\right)$, which is a mapping $\mathbb{R}^{d + 1 + d + \cdots + d^n} \rightarrow \mathbb{R}$ since $\bx \in \mathbb{R}^d, \frac{\partial^n u(\bx)}{\partial \bx^n} \in \mathbb{R}^{d^n}$.
Then, we assume that each PDE operator $\mathcal{L}_i$ is bounded in the sense that if its input space $\left(\bx, u(\bx), \frac{\partial u(\bx)}{\partial \bx}, \cdots, \frac{\partial^n u(\bx)}{\partial \bx^n}\right) \in \mathbb{R}^{d + 1 + d + \cdots + d^n}$ is compact, then the output space of $\mathcal{L}_i$ is bounded.
\end{assumption}
The motivation is that $\bx$ is in a compact domain, i.e., the finite set of training residual/collocation points, since the training is finished in finite time. Due to Lemma \ref{lemma:nn_derivative_bound}, high-order derivatives of the neural network, i.e., $\frac{\partial^n u_\theta(\bx)}{\partial \bx^n}$, can also be bounded in a compact domain by bounded optimization trajectory assumption stated later in Theorem \ref{thm:convergence}, i.e., the parameter $\theta$ is assumed to be bounded during optimization. Combining these guarantees the PINN residual's and SDGD-PINN gradient's boundedness throughout the optimization, which can lead to convergence thanks to bounded gradient variance \cite{fehrman2020convergence,mertikopoulos2020almost}.
Assumption \ref{assumption:operator} is realistic since most PDE coefficients are constant or continuous functions. 

\begin{lemma}\label{lemma:nn_derivative_bound}
(Bounding the Neural Network Derivatives) Consider the neural network defined as Definition \ref{def:DNN} with parameters $\theta$, depth $L$, and width $h$, then the neural network's derivatives can be bounded as follows.
\begin{align}
\left\|\operatorname{vec}\left(\frac{\partial^n}{\partial\bx^n}u_\theta(\bx)\right)\right\| &\leq (n-1)!d^{n-1}(L-1)^{n-1}M(L) \prod_{l=1}^{L-1} M(l)^n,\\
\left\|\operatorname{vec}\left(\frac{\partial}{\partial \theta}\frac{\partial^n}{\partial\bx^n}u_\theta(\bx)\right)\right\|&\leq h^2n!d^{n}(L-1)^{n}M(L) \prod_{l=1}^{L-1} M(l)^{n+1} \max\left\{\Vert\bx\Vert, 1\right\},
\end{align}
where $M(l) = \max\left\{\Vert W_l \Vert, 1\right\}$, $n$ is the derivative order, the norms are all vector or matrix 2 norms and $\operatorname{vec}$ denotes vectorization of the high-order derivative tensor, and $h$ is the maximal width of the neural network, i.e., $ h = \max(m_L, \cdots, m_0)$.
\end{lemma}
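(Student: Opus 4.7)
The plan is to prove both inequalities by a layer-wise induction, reducing them to bounds on the intermediate activations. Introduce $g_0(\bx) = \bx$ and $g_l(\bx) = \sigma(W_l g_{l-1}(\bx))$ for $l = 1, \ldots, L-1$, so that $u_\theta(\bx) = W_L g_{L-1}(\bx)$. The first inequality reduces to proving
\begin{equation*}
\bigl\|\operatorname{vec}\bigl(\partial^n g_l/\partial\bx^n\bigr)\bigr\| \leq (n-1)!\, d^{n-1}\, l^{n-1} \prod_{l'=1}^{l} M(l')^{n}
\end{equation*}
for every $l$, after which multiplying by $\|W_L\| \leq M(L)$ at the outer layer delivers the stated bound for $u_\theta$. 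The second inequality then follows by differentiating the same recursion one extra time with respect to $\theta$, carefully tracking the extra combinatorial and norm factors this introduces.

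\textbf{Bounding the derivatives in $\bx$.} For the base case $n = 1$, the standard backpropagation estimate $\partial g_l/\partial \bx = \operatorname{diag}(\sigma'(W_l g_{l-1}))\, W_l\, (\partial g_{l-1}/\partial \bx)$ combined with $|\sigma'|\leq 1$ and submultiplicativity of operator norms gives $\|\partial g_l/\partial \bx\| \leq M(l)\|\partial g_{l-1}/\partial \bx\|$, which unrolls to $\prod_{l'=1}^{l} M(l')$. For $n\geq 2$, apply the multivariate Faà di Bruno formula to $g_l = \sigma \circ (W_l g_{l-1})$: the derivative $\partial^n g_l/\partial\bx^n$ becomes a sum, indexed by set partitions $\pi$ of $\{1,\ldots,n\}$, of contractions of $\sigma^{(|\pi|)}$ evaluated at the preactivation against $|\pi|$ copies of $W_l$ and the lower-order tensors $\partial^{|B|} g_{l-1}/\partial\bx^{|B|}$ for the blocks $B\in\pi$. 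Using Assumption \ref{assumption:activation}, $\|W_l\|\leq M(l)$, and the inductive bound at depth $l-1$ for each $|B|<n$ yields a recursion of the form $T_l^{\,n} \leq d \sum_{k=1}^{n} c_{n,k}\, M(l)^k \cdot (\text{products of }T_{l-1}^{\,|B|})$, with $d$ absorbing the tensor-vectorization factor. Estimating the partition coefficients at each step by $(n-1)!$ and letting the $l$-factor accumulate as layers compound produces the displayed bound.

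\textbf{Extending to the parameter gradient.} Writing $\theta = (\operatorname{vec}(W_1),\ldots,\operatorname{vec}(W_L))$, the chain rule splits $\partial/\partial\theta$ into per-layer contributions. Differentiating $\partial^n u_\theta/\partial\bx^n$ with respect to an entry of $W_l$ replaces one occurrence of $W_l$ in the expansion above by a rank-one matrix $e_i e_j^\top$, which upon contraction produces a factor of $g_{l-1}(\bx)$ at that layer. Using $|\sigma|\leq 1$ at the hidden layers together with $g_0=\bx$, one controls $\|g_{l-1}(\bx)\|\leq \sqrt{h}\max\{\|\bx\|,1\}$, and this is where the $\max\{\|\bx\|,1\}$ in the lemma enters. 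Summing squared contributions across the at most $h^2$ entries of each weight matrix and across all $L$ layers contributes the $h^2$ factor in the Frobenius norm of the full parameter Jacobian, while the extra differentiation bumps each exponent $M(l')^{n}$ to $M(l')^{n+1}$ and raises the combinatorial and depth prefactors by one order, from $(n-1)!$ to $n!$ and from $(L-1)^{n-1}$ to $(L-1)^n$.

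\textbf{Main obstacle.} The principal difficulty is the combinatorial bookkeeping in the Faà di Bruno expansion. A naive bound produces Bell- or Stirling-number sums, and one has to organize the induction so that exactly $(n-1)!$ and $n!$ emerge at the end, rather than looser super-factorial estimates. I would handle this by first establishing a clean closed-form recursion in $l$ for the bounding constants and invoking the factorial bound on partition sums only once, at the outermost step; additional care is needed at $l=1$, where $g_0=\bx$ is the source of the $\max\{\|\bx\|,1\}$ factor and must not produce additional growth when $\|\bx\|<1$. Once this accounting is done cleanly, the two inequalities follow together from the same layer-wise induction.
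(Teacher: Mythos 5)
Your plan takes a genuinely different route from the paper's. The paper never uses Fa\`a di Bruno and never bounds the derivatives of the vector-valued hidden activations $g_l$; instead it does induction on the derivative order $n$ directly for the scalar output $u_\theta$, writing the $n$th-order derivative as a sum/concatenation of explicit $\mathbb{R}^d$-valued ``vector terms,'' counting that there are at most $(n-1)!\,d^{n-1}(L-1)^{n-1}$ of them, and bounding each one by the product of operator norms. The multiplier $n(L-1)\cdot d$ from step $n$ to $n+1$ comes from applying the product rule to the at most $n(L-1)$ many $\bx$-dependent diagonal factors $\Phi_l^{(k)}(\bx)$ present in each term, each such differentiation yielding a $d\times d$ block split into $d$ rows. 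Your layer-wise Fa\`a di Bruno induction is a legitimate alternative, but as sketched it has two concrete gaps.

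First, the intermediate claim $\bigl\|\operatorname{vec}\bigl(\partial^n g_l/\partial\bx^n\bigr)\bigr\| \leq (n-1)!\, d^{n-1}\, l^{n-1} \prod_{l'\le l} M(l')^{n}$ is false at $n=1$ if ``$\operatorname{vec}$'' means the Frobenius/vectorization norm announced in the lemma: already $\partial g_1/\partial\bx = \operatorname{diag}(\sigma'(W_1\bx))W_1$ has Frobenius norm up to $\sqrt{\min(m_1,d)}\,M(1)$, while your bound says $M(1)$. Your text invokes ``submultiplicativity of operator norms'' for the base case, so the inner bound you actually prove is an operator-norm bound, but you then want to read it as a vec-norm bound when multiplying by $W_L$. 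For $n\ge 2$ the object $\partial^n g_l/\partial\bx^n$ is a $(n+1)$-index tensor and there is no canonical operator norm that submultiplies cleanly through a Fa\`a di Bruno contraction, so the norm bookkeeping has to be resolved explicitly. The paper sidesteps this entirely because $u_\theta$ is scalar-valued, so its derivative tensors decompose into $\mathbb{R}^d$ row-vector pieces whose $2$-norm is genuinely controlled by products of operator norms.

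Second, the phrase ``estimating the partition coefficients at each step by $(n-1)!$'' is not a valid step by itself. The unweighted partition sum in Fa\`a di Bruno is $\sum_{\pi\in P(n)} \prod_{B\in\pi}(|B|-1)! = n!$ (it counts permutations of $[n]$ by cycle type), so you cannot bound the coefficient sum by $(n-1)!$. The factorial only emerges after the cancellation against the $d^{\,n-|\pi|}(l-1)^{\,n-|\pi|}$ weights. Concretely, with $x=d(l-1)$ the weighted sum equals the rising-factorial identity $\sum_{\pi} x^{\,n-|\pi|}\prod_{B}(|B|-1)! = \prod_{j=0}^{n-1}(1+jx)$, and the inductive step closes because $1+jd(l-1)\le jd\,l$ for $j\ge 1,\ d\ge 1$, giving $\prod_{j=0}^{n-1}(1+jx)\le (n-1)!\,d^{\,n-1}\,l^{\,n-1}$. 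Without some identity of this kind the Fa\`a di Bruno route produces a Bell-number or $n!$-type overcount, not $(n-1)!$. Your ``main obstacle'' paragraph correctly identifies where the difficulty sits, but the proposed remedy (``invoke the factorial bound on partition sums only once'') does not describe a step that would actually produce the $(n-1)!$. Once both of these are repaired, the rest of your plan, including the treatment of the parameter gradient via the $\max\{\|\bx\|,1\}$ contribution from $W_1$ and the $h^2$ factor from the weight-matrix entries, is in the right spirit and matches the paper's handling of the second inequality.
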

\begin{proof}
The proof is presented in \ref{proof:nn_derivative_bound}.
\end{proof}
\begin{remark}
Intuitively, given the order of the PDE and the neural network structure, it is observed that both of them are finite. Consequently, the norms of the higher-order derivatives of the network are necessarily bounded by a quantity related to its weight matrices, the number of layers, the width, the PDE order, and the network input $\bx$. 
\end{remark}

From the boundedness of the derivatives of the neural network in PINN given by Lemma \ref{lemma:nn_derivative_bound}, coupled with our assumed boundedness of the PDE operator in Assumption \ref{assumption:operator}, we can infer that the PINN residual is also bounded, i.e., the boundedness of the numerical values obtained when the PDE operator acts on the neural network of PINN, which can further ensure bounded stochastic gradient variance during SDGD optimization.

Next, we define the SGD trajectory, and we will demonstrate the convergence of the SGD trajectory based on the stochastic gradients provided by our Algorithms \ref{algo:1} and \ref{algo:2}.

\begin{definition}
(SGD Trajectory) Given step sizes (learning rates) $\{\gamma_n\}_{n=1}^\infty$, and the initialized PINN parameters $\theta^1$, then the update rules of SGD using Algorithms \ref{algo:1} and \ref{algo:2} are
\begin{equation}\label{eq:SGD1}
\theta^{n+1} = \theta^{n} - \gamma^ng_{B,J}(\theta^{n}),
\end{equation}
\begin{equation}\label{eq:SGD2}
\theta^{n+1} = \theta^{n} - \gamma^ng_{B,I,J}(\theta^{n}),
\end{equation}
respectively. Here, the stochastic gradient produced by Algorithm \ref{algo:1} $g_{B, I}(\theta)$ is given in equation (\ref{eq:algo1_grad_theory})
, and the stochastic gradient produced by Algorithm \ref{algo:2} is
\begin{equation}\label{eq:algo2_grad_theory}
g_{B, I, J}(\theta) = \frac{1}{|B||I|N_{\mathcal{L}}}\sum_{n \in B}\left(\frac{N_{\mathcal{L}}}{|J|}\left(\sum_{j \in J}\mathcal{L}_ju_\theta(\bx_n)\right) - R(\bx_n)\right)\left(\sum_{i \in I}\frac{\partial}{\partial \theta}\mathcal{L}_iu_\theta(\bx_n)\right),
\end{equation}
where $B$ is the index set sampling over the residual points, while $I$ and $J$ sample the PDE terms in the backward and forward passes, respectively.
\end{definition}

Since the PINN optimization landscape is generally nonconvex, we consider SDGD's convergence to the regular minimizer defined as follows. Kawaguchi \cite{kawaguchi2016deep} demonstrates that such a ``regular minimizer" can be good enough in practice.
\begin{definition}\label{def:reg_min}
(Regular Minimizer) In PINN's general nonconvex optimization landscape. We say that a local minimizer $\theta^*$ is a regular minimizer if $H(\theta^*) \succ 0$, i.e., the PINN loss function's Hessian matrix at $\theta^*$ is positive definite.
\end{definition}

The following theorem demonstrates the convergence rate of SDGD in PINN training. The statement follows Theorem 4 in Mertikopoulos et al. \cite{mertikopoulos2020almost} and the proof is to check if the conditions in that theorem can be satisfied in our setting.

\begin{theorem}\label{thm:convergence}
(Convergence of SDGD under Bounded Gradient Variance) Assume Assumptions \ref{assumption:activation} and \ref{assumption:operator} hold, fix some tolerance level $\delta >0$, suppose that the SGD trajectories given in equations (\ref{eq:SGD1}, \ref{eq:SGD2}) are bounded, i.e., $\Vert W_l^n \Vert\leq M(l)$ for all epoch $n$ where the collection of all $\{W_l^n\}_{l=1}^L$ is $\theta^n$, and that the regular minimizer of the PINN loss is $\theta^*$ as defined in Definition \ref{def:reg_min}, and that the SGD step size follows the form $\gamma_n = \gamma / (n + m)^p$ for some $p \in (1/2,1]$ and large enough $\gamma, m > 0$, then
\begin{enumerate}
\item There exist neighborhoods $\mathcal{U}$ and $\mathcal{U}_1$ of $\theta^*$ such that, if $\theta^1 \in \mathcal{U}_1$, the event
\begin{equation}
E_\infty = \left\{\theta^n \in \mathcal{U} \ \text{for all } n \in \mathbb{N}\right\}
\end{equation}
occurs with probability at least $1 - \delta$, i.e., $\mathbb{P}(E_\infty | \theta^1 \in\mathcal{U}_1) \geq 1 - \delta$.
\item Conditioned on $E_\infty$, we have
\begin{equation}
\mathbb{E}[\Vert\theta^n - \theta^*\Vert^2|E_\infty] \leq \mathcal{O}(1/n^p).
\end{equation}
\end{enumerate}
\end{theorem}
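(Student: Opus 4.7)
The strategy is to verify that the hypotheses of Theorem 4 in \cite{mertikopoulos2020almost} hold in our PINN--SDGD setting, after which the two conclusions follow by direct invocation. These hypotheses decompose into four ingredients: (i) unbiasedness of the stochastic gradient, (ii) a uniform bound on its second moment along the trajectory, (iii) sufficient smoothness of the loss in a neighborhood of the minimizer, and (iv) a positive-definite Hessian at $\theta^*$. Items (i) and (iv) are essentially free: (i) is Theorem \ref{thm:unbiased}, applied separately to $g_{B,I}$ and $g_{B,I,J}$; (iv) is exactly Definition \ref{def:reg_min}.

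Item (iii) is straightforward. The activation $\sigma$ is smooth by Assumption \ref{assumption:activation}, and each $\mathcal{L}_i$ is a finite linear combination of partial derivatives in $\bx$ with smooth coefficient functions (Assumption \ref{assumption:operator}). Hence $u_\theta$ together with all of its $\bx$-derivatives up to the PDE order is jointly smooth in $\theta$ and $\bx$, so the full-batch residual loss is $C^\infty$ in $\theta$ and its gradient is locally Lipschitz on any bounded subset of parameter space---in particular on the compact set $\prod_l \{W:\|W\|\le M(l)\}$ carved out by the bounded-trajectory hypothesis.

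The main work, and the expected obstacle, is item (ii). I would proceed as follows. The bounded-trajectory hypothesis confines every iterate $\theta^n$ to the compact box above, and the residual points $\{\bx_j\}_{j=1}^{N_r}$ form a finite, hence bounded, set. Lemma \ref{lemma:nn_derivative_bound} then yields explicit uniform upper bounds on $\|\partial_{\bx}^k u_{\theta^n}(\bx_j)\|$ and $\|\partial_\theta \partial_{\bx}^k u_{\theta^n}(\bx_j)\|$ for every $k$ up to the PDE order, every residual point $\bx_j$, and every epoch $n$. Assumption \ref{assumption:operator} converts these into a uniform bound on $|\mathcal{L}_i u_{\theta^n}(\bx_j)|$, since $\mathcal{L}_i$ sends compact input to bounded output; and a chain-rule expansion controls $\|\partial_\theta \mathcal{L}_i u_{\theta^n}(\bx_j)\|$ as a polynomial in the quantities just bounded. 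Combined with $|R(\bx)|\le R$ from Assumption \ref{assumption:activation} and the normalizing factor $1/N_{\mathcal{L}}$, both $g_{B,I}(\theta^n)$ and $g_{B,I,J}(\theta^n)$ are bounded in norm by a deterministic constant uniformly in $n$, so their variance is bounded by some $\sigma^2 < \infty$ throughout the trajectory.

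With (i)--(iv) in hand and the Robbins--Monro-type step-size schedule $\gamma_n = \gamma/(n+m)^p$ with $p\in(1/2,1]$---precisely the schedule treated in \cite{mertikopoulos2020almost}---Theorem 4 there delivers both the capture statement (existence of $\mathcal{U}_1$ such that $\theta^1\in\mathcal{U}_1$ implies $\mathbb{P}(E_\infty\mid \theta^1\in\mathcal{U}_1)\ge 1-\delta$ for some neighborhood $\mathcal{U}$ of $\theta^*$) and the conditional mean-square rate $\mathbb{E}[\|\theta^n-\theta^*\|^2\mid E_\infty]=\mathcal{O}(1/n^p)$. The only genuinely nontrivial piece is the chain-rule bookkeeping that turns Lemma \ref{lemma:nn_derivative_bound} plus Assumption \ref{assumption:operator} into the uniform norm bound on the SDGD estimator; everything else is direct verification against the external theorem.
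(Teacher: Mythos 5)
Your proposal is correct and follows essentially the same route as the paper's proof: fix the bounded-trajectory assumption, use Lemma \ref{lemma:nn_derivative_bound} together with Assumption \ref{assumption:operator} to uniformly bound $|\mathcal{L}_j u_\theta(\bx_i)|$ and $\|\partial_\theta \mathcal{L}_j u_\theta(\bx_i)\|$ along the trajectory, deduce a uniform variance bound for both $g_{B,I}$ and $g_{B,I,J}$, and invoke Theorem~4 of Mertikopoulos et al. One small caution: in your item (iii) you read Assumption \ref{assumption:operator} as supplying smooth coefficient functions for $\mathcal{L}_i$, but the assumption as stated only asserts bounded-image-on-compact-sets; the paper does not explicitly verify the local-smoothness hypothesis of the external theorem, so if you want to keep (iii) you should either strengthen the assumption or note (as you implicitly do) that it is standard for the PDE operators considered.
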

\begin{proof}
The proof is presented in \ref{appendix:convergence}.
\end{proof}
\begin{remark}
The convergence rate of stochastic gradient descent is $\mathcal{O}(1/n^p)$, where $\mathcal{O}$ represents the constant involved, including the variance of the stochastic gradient. A larger variance of the stochastic gradient leads to slower convergence, while a smaller variance leads to faster convergence. The selection of the learning rate parameters $\gamma$ and $m$ depends on the specific optimization problem itself. For example, in a PINN problem, suitable values of $\gamma$ and $m$ ensure that the initial learning rate is around 1e-4, which is similar to the common practice for selecting learning rates.
\end{remark}
\begin{remark}
The theorem is mainly due to Theorem 4 in Mertikopoulos et al. \cite{mertikopoulos2020almost}, which is proved via checking if the required conditions are satisfied.
The convergence result is local and requires that the initialized neural network parameter $\theta^1$ in the first epoch be inside the neighborhood $\mathcal{U}_1$. A proper neural network weight/parameter initialization, e.g., Xavier initialization \cite{glorot2010understanding} used in our experiment, can mitigate this. This approach avoids the problem of initializing weights with too large or too small values, thus ensuring that the loss and gradient do not explode to infinity or vanish to zero during forward and backward passes, contributing to a small initial loss function value, which signifies a starting point close to a local minimum and stable gradient and fast convergence.
\end{remark}

\subsection{Discussion on Batch Sizes Selection}\label{sec:batch_size_selection}
We have introduced our SDGD algorithms with three batch sizes: $|B|$ for the batch size of residual points, $|I|$ for the dimensions included in the backward pass, and $|J|$ for the dimensions included in the forward pass; see the stochastic gradients in equations (\ref{eq:full_batch_grad_theory}, \ref{eq:algo1_grad_theory}, \ref{eq:algo2_grad_theory}). In this subsection, we further discuss how to choose batch sizes.

Firstly, both computational cost and memory consumption expand with the increase in batch sizes 
$|B|, |I|, |J|$. Additionally, SDGD's stochastic gradient variance diminishes with larger batch sizes, facilitating better convergence. Therefore, it represents a tradeoff between computational load and performance. 
Further, memory consumption is directly linked to $|B| \cdot |I|$, where $|B|$ represents the number of residual points and  $|I|$ involves the number of dimensions participating in the backward pass. $|J|$ has nothing to do with the memory cost since it is for the forward pass detached from the GPU after computation.

Users should determine the appropriate batch size based on their GPU memory capacity. Given a specified memory constraint, i.e., once the $|B| \cdot |I|$ value is determined, we recommend a balanced selection to ensure similarity between the two batch sizes, i.e., $|I| \approx |B|$. Extremely unbalanced cases, such as setting one batch size to a minimum of 1, are not advised. We explain the negative effect of unbalanced batch sizes as follows.
If the batch size for residual points, $|B|$, is too small, it can hinder the GPU from leveraging its advantage in large-batch parallel computing. The parallel nature of neural network inferences for each residual point cannot be used, resulting in slower execution due to the exceptionally small batch size of points $|B|$. 
Meanwhile, if one reduces the batch for sampling dimension $|I|$ too much, we cannot reuse the computation graphs for the derivatives along different dimensions, leading to a significant slowdown. Specifically, taking the high-dimensional Laplacian operator as an example, the neural network's second-order derivative with respect to the $j$-th dimension is given by
\begin{equation}
\begin{aligned}
\frac{\partial^2 u_{\theta}(\bx)}{\partial \bx_j^2} &=\sum_{l=1}^{L-1} 
(W_L\Phi_{L-1}(\bx)\cdots W_{l+1})
\text{diag}(\Psi_l(\bx)\cdots\Psi_1(\bx)(W_1)_{:,j})
(W_l\cdots \Phi_1(\bx)W_1),
\end{aligned}
\end{equation}
where
$\Phi_l(\bx) = \text{diag}[\sigma'(W_l\sigma(W_{l-1}\sigma(\cdots\sigma(W_1\bx))))] \in \mathbb{R}^{m_l\times m_l}$, and
$\Psi_l(\bx) = \text{diag}[\sigma''(W_l\sigma(W_{l-1}\sigma(\cdots\sigma(W_1\bx))))] \in \mathbb{R}^{m_l\times m_l}$.
For all dimensions $j$, the computations of $\Phi_l(\bx)$ and $\Psi_l(\bx)$ are shared. Thus, computing multiple derivatives on one computational graph is faster than computing them from scratch without reusing the shared part. Hence, we recommend sampling an appropriate dimension batch size, such as 100 or more, and we avoid choosing a too small $|I|$ to reuse the computation graph properly.

As for the choice of $|J|$ in Algorithm \ref{algo:2}, we recommend setting $|J| \approx |I|$. This is because both $|I|$ and $|J|$ represent the dimension batch size, ensuring that the computational load for both forward and backward passes is roughly equal and the variances of the two samplings are similar. Although $|J|$ does not affect memory consumption, setting it excessively large is not advisable, as it would resemble Algorithm \ref{algo:1}, being memory-efficient but suffering from slow computation due to excessive forward pass workload.

\subsection{Comparison with Latest Work}
Recently, a line of work has been proposed for high-dimensional PINN, including Random Smoothing PINN (RS-PINN) \cite{hu2023bias}, Hutchinson trace estimation (HTE) PINN \cite{hu2023hutchinson}, and Score-PINN \cite{hu2024score}. We compare these three latest papers with our SDGD proposed in this paper.
Generally speaking, we demonstrate that SDGD is still more general, universal, applicable, easily implementable, contributory to high-order and high-dimensional PINN's theory, and tends to have lower variance in most scenarios despite these latest follow-ups.

\subsubsection{Comparison with RS-PINN}
\textbf{RS-PINN \cite{hu2023bias}} is another way to speed up and scale up PINN for high-dimensional PDEs, first proposed by He et al. \cite{he2023learning}. However, we show that RS-PINN \cite{hu2023bias} is worse than SDGD-PINN in several aspects.
First, RS-PINN cannot deal with PDEs with more than second-order terms since the corresponding variance reduction form cannot be derived, while SDGD applies to arbitrary high-dimensional high-order PDEs.
Second, RS-PINN smoothes the network, and thus, it can only model smooth PDE solutions, i.e., RS-PINN is not a universal approximator. On the other hand, SDGD employs a regular neural network and is universal.
Third, RS-PINN requires sampling, which leads to bias when dealing with nonlinear PDEs like the Sine-Gordon equation, as pointed out in Section 4.3 of RS-PINN paper \cite{hu2023bias}. Fourth, we compare RS-PINN with SDGD-PINN in Table \ref{tab:adv} in this paper, where He et al. \cite{he2023learning} is the RS-PINN. However, we demonstrate that SDGD is much faster than RS-PINN.

Overall, the bias, over-smoothness, and limitation on lower-than-second-order derivatives threaten RS-PINN's application. SDGD is better in these senses. SDGD also outperforms RS-PINN significantly in numerical experiments.

\subsubsection{Comparison with HTE}
\textbf{HTE's \cite{hu2023hutchinson}} is another method for accelerating high-dimensional PINN.
We compare SDGD with HTE in (1) gradient variance; (2) applicability and generality; (3) flexibility in code implementation; (4) numerical experiment; (5) theoretical contribution. Overall, SDGD is better than HTE \cite{hu2023hutchinson} in these crucial aspects.

\begin{enumerate}
\item Gradient variance: We first compare the variance of SDGD and HTE theoretically. Overall, SDGD and HTE have distinct pros and cons, but SDGD will be more stable in most high-dimensional PDEs than HTE.

Hu et al. \cite{hu2023hutchinson} gave examples where SDGD/HTE can be better and cases where they perform the same in their Section 3.3.2, which is the comprehensive and fair comparison showing that HTE is not necessarily better than SDGD.

We further demonstrate that SDGD's variance can be more stable than HTE in most extremely high-dimensional problems. 
Using the setting of second-order PDEs in Section 3.3.2 of HTE \cite{hu2023hutchinson}, SDGD and HTE both estimate the Hessian trace stochastically. 
The HTE paper shows that HTE's variance comes from the off-diagonal terms, whose number grows quadratically with dimensionality.
SDGD's variance comes from the variability of on-diagonal terms, whose number grows linearly with dimension. 
Hence, with growing PDE dimensionality, HTE tends to suffer from more variance due to a faster-growing number of off-diagonal terms. In other words, the number of off-diagonal terms injecting noise into HTE's stochastic gradient grows quadratically fast, while the number of on-diagonal terms injecting noise into SDGD only grows linearly.

To conclude, regarding gradient variance, SDGD and HTE have distinct pros and cons, but SDGD will be more stable in most high-dimensional PDEs than HTE.

\item Applicability and generality: Next, we compare the applicability of SDGD and HTE. SDGD can be applied to arbitrary high-dimensional and high-order PDEs thanks to the general decomposition of PDE operators $\mathcal{L} = \sum_{i=1}^{N_\mathcal{L}} \mathcal{L}_i$ based on dimensionality. In contrast, HTE can only be used for first-order, second-order, and biharmonic PDEs, which enables the Tensor-Vector Product proposed in HTE. 

More specifically, HTE \cite{hu2023hutchinson}'s Section 3.5.2 acknowledges HTE's limitation. In contrast, we are always emphasizing SDGD's generality to arbitrary PDEs throughout this paper.

\item Implementation flexibility: Furthermore, SDGD is more flexible and can be implemented in both PyTorch \cite{paszke2019pytorch} and JAX \cite{jax2018github}. However, HTE \cite{hu2023hutchinson} can only be implemented with JAX \cite{jax2018github} Taylor mode auto-differentiation. The HTE paper \cite{hu2023hutchinson} acknowledges HTE's limitation in Section 3.2.3.

In fact, PyTorch \cite{paszke2019pytorch} is still more widely used than JAX \cite{jax2018github} and has minimal requirements on the computer system. Hence, SDGD is more implementable than HTE.

Additionally, regarding implementation, SDGD and HTE are compared in JAX implementation using Section 5.1's setting in the HTE paper \cite{hu2023hutchinson}. We implemented other experiments (Sections 5.2, 5.3, and 5.4) in PyTorch since our baselines are also implemented in PyTorch. Thus, we keep the deep learning package the same for fair comparison and reproducible research.

\item Numerical experiments: Meanwhile, we compare HTE and SDGD experimentally. As shown by Table 1 in the HTE paper \cite{hu2023hutchinson}, if both are implemented in JAX \cite{jax2018github} and under fair comparison with the same memory and speed budgets, they perform similarly. So, SDGD and HTE are comparable in this case. 
 
But, as mentioned before, SDGD is more applicable and implementable than HTE, and SDGD generally has a low variance with the growing PDE dimension.

\item Theoretical contribution: SDGD's Section 4 provides a convergence theory for general high-order and high-dimensional PINN. This theoretical contribution can also be applied to bound the HTE gradient variance. It can also guarantee future work on general high-order and high-dimensional PINN.
\end{enumerate}

Overall, SDGD is better than HTE, especially regarding the variance in growing PDE dimension, generality, and implementation.

\subsubsection{Comparison with Score-PINN}
\textbf{Score-PINN \cite{hu2024score}} is an application that works on high-dimensional Fokker-Planck (FP) equations by adopting the speed-up and scale-up techniques we have developed, i.e., SDGD can be applied to it. It is not proposing any general methodology of high-dimensional PDE solver, but it's novelty is to adapt existing techniques to FP equation to solve the numerical issue due to extremely small solution value.

In sum, Score-PINN \cite{hu2024score} is an application paper that uses SDGD or HTE for a specific class of FP equations. It is not a methodology paper like SDGD. Score-PINN is also based on SDGD.

\section{Experiments}
In this section, we conduct extensive experiments to demonstrate the stable and fast convergence of SDGD over several nonlinear PDEs. In particular, 
we compare our PINN-based method with other non-PINN mainstream approaches for solving high-dimensional PDEs \cite{beck2021deep, han2018solving, raissi2018forward}, especially in terms of accuracy. We also demonstrate SDGD's convergence in 100,000-dimensional nonlinear PDEs with complicated exact solutions.
All the experiments in the main text are done on an NVIDIA A100 GPU with 80GB memory using PyTorch implementation \cite{paszke2019pytorch}.

\subsection{SDGD Can Deal with Inseparable and Effectively High-Dimensional PDE Solutions}

Herein, we test the ability of SDGD  to deal with linear/nonlinear PDEs with nonlinear, inseparable, and effectively high-dimensional PDE exact solutions in PINN's residual loss. Specifically, we consider the following exact solution:
\begin{equation}
u_{\text{exact}}(\bx) = \left(1 - \Vert \bx \Vert_2^2\right)\left(\sum_{i=1}^{d-1}  c_i \sin(\bx_i +\cos(\bx_{i+1})+\bx_{i+1}\cos(\bx_i))\right),
\end{equation}
where $c_i \sim \mathcal{N}(0, 1)$. Here the term $\left(1 - \Vert \bx \Vert_2^2\right)$ is for a zero boundary condition on the unit sphere and for preventing the boundary from leaking the information of the solution, as we want to test SDGD's ability to fit the residual where sampling over dimensions is employed. Furthermore, the following PDEs are considered in the unit ball $\mathbb{B}^d$, all of which are associated with a zero boundary condition on the unit sphere:
\begin{itemize}
\item Poisson equation
\begin{equation}
\Delta u(\bx) = g(\bx), \quad \bx\in \mathbb{B}^d,
\end{equation}
where $g(\bx) = \Delta u_{\text{exact}}(\bx)$.
\item Allen-Cahn equation
\begin{equation}
\Delta u(\bx) + u(\bx) - u(\bx)^3 = g(\bx), \quad \bx\in \mathbb{B}^d,
\end{equation}
where $g(\bx) = \Delta u_{\text{exact}}(\bx) + u_{\text{exact}}(\bx) - u_{\text{exact}}(\bx)^3$.
\item Sine-Gordon equation
\begin{equation}
\Delta u(\bx) + \sin\left(u(\bx) \right) = g(\bx), \quad \bx \in \mathbb{B}^d,
\end{equation}
where $g(\bx) = \Delta u_{\text{exact}}(\bx) + \sin\left(u_{\text{exact}}(\bx) \right)$.
\end{itemize}
We would like to emphasize that the PDEs and the exact PDE solutions used here are highly nontrivial and challenging.
\begin{itemize}
\item Nonlinear, inseparable, and effectively high-dimensional PDE exact solution: 
This PDE cannot be reduced to a lower-dimensional problem, and the exact solution of the PDE cannot be decomposed into lower-dimensional functions. It is worth noting that in the exact solution, all pairs of variables, both $\bx_i$ and $\bx_{i+1}$, exhibit pairwise interactions and are highly coupled.
\item Anisotropic and nontrivial PDE exact solution: In addition, due to the random coefficients $c_i \sim \mathcal{N}(0, 1)$ along different dimensions, the exact solution is anisotropic, which poses an additional challenge to SDGD. The exact solution is also highly nontrivial; its value's standard deviation is large in high-dimensional spaces.
\item PDEs with different levels of nonlinearities: We test both linear and nonlinear PDEs to verify SDGD's robustness.
\item Zero boundary condition to test SDGD on the residual part: In a $d$-dimensional problem, the boundary is typically $(d-1)$-dimensional. Therefore, if the boundary conditions are nontrivial, the boundary loss will leak information at a rate of $(d-1)/d$. To assess the impact of sampling the dimension in the residual loss in SDGD, we maintain a zero boundary condition to eliminate its interference. This also makes the PDE more challenging, as we have no knowledge of the exact solution from the boundary.
\item Unable to solve via traditional methods: These elliptic PDEs are traditionally solved by finite difference or finite element methods, which suffer from the curse of dimensionality, whose costs grow exponentially with respect to the dimension. The related work mentioned before, e.g., \cite{beck2021deep,han2018solving,raissi2019physics}, is mainly tailored for time-dependent parabolic PDEs. Thus, traditional methods directly fail in these cases.
\end{itemize}

The training details are as follows. The model is a 4-layer multi-layer perceptron network with 128 hidden units, which is trained via Adam \cite{kingma2014adam} for 10K epochs, with an initial learning rate 1e-3, which linearly decays to zero at the end of the optimization. We select 100 random residual points at each Adam epoch (i.e., $|B| = 100$) and 20K fixed testing points uniformly from the unit ball. We utilize Algorithm \ref{algo:3} with a minibatch of 100 dimensions to randomly sample the second-order derivatives along various dimensions in the Laplacian operator in all the PDEs, i.e., $|I| = 100$. For vanilla PINN baseline \cite{raissi2019physics}, we use full dimension gradient descent with $|I|$ equals the PDE dimensionality. We adopt the following model structure to automatically satisfy the zero boundary condition, which helps the model avoid boundary loss and avoid sampling the boundary points \cite{lu2021physics}:
\begin{equation}
u^{\text{SDGD}}_\theta(\bx) = (1 - \Vert\bx\Vert_2^2) u_\theta(\bx),
\end{equation}
where $u_\theta(\bx)$ is the neural network and $u^{\text{SDGD}}_\theta(\bx)$ is the boundary-augmented model. The hard constraint technique for PINN is popular to avoid the additional boundary loss \cite{lu2021physics}. We repeat our experiment 5 times with 5 independent random seeds.

\begin{table}[htbp]
\centering
\begin{tabular}{|c|c|c|c|c|c|c|c|}
\hline
\multirow{6}{*}{Vanilla PINN} & PDE & Metric & 100 D & 1,000 D & 5,000 D & 10,000 D & 100,000 D \\ \cline{2-8} 
 & Poisson & Rel. $L_2$ Error & 7.189E-3 & 5.609E-4 & 1.768E-3 & N.A. & N.A. \\ \cline{2-8} 
 & Allen-Cahn & Rel. $L_2$ Error & 7.187E-3 & 5.617E-4 & 1.773E-3 & N.A. & N.A.  \\ \cline{2-8} 
 & Sine-Gordon & Rel. $L_2$ Error & 7.192E-3 & 5.642E-4 & 1.782E-3 & N.A. & N.A. \\ \cline{2-8} 
 &  & Time (Hour) & 0.05 & 4.75 & 30.54 & N.A. & N.A. \\ \cline{2-8} 
 &  & Memory (MB) & 1328 & 4425 & 56563 & $>$81252 & $>$81252 \\ \hline
\hline
\multirow{6}{*}{SDGD (Ours)} & PDE & Metric & 100 D & 1,000 D & 5,000 D & 10,000 D & 100,000 D \\ \cline{2-8} 
 & Poisson & Rel. $L_2$ Error & 7.189E-3 & 5.611E-4 & 1.758E-3 & 1.850E-3 & 2.175E-3 \\ \cline{2-8} 
 & Allen-Cahn & Rel. $L_2$ Error & 7.187E-3 & 5.615E-4 & 1.762E-3 & 1.864E-3 & 2.178E-3  \\ \cline{2-8} 
 & Sine-Gordon & Rel. $L_2$ Error & 7.192E-3 & 5.641E-4 & 1.795E-3 & 1.854E-3 & 2.177E-3 \\ \cline{2-8} 
 &  & Time (Hour) & 0.05 & 0.75 & 1.18 & 1.5 & 12 \\ \cline{2-8} 
 &  & Memory (MB) & 1328 & 1788 & 3335 & 4527 & 32777 \\ \hline
\end{tabular}
\caption{Relative $L_2$ error, time, and memory costs for vanilla PINNs \cite{raissi2019physics} and SDGD (Ours) across different PDEs under various high dimensions.
Vanilla PINNs \cite{raissi2019physics} run out-of-memory for an A100 GPU with 81252MB memory in more than 5,000 D, while SDGD scales up to 100,000 D.
SDGD's errors are similar to those of vanilla PINNs in relatively lower dimensions (100 D, 1,000 D, and 5,000 D) while being much faster and more memory-efficient.}
\label{tab:1}
\end{table}


\begin{figure}[htbp]
\centering
\includegraphics[scale=0.57]{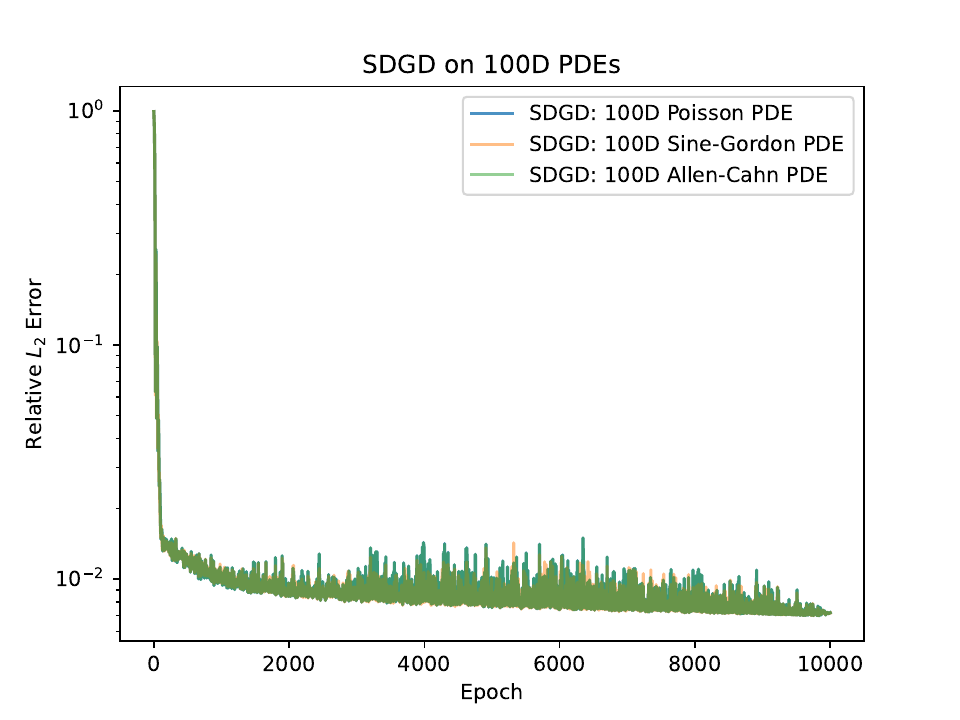}
\centering
\includegraphics[scale=0.57]{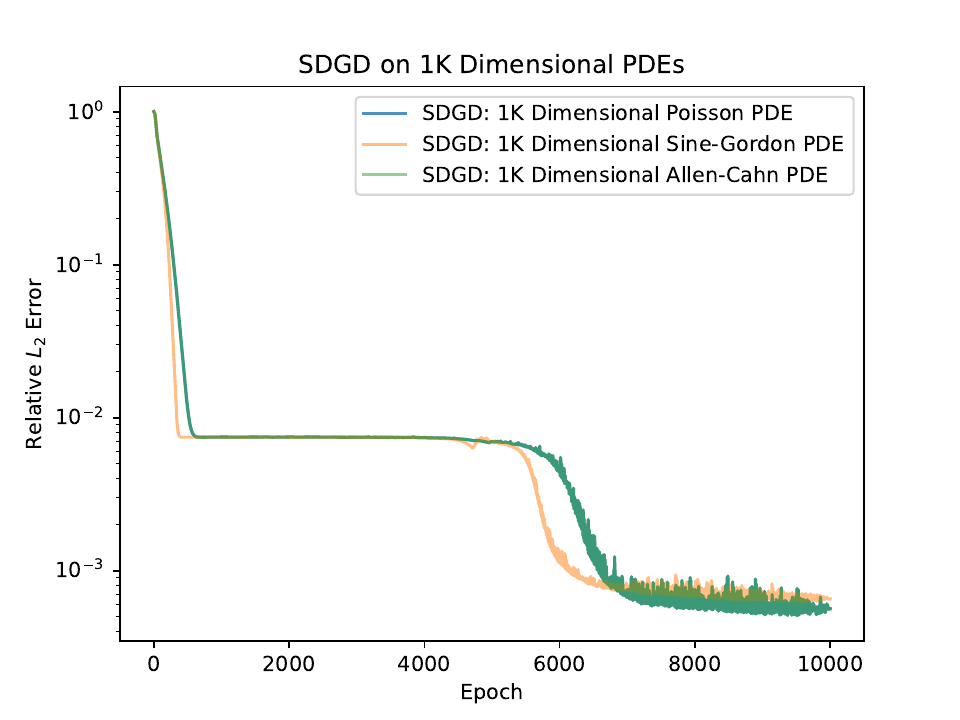}
\centering
\includegraphics[scale=0.57]{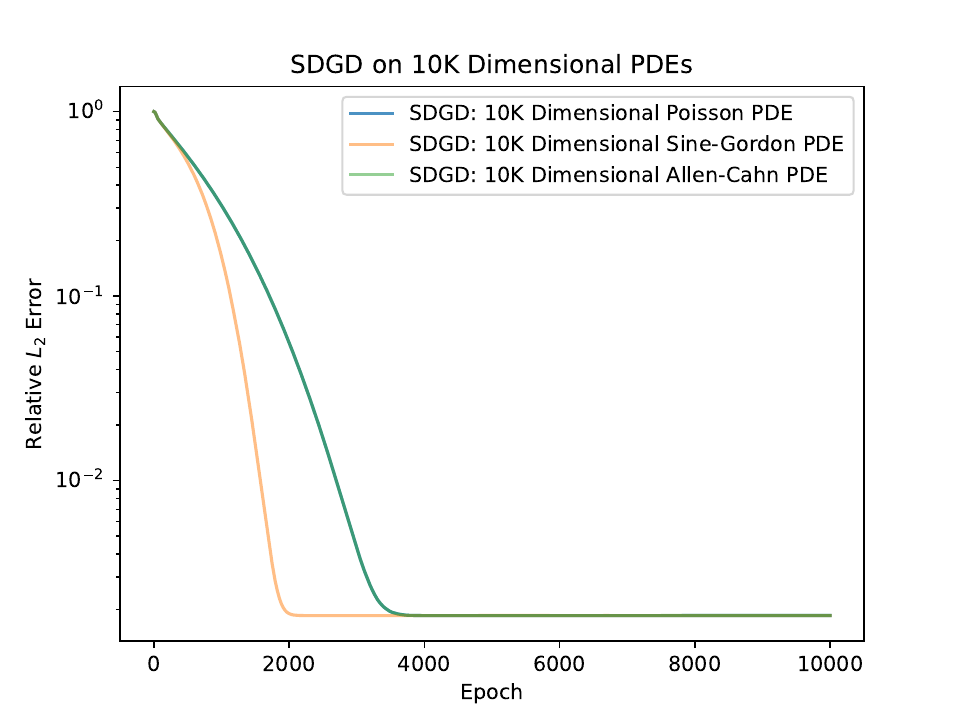}
\centering
\includegraphics[scale=0.57]{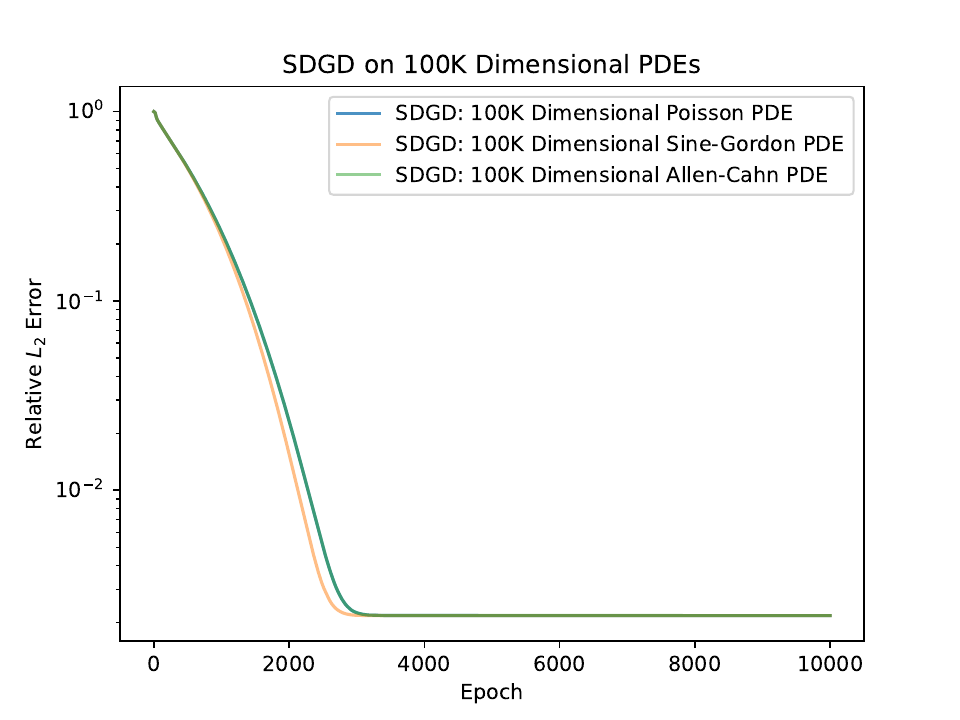}
\caption{SDGD convergence curve for the Poisson, Allen-Cahn, and Sine-Gordon PDEs in 100, 1K, 10K, and 100K dimensions.}
\label{fig:1}
\end{figure}

The test relative $L_2$ errors and the convergence time are shown in Table \ref{tab:1}, while the convergence curve of SDGD is shown in Figure \ref{fig:1}. The main observations from the results are as follows.
\begin{itemize}
\item Stable performances: 
SDGD demonstrates remarkable stability across various dimensions and different nonlinear PDEs, with relative $L_2$ errors testing at approximately 1e-3. 
Furthermore, given that we have provided the same exact solution for each PDE and have kept the training and testing point samples fixed by using the same random numbers, the convergence curves of SDGD for different PDEs are remarkably similar. This is especially noticeable for Poisson and Allen-Cahn, where the convergence curves exhibit minimal distinctions. This highlights the stability of SDGD in the face of PDE nonlinearity.
\item Linear computational cost: Furthermore, SDGD's memory and optimization time costs exhibit linear growth with respect to dimensionality. For instance, when transitioning from 10K dimensions to 100K dimensions, both memory and time costs increase by less than a factor of 10.
\item SDGD scales up PINNs to extremely high dimensions: In the case of 100K dimensions, we opted for $|I|=100$ dimensions for SDGD. If we were to employ the conventional full-batch gradient descent across all dimensions, even with just one residual point, the memory cost would surpass SDGD's current setting by a factor of 10. This greatly exceeds the 80GB memory limit of an A100 GPU. Thus, SDGD allows for the scalability of PINN to handle large-scale PDE problems.
\item Smoother convergence curve in higher dimensions: We observe that the convergence curves of SDGD are smoother in high-dimensional scenarios, which could be attributed to the tendency of high-dimensional functions to be smoother thanks to the blessing of dimensionality.
\item Comparison with vanilla PINNs \cite{raissi2019physics}: With regards to vanilla PINNs, we note a considerable decrease in speed and a swift escalation in memory usage with the growth of dimensionality. This occurs due to the necessity of calculating the entire second-order Laplacian operator in regular PINNs. This leads to a quadratic rise in computational expense with dimension due to the growing network size at the input layer and the number of second-order derivatives in the high-dimensional Laplacian operator. Consequently, when dealing with 10,000 dimensions, regular PINNs surpass the memory capacity of an A100 GPU of 81252 MB. Moreover, in scenarios where vanilla PINNs can handle dimensions less than 5,000 D, the performance of SDGD is comparable, with the added benefits of significant acceleration and reduced memory consumption. In 100 D, since our batch size for dimension is $|I|=100$, SDGD and vanilla PINN are the same algorithm. These results demonstrate that SDGD scales up and speeds up PINNs in very high dimensions.
\end{itemize}
In sum, SDGD with PINN can tackle the CoD in solving nonlinear high-dimensional PDEs with inseparable and complicated solutions.

\subsubsection{Ablation Study 1: Effect of the Batch Sizes for Residual Points and Sampled Dimensions}

\begin{table}[htbp]
\centering
\footnotesize
\begin{tabular}{|c|ccccc|ccc|}
\hline
Remark & \multicolumn{5}{c|}{$|I| \cdot |B| = 10^4$} & \multicolumn{3}{c|}{Lower Batch Sizes} \\ \hline
\# of Sampled Dimensions $|I|$ & \multicolumn{1}{c|}{100} & \multicolumn{1}{c|}{1,000} & \multicolumn{1}{c|}{10} & \multicolumn{1}{c|}{10,000} & 1 & \multicolumn{1}{c|}{100} & \multicolumn{1}{c|}{10} & 10 \\ \hline
\# of Residual Points $|B|$ & \multicolumn{1}{c|}{100} & \multicolumn{1}{c|}{10} & \multicolumn{1}{c|}{1,000} & \multicolumn{1}{c|}{1} & 10,000 & \multicolumn{1}{c|}{10} & \multicolumn{1}{c|}{100} & 10 \\ \hline
Relative $L_2$ Error & \multicolumn{1}{c|}{2.177E-3} & \multicolumn{1}{c|}{2.190E-3} & \multicolumn{1}{c|}{2.185E-3} & \multicolumn{1}{c|}{2.180E-3} & N.A. & \multicolumn{1}{c|}{2.203E-3} & \multicolumn{1}{c|}{2.189E-3} & 2.794E-3 \\ \hline
Speed (second per iteration) & \multicolumn{1}{c|}{4.35s/it} & \multicolumn{1}{c|}{3.21s/it} & \multicolumn{1}{c|}{56.13s/it} & \multicolumn{1}{c|}{31.25s/it} & 546.56s/it & \multicolumn{1}{c|}{0.85s/it} & \multicolumn{1}{c|}{5.72s/it} & 0.45s/it \\ \hline
Time (Hour) & \multicolumn{1}{c|}{12.08} & \multicolumn{1}{c|}{8.92} & \multicolumn{1}{c|}{155.92} & \multicolumn{1}{c|}{86.81} & 1518.22 & \multicolumn{1}{c|}{2.37} & \multicolumn{1}{c|}{15.89} & 1.25 \\ \hline
Memory (MB) & \multicolumn{1}{c|}{32777} & \multicolumn{1}{c|}{32481} & \multicolumn{1}{c|}{33056} & \multicolumn{1}{c|}{32163} & 33458 & \multicolumn{1}{c|}{5823} & \multicolumn{1}{c|}{5965} & 1849 \\ \hline
\end{tabular}
\caption{SDGD results for the 100,000-dimensional Sine-Gordon equation with anisotropic solutions under various batch sizes.}
\label{tab:ablation1}
\end{table}

In this section, we investigate the impact of the batch sizes of residual points $|B|$ and dimensions sampled $|I|$ in SDGD on its convergence results. We adopt the same experimental settings of 100,000-dimensional Sine-Gordon equations and hyperparameter choices as in Section 5.1, except we control these two batch sizes $|B|$ and $|I|$ in Algorithm \ref{algo:3}. The results are presented in Table \ref{tab:ablation1}, where we manipulate the batch sizes in eight different cases with $(|I|, |B|)$ = (100, 100) or (1000, 10) or (10, 1000) or (10000, 1) or (1, 10000) or (100, 10) or (10, 100) or (10, 10). We remark that the total batch size $|I| \cdot |B| = 10^4$ for the first five cases. So, they should have similar memory costs according to Section \ref{sec:batch_size_selection}. For the other three cases with $(|I|, |B|)$ = (100, 10) or (10, 100) or (10, 10), we remark that we chose even smaller total batch sizes. For each case, we report the test relative $L_2$ error, speed (second per iteration), total running time for 10K iterations, and GPU memory cost in MB.
\begin{itemize}
\item Firstly, regarding performance, the convergence results of the final relative $L_2$ error are quite consistent among various settings, indicating that SDGD is robust to the choice of batch size. Particularly, smaller batch sizes like $(|I|=100, |B|=10)$, $(|I|=10, |B|=100)$, and $(|I|=10, |B|=10)$ converge only slightly worse than larger ones but with faster speeds, suggesting that selecting fewer points and dimensions is sufficient when dealing with high-dimensional PINNs.
\item As for memory consumption, let us consider the first five settings; if the total batch size (the numerical value of $|I| \cdot |B| = 10^4$ for these five cases) remains the same, the memory usage is generally comparable. When selecting more points, additional memory is consumed due to the inherent memory usage of each point, but this constitutes only a small fraction of the total memory.
\item Regarding speed, we observe that reducing any batch sizes, $|I|$ or $|B|$, accelerates the training. Specifically, the case ``$(|I|, |B|)$ = (100, 100)" runs slower than the case ``$(|I|, |B|)$ = (100, 10)" and the case ``$(|I|, |B|)$ = (10, 100)", and all of them run slower than the case with smallest batch size ``$(|I|, |B|)$ = (10, 10)".
\item Consider the five cases with the same total batch size $|I| \cdot |B| = 10^4$. We observed that extremely unbalanced batch sizes, e.g., ``$(|I|, |B|)$ = (10000, 1)" and ``$(|I|, |B|)$ = (1, 10000)", result in particularly slow speeds. Conversely, in cases where a more balanced batch size is chosen, e.g., ``$(|I|, |B|)$ = (100, 100)", the computation tends to run faster.
If the batch size for residual points $|B|$ is excessively small, the advantage of GPU parallel computing with large batches of residual points cannot be utilized, leading to a slowdown in speed. If the batch size for dimensions $|I|$ is too small, it sacrifices the reuse of the computational graph for calculating the second-order derivatives along each dimension, resulting in slower execution, consistent with our analysis in Section \ref{sec:batch_size_selection}.
\item Furthermore, in similarly extreme scenarios, selecting an exceptionally small batch size for the sampled dimension results in even slower speeds, i.e., the case ``$(|I|, |B|)$ = (1, 10000)" is much slower. In this example, the slowdown due to not reusing the computation graph for each dimension's second-order derivatives is much more obvious. 
Decreasing the batch size too much for either residual points or dimensions is not ideal, and which one has a greater negative impact depends on the specific setting and model architecture. Also, the case ``$(|I|, |B|)$ = (10, 1000)" is much slower than the case `$(|I|, |B|)$ = (1000, 10)" points to the same fact.
\end{itemize}
In summary, this section demonstrates the stability of SDGD concerning batch size. It highlights that achieving good convergence results in high-dimensional PINNs can be accomplished with a small batch of randomly selected points and dimensions using SDGD. We also validate the guideline for batch size selection in Section \ref{sec:batch_size_selection}.

\subsubsection{Ablation Study 2: Comparisons between Algorithms \ref{algo:1}, \ref{algo:2}, and \ref{algo:3}}\label{sec:exp_bias_speed_tradeoff}

\begin{table}[htbp]
\centering
\begin{tabular}{|c|c|c|c|c|}
\hline
Algorithm & Metric & 1,000 D & 10,000 D & 100,000 D \\ \hline
\multirow{3}{*}{Algorithm \ref{algo:1}} & Relative $L_2$ Error & 5.134E-4 & N.A. & N.A. \\ \cline{2-5} 
 & Time (Hour) & 4.36 & $>$24 & $>$120 \\ \cline{2-5} 
 & Memory (MB) & 1788 & 4527 & 32777 \\ \hline\hline
\multirow{3}{*}{Algorithm \ref{algo:2}} & Relative $L_2$ Error & 5.193E-4 & 1.682E-3 & 2.039E-3 \\ \cline{2-5} 
& Time (Hour) & 1.08 & 1.8 & 14.4 \\ \cline{2-5} 
& Memory (MB) & 1788 & 4527 & 32777 \\ \hline\hline
\multirow{3}{*}{Algorithm \ref{algo:3}} & Relative $L_2$ Error & 5.641E-4 & 1.854E-3 & 2.177E-3 \\ \cline{2-5} 
 & Time (Hour) & 0.75 & 1.5 & 12 \\ \cline{2-5} 
 & Memory (MB) & 1788 & 4527 & 32777 \\ \hline
\end{tabular}
\caption{Comparisons between Algorithms \ref{algo:1}, \ref{algo:2}, and \ref{algo:3} on the Sine-Gordon equations in various dimensions.}
\label{tab:ablation2}
\end{table}

We have demonstrated the rapid convergence of SDGD on complex nonlinear PDEs using Algorithm \ref{algo:3} and its stability in particularly high dimensions across various nonlinear PDEs. Next, we will compare the speed and efficacy of Algorithms \ref{algo:1}, \ref{algo:2}, and \ref{algo:3}. Given the stability of SDGD across various PDEs, we will employ the example of the nonlinear Sine-Gordon PDE in various dimensions, as discussed in Section 5.1. We have maintained identical experimental hyperparameters to focus specifically on comparing the algorithms as in the previous section. Furthermore, for the batch sizes of the three algorithms, including the batch sizes for residual/collocation points ($|B|$), PDE terms/dimensions in the forward pass ($|J|$), and PDE terms/dimensions for the backward pass ($|I|$), we consistently chose $|B| = |I| = |J| = 100$, ensuring a fair comparison.

The results are shown in Table \ref{tab:ablation2}. The memory cost of the three algorithms is the same because their values of $|I|$ and $|B|$ are equivalent, representing the number of residual points and dimensions involved in the backward pass, respectively. Regarding speed, Algorithm \ref{algo:1} necessitates a forward pass for all dimensions. Despite being memory-efficient, its speed is compromised by the massive forward pass computation, particularly in extremely high dimensions. Algorithm \ref{algo:3} is slightly faster than Algorithm \ref{algo:2}. This is attributed to the fact that the former samples only once, utilizing the same set of dimensions for both forward and backward passes. In contrast, the latter requires two separate sets of dimensions for these passes. Regarding performance, Algorithm \ref{algo:1} slightly outperforms Algorithm \ref{algo:2}, which, in turn, is slightly better than Algorithm \ref{algo:3}. This is because we employ more randomness to accelerate at the expense of introducing a certain degree of gradient variance. Nevertheless, Algorithm \ref{algo:3}, the fastest in terms of speed, proves to be quite effective, with marginal differences compared to more precise algorithms.

\begin{table}[htbp]
\centering
\begin{tabular}{|c|c|c|c|c|c|}
\hline
Method & PDE  & Dim=$10^1$       & Dim=$10^2$   & Dim=$10^3$   & Dim=$10^4$   \\ \hline
FBSNN \cite{raissi2018forward} & Allen-Cahn & 1.339E-3	& 2.184E-3	&7.029E-3&	5.392E-2
 \\ \hline
DeepBSDE \cite{han2018solving} & Allen-Cahn & 4.581E-3	&2.516E-3&	2.980E-3	&2.975E-3
 \\ \hline
DeepSplitting \cite{beck2021deep} & Allen-Cahn & 3.642E-3	&1.563E-3	&2.369E-3&	2.962E-3
 \\ \hline
SDGD (Ours) & Allen-Cahn & \textbf{7.815E-4}&	\textbf{3.142E-4}&\textbf{7.042E-4}&	\textbf{2.477E-4} \\ \toprule\hline
Method & PDE  & Dim=$10^1$       & Dim=$10^2$   & Dim=$10^3$   & Dim=$10^4$   \\ \hline
FBSNN \cite{raissi2018forward} & Semilinear Heat & 3.872E-3	&8.323E-4	&7.705E-3	&1.426E-2
 \\ \hline
DeepBSDE \cite{han2018solving} & Semilinear Heat & 3.024E-3&	3.222E-3&	3.510E-3&	4.279E-3
 \\ \hline
DeepSplitting \cite{beck2021deep} & Semilinear Heat &  2.823E-3	&3.432E-3&	3.439E-3	&3.525E-3
 \\ \hline
SDGD (Ours) & Semilinear Heat & \textbf{1.052E-3}&	\textbf{5.263E-4}&	\textbf{6.910E-4}&	\textbf{1.598E-3}
  \\\hline
\toprule\hline
Method & PDE  & Dim=$10^1$       & Dim=$10^2$   & Dim=$10^3$   & Dim=$10^4$   \\ \hline
FBSNN \cite{raissi2018forward} & Sine-Gordon & 2.641E-3	&4.283E-3&2.343E-1&5.185E-1
 \\ \hline
DeepBSDE \cite{han2018solving} & Sine-Gordon & 3.217E-3&2.752E-3&2.181E-3&2.630E-3
 \\ \hline
DeepSplitting \cite{beck2021deep} & Sine-Gordon &  3.297E-3	&2.674E-3&	2.170E-3	&2.249E-3
\\ \hline
SDGD (Ours) & Sine-Gordon & \textbf{2.265E-3}	&\textbf{2.310E-3}&\textbf{1.674E-3}&\textbf{1.956E-3}
  \\\hline
\end{tabular}
\caption{Results of various methods on the Allen-Cahn, semilinear heat, and sine-Gordon PDEs with different dimensions. The evaluation metric is all relative $L_1$ error on one test point. SDGD (Ours) is the state-of-the-art (SOTA) method.}
\label{tab:nonlinear_more}
\end{table}

\subsection{Nonlinear Fokker-Planck PDEs and Comparison with Strong Baselines}
Next, we present further results comparing our method based on PINN and other strong non-PINN baselines \cite{beck2021deep, han2018solving, raissi2018forward} for several nonlinear PDEs without analytical solutions.
Thus, we adopt other methods' settings and evaluate the model on one test point, where we show that SDGD-PINNs are state-of-the-art (SOTA) models that still outperform other competitors.
Concretely, the following nonlinear PDEs are considered:
\begin{itemize}
\item Allen-Cahn equation.
\begin{equation}
\partial_t u(\bx,t) = \Delta u(\bx,t) + u(\bx,t) - u(\bx,t)^3, \quad (\bx,t) \in \mathbb{R}^d \times [0, 0.3],
\end{equation}
with the initial condition $u(\bx, t=0) = \arctan(\max_i \bx_i)$. We aim to approximate the solution's true value on the one test point $(\bx, t) = (0,\cdots,0,0.3)$.
\item Semilinear heat equation.
\begin{equation}
\partial_t u(\bx,t) = \Delta u(\bx,t) + \frac{1-u(\bx,t)^2}{1+u(\bx,t)^2}, \quad (\bx,t) \in \mathbb{R}^d \times [0, 0.3],
\end{equation}
with the initial condition $u(\bx, t=0) = 5 / (10 + 2\Vert\bx\Vert^2)$. We aim to approximate the solution's true value on the one test point $(\bx, t) = (0,\cdots,0,0.3)$.
\item Sine-Gordon equation.
\begin{equation}
\partial_t u(\bx,t) = \Delta u(\bx,t) + \sin\left(u(\bx,t) \right), \quad (\bx,t) \in \mathbb{R}^d \times [0, 0.3],
\end{equation}
with the initial condition $u(\bx, t=0) = 5 / (10 + 2\Vert\bx\Vert^2)$. We aim to approximate the solution's true value on the one test point $(\bx, t) = (0,\cdots,0,0.3)$.
\end{itemize}
The reference values of these PDEs on the test point are computed by the multilevel Picard method \cite{becker2020numerical, hutzenthaler2020overcoming} with sufficient theoretical accuracy. The exact solutions of these nonlinear PDEs are not solvable analytically and are highly nontrivial and nonseparable. The residual points of PINN are chosen along the trajectories of the stochastic processes that those PDEs correspond to:
\begin{equation}
t \sim \text{Unif}[0, 0.3], \bx \sim \mathcal{N}(0, (0.3-t) \cdot \boldsymbol{I}_{d\times d}).
\end{equation}
More training details are as follows. The model is a 4-layer multi-layer perceptron network with 1024 hidden units, which is trained via Adam \cite{kingma2014adam} for 10K epochs, with an initial learning rate 1e-3 which decays exponentially with exponent 0.9995. We discretize the time into a stepsize of 0.015 and select boundary and residual points from $|B|=100$ random SDE trajectories at each Adam epoch. We assign unity weight to the residual loss and 20 weight to the boundary loss, where in the latter, we fit the initial condition and its first-order derivative concerning $\bx$. We utilize Algorithm \ref{algo:2} with a minibatch of $|I|=|J|=10$ dimensions to randomly sample the second-order derivatives along various dimensions in the Laplacian in all the PDEs. We repeat our experiment 5 times with 5 independent random seeds.

The results for these three nonlinear PDEs are shown in Table \ref{tab:nonlinear_more}. PINN surpasses other methods in all dimensions and for all types of PDEs. Whether it is predicting the entire domain or the value at a single point, PINN is capable of handling both scenarios. This is attributed to the mesh-free nature of PINN and its powerful interpolation capabilities.

\subsection{SDGD Accelerates PINN's Adversarial Training in HJB PDEs}

In Wang et al. \cite{wang20222}, it was demonstrated that the class of Hamilton-Jacobi-Bellman (HJB) equation \cite{han2018solving,wang20222} could only be effectively solved using adversarial training, which approximates the $L^\infty$ loss. Specifically, they consider the HJB equation with linear-quadratic-Gaussian (LQG) control:
\begin{equation}
\begin{aligned}
&\partial_t u(\bx, t) + \Delta u(\bx, t) - \Vert \nabla_{\bx} u(\bx,t) \Vert^2 = 0, \quad \bx \in \mathbb{R}^d, t\in[0,1]\\
&u(\bx,T)=g(\bx),
\end{aligned}
\end{equation} 
where $g(\bx)$ is the terminal condition/cost to be chosen. We can use the exact solution for testing and benchmarking PINNs' performances:
\begin{equation}
u(\bx,t) = -\log\left(\int_{\mathbb{R}^d}(2\pi)^{-d/2}\exp(-\Vert \boldsymbol{y} \Vert^2/2)\exp(- g(\bx - \sqrt{2(1-t)}\boldsymbol{y}))d\boldsymbol{y}\right).
\end{equation}
We choose the following cost functions 
\begin{itemize}
\item Logarithm cost:
\begin{equation}
g(\bx) = \log\left(\frac{1+\Vert \bx \Vert^2}{2}\right)
\end{equation}
following many previous papers \cite{han2018solving,he2023learning,raissi2018forward, wang20222} to obtain the \textbf{HJB-Log} case. We demonstrate that SDGD is the state-of-the-art (SOTA)
high-dimensional PDE solver, where SDGD outperforms these previous methods on this classical benchmark PDE, especially the closely related PINN-based approach in \cite{he2023learning,wang20222}.
\item Rosenbrock cost:
\begin{equation}
g(\bx) = \log\left(\frac{1+\sum_{i=1}^{d-1}\left[c_{1,i}( \bx_{i} - \bx_{i+1})^2 + c_{2,i}\bx_{i+1}^2\right]}{2}\right),
\end{equation}
where $c_{1,i},c_{2,i} \sim \text{Unif}[0.5, 1.5]$. This nonconvex cost function will lead to a PDE exact solution that is anisotropic and asymmetric along dimensions, highly coupled and inseparable. Notable, in the exact solution, all pairs of variables, both $\bx_i$ and $\bx_{i+1}$, exhibit pairwise interactions and are highly coupled. The corresponding HJB PDE has an effective dimension of $d + 1$, i.e., it can not be reduced to low-dimensional subproblems. This case is called the \textbf{HJB-Rosenbrock} case.
\end{itemize}

The solution's integration in HJB PDE cannot be analytically solved, necessitating Monte Carlo integration for approximation. We use the relative $L_2$ error approximating $u$ as the evaluation metric for this equation.

In this example of the HJB equation with LQG control, we decompose the residual prediction of PINN along each dimension as follows:
\begin{equation}
\partial_t u(\bx, t) + \Delta u(\bx, t) - \mu \Vert \nabla_{\bx} u(\bx,t) \Vert^2 = \frac{1}{d}\sum_{i=1}^d \left\{\partial_t u(\bx, t)+ d\frac{\partial^2u}{\partial \bx_i^2}-\mu \Vert \nabla_{\bx} u(\bx,t) \Vert^2 \right\}.
\end{equation}

Despite its ability to maintain a low maximal memory cost during training, adversarial training approximating the $L^\infty$ loss is widely recognized for its slow and inefficient nature \cite{shafahi2019adversarial}, which poses challenges in applying it to high-dimensional HJB equations with LQG control. Adversarial training involves optimizing two loss functions: one for the PINN parameters, minimizing the loss through gradient descent, and another for the residual point coordinates, maximizing the PINN loss to approximate the $L^\infty$ loss. This adversarial minimax process, known as adversarial training, is computationally demanding, often requiring multiple rounds of optimization before the resulting residual points can effectively optimize the PINN parameters.

The current state-of-the-art (SOTA) research by Wang et al. \cite{wang20222} and He et al. \cite{he2023learning} has successfully scaled adversarial training to 250 dimensions. In this study, we use SDGD to enhance the scalability and efficiency of adversarial training of PINN in high-dimensional HJB equations.

The implementation details are given as follows. We use a four-layer PINN with 1024 hidden units, trained by an Adam optimizer \cite{kingma2014adam} with a learning
rate = 1e-3 at the beginning and decay linearly to zero. We modified the PINN structure
to make the terminal condition automatically satisfied \cite{lu2021physics}:
\begin{equation}
{u}^{\text{HJB-LQG}}_\theta(\bx, t) = u_\theta(\bx, t) (1-t)+ g(\bx),
\end{equation}
where $g(\bx)$ is the terminal condition.  Thus, we only need to focus on the residual loss.
We conduct training for a total of 10,000 epochs. In all dimensions, we employ Algorithm \ref{algo:3} with $|I|=100$ batches of PDE terms (dimensions) in the loss function for gradient descent and Algorithm \ref{algo:3} with $|I|=10$ batches of PDE terms (dimensions) in the loss function for adversarial training. For both our methods and the baseline methods proposed by Wang et al. \cite{wang20222} and He et al. \cite{he2023learning}, we randomly sample $|B|=100$ residual points per epoch, and 20K test points, both sampling from the distribution $t \sim \text{Unif[0,1]}, \bx \sim \mathcal{N}(0, \boldsymbol{I}_{d\times d})$. For generating the reference value for testing, we conduct a Monte Carlo simulation with 1e5 samples to estimate the integral in the exact solution. We repeat our experiment 5 times with 5 independent random seeds.

The results of adversarial training on the two HJB equations are presented in Table \ref{tab:adv}. For instance, in the 250D HJB-LQG case, achieving a relative $L_2$ error of 1e-2 using \cite{wang20222} requires a training time of 1 day. In contrast, our approach achieves a superior $L_2$ error in just 3.5 hours. In the 1000D case, full batch GD's adversarial training is exceedingly slow, demanding over 5 days of training time. However, our method effectively mitigates this issue and attains a relative $L_2$ error of 5.852E-3 in 217 minutes. Moreover, our approach enables adversarial training even in higher-dimensional HJB equations, specifically 10,000 and 100,000 dimensions, with resulting relative $L_2$ errors of approximately 5E-2. For the HJB-Rosenbrock case, our method is also consistently better than others, demonstrating SDGD's strong capability to deal with a complicated, inseparable, effectively high-dimensional, non-convex, and highly coupled solution.

\begin{table}[htbp]
\centering
\begin{tabular}{|c|cc|cc|cc|}
\hline
\multicolumn{7}{|c|}{HJB-Log Results for $u$}\\\hline
        & \multicolumn{2}{c|}{Wang et al. \cite{wang20222}}                            & \multicolumn{2}{c|}{He et al. \cite{he2023learning}}                              & \multicolumn{2}{c|}{SDGD (Ours)}                           \\ \hline
Dim     & \multicolumn{1}{c|}{Time}                & Rel. $L_2$ Error & \multicolumn{1}{c|}{Time}                & Rel. $L_2$ Error & \multicolumn{1}{c|}{Time}        & Rel. $L_2$ Error \\ \hline
250     & \multicolumn{1}{c|}{38 hours}            & 1.182E-2         & \multicolumn{1}{c|}{12 hours}            & 1.370E-2         & \multicolumn{1}{c|}{210 minutes} & 6.147E-3         \\ \hline
1,000   & \multicolumn{1}{c|}{\textgreater 5 days} & N.A.             & \multicolumn{1}{c|}{\textgreater 5 days} & N.A.             & \multicolumn{1}{c|}{217 minutes} & 5.852E-3         \\ \hline
10,000  & \multicolumn{1}{c|}{OOM}                 & N.A.             & \multicolumn{1}{c|}{OOM}                 & N.A.             & \multicolumn{1}{c|}{481 minutes} & 4.926E-2         \\ \hline
100,000 & \multicolumn{1}{c|}{OOM}                 & N.A.             & \multicolumn{1}{c|}{OOM}                 & N.A.             & \multicolumn{1}{c|}{855 minutes} & 4.852E-2         \\ \hline
\multicolumn{7}{|c|}{HJB-Rosenbrock Results for $u$}\\\hline
        & \multicolumn{2}{c|}{Wang et al. \cite{wang20222}}                            & \multicolumn{2}{c|}{He et al. \cite{he2023learning}}                              & \multicolumn{2}{c|}{SDGD (Ours)}                           \\ \hline
Dim     & \multicolumn{1}{c|}{Time}                & Rel. $L_2$ Error & \multicolumn{1}{c|}{Time}                & Rel. $L_2$ Error & \multicolumn{1}{c|}{Time}        & Rel. $L_2$ Error \\ \hline
250     & \multicolumn{1}{c|}{38 hours}            &    1.207E-2    & \multicolumn{1}{c|}{12 hours}            &     1.413E-2  & \multicolumn{1}{c|}{210 minutes} &    5.419E-3      \\ \hline
1,000   & \multicolumn{1}{c|}{\textgreater 5 days} & N.A.             & \multicolumn{1}{c|}{\textgreater 5 days} & N.A.             & \multicolumn{1}{c|}{217 minutes} &   4.153E-3\\ \hline
10,000  & \multicolumn{1}{c|}{OOM}                 & N.A.             & \multicolumn{1}{c|}{OOM}                 & N.A.             & \multicolumn{1}{c|}{481 minutes} &   4.168E-2  \\ \hline
100,000 & \multicolumn{1}{c|}{OOM}                 & N.A.             & \multicolumn{1}{c|}{OOM}                 & N.A.             & \multicolumn{1}{c|}{855 minutes} &   4.091E-2  \\ \hline
\end{tabular}
\caption{Relative $L_2$ error results, running time across different dimensions for the baselines and ours, for two HJB equations requiring adversarial training of PINNs. In all dimensions, our method significantly outperforms the other two baselines in terms of accuracy and speed. The baselines, employing full batch gradient descent, experience slower speed and increased memory requirements in higher-dimensional scenarios. In contrast, our approach utilizes Algorithm \ref{algo:3}, which employs stochastic gradient descent over PDE terms, resulting in faster computation and relatively lower memory consumption.}
\label{tab:adv}
\end{table}

Furthermore, although the HJB equation is also parabolic and non-PINN traditional methods like \cite{beck2021deep,han2018solving,han2017deep} can be used, we test on 20,000 points. SDGD with PINN trains a surrogate model to predict these 20,000 points after one training process. However, these traditional methods can only predict one point after one training instance, so they require 20,000 rounds of separate inferences, which is too costly.

\subsection{Schr\"{o}dinger Equation}
Here, we scale up the Tensor Neural Networks (TNNs) \cite{wang2022tensor,wang2022solving} for solving very high-dimensional Schr\"{o}dinger equations. Due to its unique characteristics, the Schr\"{o}dinger equation cannot be addressed by the traditional PINN framework and requires a specialized TNN network structure and corresponding loss function for high-dimensional eigenvalue problems. However, our method can still be flexibly integrated into this framework for scaling up Schr\"{o}dinger equations, demonstrating the versatility of our approach. In contrast, other methods specifically designed for high-dimensional equations exhibit much greater limitations.

\subsubsection{Tensor Neural Network (TNN) for Solving Schr\"{o}dinger Equation}
In general, the Schr\"{o}dinger equation can be viewed as a high-dimensional eigenvalue problem.
\begin{equation}\label{eq:schrodinger}
\begin{aligned}
-\Delta u(\bx) + v(\bx) u(\bx) &= \lambda u(\bx), \quad \bx
\in \Omega,\\
u(\bx) &= 0, \quad \bx \in \partial \Omega,
\end{aligned}
\end{equation}
where $ \Omega = \Omega_1 \times \Omega_2 \times \cdots \times \Omega_d \subset \mathbb{R}^d$ and each $\Omega_i = (a_i, b_i)$, $i = 1, \cdots, d$ is a bounded interval in $\mathbb{R}$, $v(\bx)$ is a known potential function, $u(\bx)$ is the unknown solution and $\lambda$ is the unknown eigenvalue of the high-dimensional eigenvalue problem. This problem can be addressed via the variational principle:
\begin{align}
\lambda = \min_{\Phi(\bx)}\frac{\int_\Omega |\nabla \Phi(\bx)|^2d\bx + \int_\Omega v(\bx)\Phi(\bx)^2d\bx}{\int_\Omega \Phi(\bx)^2d\bx}.
\end{align}
The integrals are usually computed via the expensive quadrature rule, e.g., Gaussian quadrature:
\begin{equation}
\int_\Omega \Phi(\bx) d\bx \approx \sum_{n \in \mathcal{N}} w^{(n)}\Phi(\bx^{(n)}),
\end{equation}
where $\mathcal{N}$ is the index set for the quadrature points. The size $\mathcal{N}$ grows exponentially as the dimension increases, incurring the curse of dimensionality.
Hence, traditional methods for solving Schr\"{o}dinger equations suffer from the curse of dimensionality due to the exponential growth of the grid size in the quadrature for numerical integral in the equation's variational form. So, traditional methods cannot be used even in the lowest 100-dimensional case in our experiment.
To overcome this issue, the Tensor Neural Network (TNN) adopts a separable function network structure to reduce the computational cost associated with integration:
\begin{align}
\Phi(\bx;\theta) = \sum_{j=1}^p \prod_{i=1}^d\phi_{j}(\bx_i;\theta_i),
\end{align}
where $\bx_i$ is the $i$th dimension of the input point $\bx$, and $\phi_{j}(\bx_i;\theta_i)$ is the $j$th output of the sub-wavefunction network parameterized by $\theta_i$, and $p$ is the predefined rank of the TNN model.
The integral by quadrature rule of the TNN can be calculated efficiently:
\begin{equation}
\int_\Omega \Phi(\bx; \theta) d\bx = \sum_{j=1}^p  \int_\Omega \prod_{i=1}^d\phi_{j}(\bx_i;\theta_i) d\bx = \sum_{j=1}^p  \prod_{i=1}^d\int_{\Omega_i} \phi_{j}(\bx_i;\theta_i) d\bx_i \approx \sum_{j=1}^p  \prod_{i=1}^d\sum_{n_i \in \mathcal{N}_i} w^{(n_i)}\phi_j(x^{(n_i)};\theta_i),
\end{equation}
where $\mathcal{N}_i$ is the index set for the $i$th dimensional numerical integral. Thus, the numerical integrals can be computed along each axes separately to reduce the exponential cost to linear cost.
We refer the readers to \cite{wang2022tensor, wang2022solving} for more details.
With the efficient quadrature achieved by TNN's separable structure, the loss function to be minimized is
\begin{equation}
\min_\theta\ell(\theta) = \frac{ \sum_{n \in \mathcal{N}} w^{(n)}|\nabla \Phi(\bx^{(n)}; \theta)|^2 +  \sum_{n \in \mathcal{N}} w^{(n)}v(\bx^{(n)})\Phi(\bx^{(n)}; \theta)^2}{ \sum_{n \in \mathcal{N}} w^{(n)} \Phi(\bx^{(n)}; \theta)^2},
\end{equation}
where $\Phi(\bx; \theta)$ is the neural network wave function with trainable parameters $\theta$.

The primary computational bottleneck of TNNs for this problem lies in the first-order derivatives in the loss function, while other terms only contain zero-order terms with lower computational and memory costs.
Since TNN is a separable architecture, the $d$-dimensional Schrödinger equation requires computing the first-order derivatives and performing quadrature integration for each of the $d$ subnetworks. Consequently, the computational cost scales linearly with the dimension. In comparison to previous examples of PINN solving PDEs, TNN incurs larger memory consumption for first-order derivatives due to the non-sharing of network parameters across different dimensions, i.e., $d$ independent first-order derivatives of $d$ distinct subnetworks are required, resulting in increased computational requirements.

Mathematically, the computational and memory bottleneck is the gradient with respect to the first-order term of the neural network wave function $\Phi(\bx; \theta)$
\begin{equation}
\text{grad}(\theta) = \sum_{n \in \mathcal{N}} w^{(n)}\frac{\partial}{\partial\theta}|\nabla \Phi_\theta(\bx^{(n)})|^2 = \sum_{n \in \mathcal{N}} w^{(n)}\left(\sum_{i=1}^d\frac{\partial}{\partial\theta}\left|\frac{\partial}{\partial \bx_i} \Phi_\theta(\bx^{(n)})\right|^2\right).
\end{equation}
By employing the proposed SDGD approach, we can simplify the entire gradient computation:
\begin{equation}
\text{grad}_I(\theta) = \sum_{n \in \mathcal{N}} w^{(n)}\left(\frac{d}{|I|}\sum_{i \in I}\frac{\partial}{\partial\theta}\left|\frac{\partial}{\partial \bx_i} \Phi_\theta(\bx^{(n)})\right|^2\right),
\end{equation}
where $I \subset \{1,2,\cdots,d\}$ is a random index set.
Therefore, we can sample the entire gradient at the PDE term level to reduce memory and computational costs, and the resulting stochastic gradient is unbiased, i.e., $\mathbb{E}_I[\text{grad}_I(\theta)] =\text{grad}(\theta)$ ensuring the convergence of our method. The gradient accumulation can also be done for large-scale problems by our method; see Section \ref{sec:GAPC} for details.

It is important to note that our method is more general compared to the traditional approach of SGD solely based on quadrature points. In particular, in extremely high-dimensional cases, even computing the gradient for a single quadrature point may lead to an out-of-memory (OOM) error. This is because the traditional approach requires a minimum batch size of $d$ PDE terms and 1 quadrature point, where $d$ is the dimensionality of the PDE. However, our method further reduces the computational load per batch to only 1 PDE term and 1 quadrature point.

\subsubsection{Experimental Setup}
We consider the \textbf{Coupled Quantum Harmonic Oscillator (CQHO)} potential function: $v(\bx) = \sum_{i=1}^d \bx_i^2 - \sum_{i=1}^{d-1} \bx_i \bx_{i+1}$, where all pairs of variables, both $\bx_i$ and $\bx_{i+1}$, exhibit pairwise
interactions and are highly coupled. The original problem is defined on an infinite interval. Therefore, we truncate the entire domain to $[-5,5]^d$. The exact smallest eigenvalue is $\lambda = \sum_{i=1}^d\sqrt{1-\cos\left(\frac{i\pi}{d+1}\right)}$. We use the same model structure and hyperparameters as Wang et al. \cite{wang2022tensor}. For this problem, we test the effectiveness of the Gradient Accumulation method in Section \ref{sec:GAPC} based on our further decomposition of PINN's gradient.

The test metric we report is the $L_1$ relative error of the minimum eigenvalue, given by $\frac{| \lambda_{\text{true}} - \lambda_{\text{approx}} |}{| \lambda_{\text{true}} |}$. Due to the diminishing nature of the eigenfunctions in high dimensions, we only report the accuracy of the eigenvalues. It is important to note that the eigenvalues carry physical significance as they represent the ground state energy.

\begin{figure}
\centering
\includegraphics[scale=0.57]{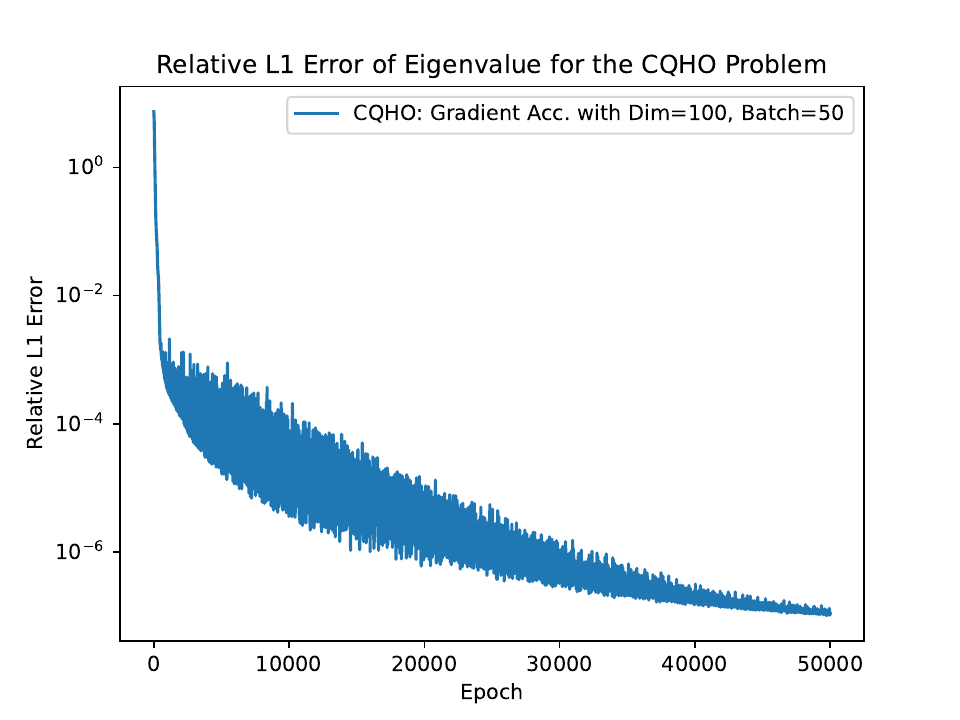}
\includegraphics[scale=0.57]{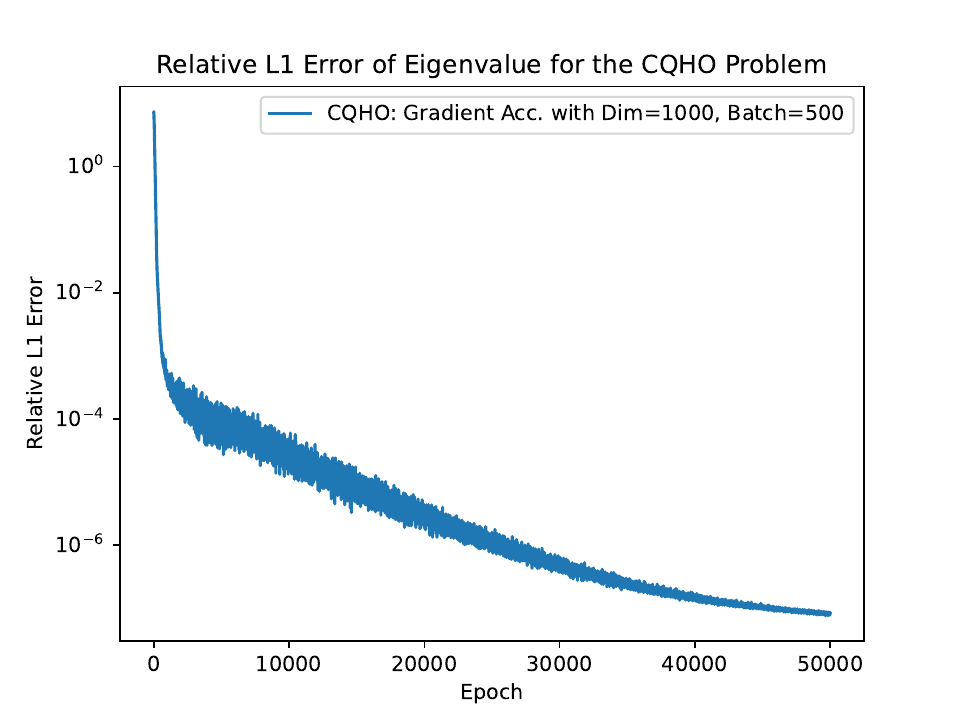}
\includegraphics[scale=0.57]{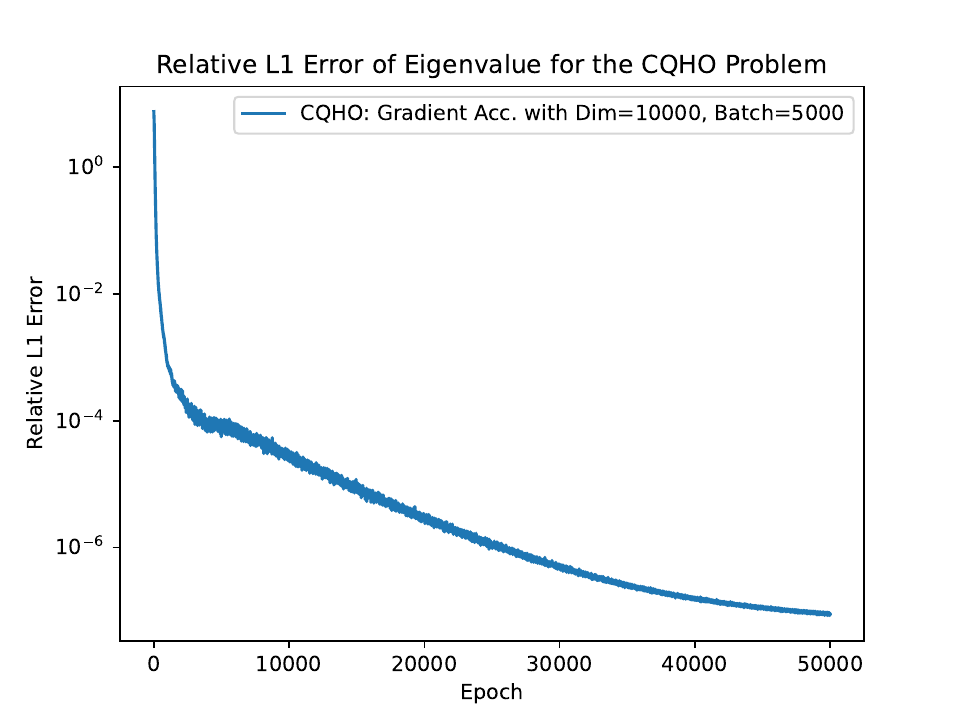}
\includegraphics[scale=0.57]{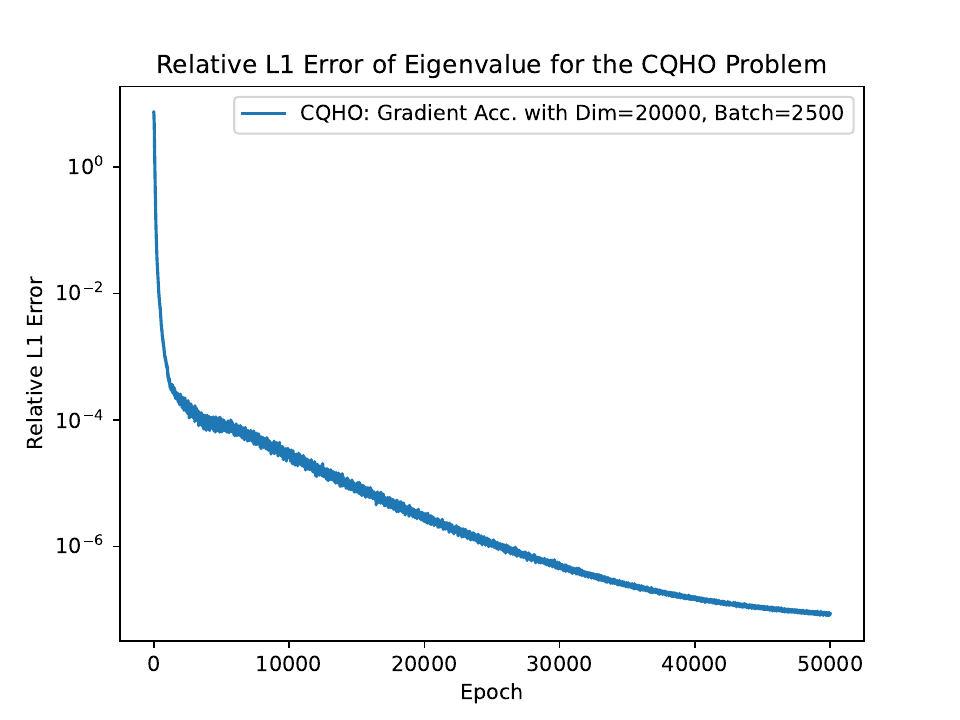}
\caption{Coupled quantum harmonic oscillator (CQHO) problem: convergence curve of our algorithm (gradient accumulation) under different dimensions.}
\label{fig:qho}
\end{figure}

\subsubsection{Experimental Results}
On the CQHO problem, our gradient accumulation method achieves training in $2*10^4$ dimensions within the limited GPU memory in Figure \ref{fig:qho}. In all dimensions, SDGD achieves a stable relative error of less than 1e-6. Note that gradient accumulation performs theoretically the same as full batch GD, so we do not compare their results, but the former can save GPU memory, thus scaling up PDEs to higher dimensions. These results demonstrate the scalability of our method for PINN and PDE problems and highlight its generality, making it applicable to various physics-informed machine-learning problems.

During the early stages of optimization, we observe quick convergence of TNNs. Since the model parameters are initialized randomly, there is more room for improvement. At this point, the gradients of the loss function with respect to the parameters can be relatively large. Larger gradients result in more significant updates to the parameters, leading to faster convergence.

Despite the satisfactory convergence results, these problems demonstrate one shortcoming of our proposed approach. Our method is primarily designed to address the computational bottleneck arising from high-dimensional differentiations in PDEs. However, if the problem itself is large-scale and does not stem from differential equations, our method cannot be directly applied. Specifically, in the case of the Schrödinger equation, due to the separable nature of its network structure, we employ a separate neural network for each dimension. This results in significant memory consumption, even without involving differentials. Additionally, in this problem, we cannot sample the points of integration / residual points because there is an integral term in the denominator. If we sample the integral in the denominator, the resulting stochastic gradient will be biased, leading to poor convergence. Therefore, for such problems, designing an improved unbiased stochastic gradient remains an open question.

\section{Conclusions}
CoD has been an open problem for many decades, but recent progress has been made by several research groups using deep learning methods for specific classes of PDEs \cite{beck2021deep, han2018solving, raissi2018forward}. Herein, we proposed a general method based on the mesh-free PINNs method by introducing a new type of stochastic dimension gradient descent or SDGD as a generalization of the largely successful SGD method over subsets of the training set. In particular, we claim the following advantages of the SDGD-PINNs method over other related methods:

\textbf{Generality to Arbitrary PDEs}. Most existing approaches \cite{beck2021deep, han2018solving, raissi2018forward} are restricted to specific forms of parabolic PDEs. They leverage the connection between parabolic PDEs and stochastic processes, employing Monte Carlo simulations of stochastic processes to train their models. Consequently, their methods are limited to a subset of PDEs. In contrast, our approach is based on PINNs capable of handling arbitrary PDEs. Notably, we can address challenging cases such as the wave equation, biharmonic equation, Schrödinger equation, and other similar examples, while \cite{beck2021deep, han2018solving, raissi2018forward} cannot.

\textbf{Prediction on the Entire Domain}. Furthermore, existing methods \cite{beck2021deep, han2018solving, raissi2018forward} can only predict the values of PDE solutions at a single test point during each training instance. This limitation arises from the necessity of setting the starting point of the stochastic process (i.e., the test point for the PDE solution) before conducting Monte Carlo simulations. Consequently, the computational cost of their methods \cite{beck2021deep, han2018solving, raissi2018forward}  increases greatly as the number of test points grows. In contrast, our approach, based on PINNs, allows for predictions across the entire domain in a single training instance, achieving a ``once for all" capability.

\textbf{Dependency on the Mesh}. Moreover, precisely because these methods \cite{beck2021deep, han2018solving, raissi2018forward} require Monte Carlo simulations of stochastic processes, they necessitate temporal discretization. This introduces additional parameters and requires a sufficiently high discretization precision to obtain accurate solutions. For PDE problems that span long durations, these methods undoubtedly suffer from the burden of temporal discretization. However, our PINN-based approach can handle long-time PDEs effortlessly. Since PINNs are mesh-free methods, the training points can be randomly sampled. The interpolation capability of PINNs ensures their generalization performance on long-time PDE problems.

\textbf{PINNs can also be adapted for prediction at one point}. Although PINNs offer advantages such as being mesh-free and capable of predicting the entire domain, they still perform remarkably well in settings where other methods focus on, i.e., single-point prediction. Specifically, if our interest lies in the value of the PDE at a single point, we can exploit the relationship between the PDE and stochastic processes. This single point corresponds to the initial point of the stochastic process, and the PDE corresponds to a specific stochastic process, such as the heat equation corresponding to simple Brownian motion. In this case, we only need to use the snapshots obtained from the trajectory of this stochastic process as the residual points for PINN.

We have proved the necessary theorems that show SDGD leads to an unbiased estimator and that it converges under mild assumptions, similar to the vanilla SGD over collocation points and mini-batches. SDGD can also be used in physics-informed neural operators such as the DeepONet \cite{lu2019deeponet,goswami2022physicsinformed} and the FNO \cite{li2021fourier,li2021physics} and can be extended to general regression and possibly classification problems in very high dimensions.

The potential limitation of SDGD is that it requires a relatively larger batch size for the dimension when dealing with extremely high-dimensional PDEs. A too-small batch size may incur high stochastic gradient variance that slows down the convergence. In addition, for PDEs not following the boundary+residual forms, like the Schrödinger equation, a special loss function is generally required, which may cause the core memory bottleneck to not be in the dimension that SDGD focuses on. This causes the minimum memory that SDGD can obtain to grow faster with dimensions.

In summary, the new SDGD-PINN framework is a paradigm shift in the way we solve high-dimensional PDEs as it can tackle arbitrary nonlinear PDEs, of any dimension providing high accuracy at extremely fast speeds.

\section*{Acknowledgments}
The work of ZH and KK was supported by the Singapore Ministry of Education Academic Research Fund Tier 1 grant (Grant number: T1 251RES2207). 
The work of KS and GEK was supported by the MURI-AFOSR FA9550-20-1-0358 projects and by the DOE SEA-CROGS project (DE-SC0023191). GEK was also supported by the ONR Vannevar Bush Faculty Fellowship (N00014-22-1-2795).

\appendix

\section{Proof}
\subsection{Proof of Theorem \ref{thm:unbiased}}\label{appendix:unbiased}
The unbiasedness of the stochastic gradient in Algorithm \ref{algo:1} can be derived as follows:
\begin{equation}
\begin{aligned}
\mathbb{E}_I \left[\text{grad}_I(\theta)\right] &= \left(\sum_{i=1}^{N_\mathcal{L}}\mathcal{L}_i u_\theta(\bx) - R(\bx)\right)\mathbb{E}_I \left[\frac{N_{\mathcal{L}}}{|I|}{\sum_{i \in I}} \frac{\partial}{\partial\theta}\mathcal{L}_iu_\theta(\bx)\right]\\
&=\left(\sum_{i=1}^{N_\mathcal{L}}\mathcal{L}_i u_\theta(\bx) - R(\bx)\right)\left[{\sum_{i =1}^{N_{\mathcal{L}}}} \frac{\partial}{\partial\theta}\mathcal{L}_iu_\theta(\bx)\right]\\
&= \text{grad}(\theta).
\end{aligned}
\end{equation}
The unbiasedness of the stochastic gradient in Algorithm \ref{algo:2} can be derived as follows:
\begin{equation}
\begin{aligned}
\mathbb{E}_{I,J} \left[\text{grad}_{I,J}(\theta) \right]&=\left(\mathbb{E}_{J}\left[\frac{N_{\mathcal{L}}}{|J|}\sum_{j \in J} \mathcal{L}_j u_\theta(\bx)\right] - R(\bx)\right)\mathbb{E}_I\left[\frac{N_{\mathcal{L}}}{|I|}\sum_{i\in I}\frac{\partial}{\partial\theta} \mathcal{L}_iu_\theta(\bx)\right]\\
&=\left(\sum_{i=1}^{N_\mathcal{L}}\mathcal{L}_i u_\theta(\bx) - R(\bx)\right)\left[{\sum_{i =1}^{N_{\mathcal{L}}}} \frac{\partial}{\partial\theta}\mathcal{L}_iu_\theta(\bx)\right]\\
&= \text{grad}(\theta).
\end{aligned}
\end{equation}

\subsection{Proof of Theorem \ref{thm:variance_1}}\label{appendix:variance_1}
\begin{lemma}\label{lemma:1}
Suppose that we have $N$ fixed numbers $a_1, a_2,\cdots,a_N$ and we choose $k$ random numbers $\{X_i\}_{i=1}^{k}$ from them, i.e., each $X_i$ is a random variable and $X_i = a_n$ with probability $1 / N$ for all $n \in \{1,2,\cdots,N\}$, and $X_i$ are independent. Then the variance of the unbiased estimator $\sum_{i=1}^k X_i / k$ for $\overline{a} = \sum_{n=1}^N a_n$ is $\frac{1}{kn}\sum_{i=1}^n(a_i - \Bar{a})^2$.
\end{lemma}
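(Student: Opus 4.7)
The plan is a direct second-moment calculation, exploiting the fact that the $X_i$ are i.i.d. (since the sampling is with replacement, as the independence hypothesis makes explicit). First I would establish the per-sample moments: by definition each $X_i$ takes value $a_n$ with probability $1/N$, so
\begin{equation*}
\mathbb{E}[X_i] \;=\; \frac{1}{N}\sum_{n=1}^{N} a_n \;=\; \bar a,
\qquad
\mathbb{E}[X_i^2] \;=\; \frac{1}{N}\sum_{n=1}^{N} a_n^2 .
\end{equation*}
Subtracting gives the single-sample variance
\begin{equation*}
\mathbb{V}[X_i] \;=\; \mathbb{E}[X_i^2] - \bar a^{\,2} \;=\; \frac{1}{N}\sum_{n=1}^{N}(a_n - \bar a)^2 .
\end{equation*}
This simultaneously verifies that $\tfrac{1}{k}\sum_{i=1}^{k} X_i$ is unbiased for $\bar a$, which matches the claim (reading $\bar a$ as the population mean; the displayed formula for $\bar a$ in the statement appears to omit the factor $1/N$, but only the mean interpretation makes the estimator unbiased).

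Next I would use independence of the $X_i$'s to split the variance of the sum: since variance is additive over independent summands and scales by the square of a constant,
\begin{equation*}
\mathbb{V}\!\left[\frac{1}{k}\sum_{i=1}^{k} X_i\right]
\;=\; \frac{1}{k^2}\sum_{i=1}^{k}\mathbb{V}[X_i]
\;=\; \frac{1}{k}\cdot\frac{1}{N}\sum_{n=1}^{N}(a_n - \bar a)^2 ,
\end{equation*}
which is exactly the claimed expression (the indices $n$ in the stated bound seem to be placeholders for $N$).

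Honestly there is no real obstacle: the only thing to be careful about is the sampling model. Because the hypothesis states that the $X_i$ are independent, we are in the with-replacement regime, and no finite-population correction factor $(N-k)/(N-1)$ appears; if one instead wanted to model sampling without replacement, the additivity step would require handling nonzero covariances $\mathrm{Cov}(X_i,X_j)$ and would produce the corrected bound. Since this lemma is invoked downstream in the variance analysis of $g_{B,I}(\theta)$ in Theorem \ref{thm:variance_1}, the i.i.d.\ version is exactly what is needed, so no refinement is required.
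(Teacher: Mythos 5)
Your proof is correct and takes essentially the same route as the paper: compute $\mathbb{V}[X_i]=\frac{1}{N}\sum_{n=1}^N(a_n-\bar a)^2$ for a single draw and then use independence to get $\mathbb{V}[\tfrac{1}{k}\sum_i X_i]=\tfrac{1}{k}\mathbb{V}[X_i]$. Your parenthetical remarks about the typos in the statement (the missing $1/N$ in the definition of $\bar a$ and the stray lowercase $n$ in place of $N$) are also accurate.
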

\begin{proof}[Proof of Lemma \ref{lemma:1}]
Since the samples $X_i$ are i.i.d.,
\begin{equation}
\begin{aligned}
\mathbb{V}\left[\frac{\sum_{i=1}^k X_i}{k}\right]
&= \frac{1}{k}\mathbb{V}[X_i] = \frac{1}{kn}\sum_{i=1}^n(a_i - \Bar{a})^2
\end{aligned}
\end{equation}
\end{proof}
\begin{proof}[Proof of Theorem \ref{thm:variance_1}]
We assume that the total full batch is with $N_r$ residual points $\{\bx_i\}_{i=1}^{N_r}$ and $N_{\mathcal{L}}$ PDE terms $\mathcal{L} = \sum_{i=1}^{N_{\mathcal{L}}}\mathcal{L}_i$, then the residual loss of the PINN is
\begin{equation}
\frac{1}{2N_rN^2_{\mathcal{L}}}\sum_{i=1}^{N_r}\left(\mathcal{L}u_\theta(\bx_i) - R(\bx_i)\right)^2,
\end{equation}
where we normalize over both the number of residual points and the number of PDE terms. The full batch gradient is:
\begin{equation}
\begin{aligned}
g(\theta) &= \frac{1}{N_rN^2_{\mathcal{L}}}\sum_{i=1}^{N_r}\left(\mathcal{L}u_\theta(\bx_i) - R(\bx_i)\right)\frac{\partial}{\partial \theta}\mathcal{L}u_\theta(\bx_i)\\
&= \frac{1}{N_rN^2_{\mathcal{L}}}\sum_{i=1}^{N_r}\left(\mathcal{L}u_\theta(\bx_i) - R(\bx_i)\right)\left(\sum_{j=1}^{N_{\mathcal{L}}}\frac{\partial}{\partial \theta}\mathcal{L}_ju_\theta(\bx_i)\right)\\
&= \frac{1}{N_rN^2_{\mathcal{L}}}\sum_{i=1}^{N_r}\sum_{j=1}^{N_{\mathcal{L}}}\left(\mathcal{L}u_\theta(\bx_i) - R(\bx_i)\right)\left(\frac{\partial}{\partial \theta}\mathcal{L}_ju_\theta(\bx_i)\right)\\
&= \frac{1}{N_rN^2_{\mathcal{L}}}\sum_{i=1}^{N_r}\left(\left(\sum_{j=1}^{N_{\mathcal{L}}}\mathcal{L}_ju_\theta(\bx_i)\right) - R(\bx_i)\right)\left(\sum_{j=1}^{N_{\mathcal{L}}}\frac{\partial}{\partial \theta}\mathcal{L}_ju_\theta(\bx_i)\right).
\end{aligned}
\end{equation}
From the viewpoint of Algorithm \ref{algo:1}, the full batch gradient is the mean of $N_rN_{\mathcal{L}}$ terms:
\begin{equation}
g(\theta) = \frac{1}{N_rN^2_{\mathcal{L}}}\sum_{i=1}^{N_r}\sum_{j=1}^{N_{\mathcal{L}}}\left(\mathcal{L}u_\theta(\bx_i) - R(\bx_i)\right)\left(\frac{\partial}{\partial \theta}\mathcal{L}_ju_\theta(\bx_i)\right):= \frac{1}{N_rN^2_{\mathcal{L}}}\sum_{i=1}^{N_r}\sum_{j=1}^{N_{\mathcal{L}}}g_{i,j}(\theta),
\end{equation}
where we denote 
\begin{equation}
g_{i,j}(\theta) = \left(\mathcal{L}u_\theta(\bx_i) - R(\bx_i)\right)\left(\frac{\partial}{\partial \theta}\mathcal{L}_ju_\theta(\bx_i)\right).
\end{equation}
The stochastic gradient generated by Algorithm \ref{algo:1} is given by
\begin{equation}
g_{B, J}(\theta) = \frac{1}{|B||J|N_{\mathcal{L}}}\sum_{i \in B}\left(\mathcal{L}u_\theta(\bx_i) - R(\bx_i)\right)\left(\sum_{j \in J}\frac{\partial}{\partial \theta}\mathcal{L}_ju_\theta(\bx_i)\right) = \frac{1}{|B||J|N_{\mathcal{L}}}\sum_{i \in B}\sum_{j \in J}g_{i,j}(\theta).
\end{equation}
We have the variance
\begin{equation}
\mathbb{V}_{B, J}[g_{B, J}(\theta)] =\mathbb{E}_{B, J}[\left(g_{B, J}(\theta) - g(\theta)\right)^2]= \mathbb{E}_{B, J}[g_{B, J}(\theta)^2] - \mathbb{E}_{B, J}[g(\theta)^2] = \mathbb{E}_{B, J}[g_{B, J}(\theta)^2] - g(\theta)^2.
\end{equation}
From high-level perspective, $\mathbb{V}_{B, J}[g_{B, J}(\theta)]$ should be a function related to $|B|$ and $|J|$. Thus, under the constraint that $|B| \cdot |J|$ is the same for different SGD schemes, we can choose $B$ and $J$ properly to minimize the variance of SGD to accelerate convergence. 
\begin{equation}
\begin{aligned}
\mathbb{E}_{B, J}[g_{B, J}(\theta)^2] &=\frac{1}{|B|^2|J|^2N_{\mathcal{L}}^2}\mathbb{E}_{B, J}\left[\sum_{i \in B}\sum_{j \in J}\left(\mathcal{L}u_\theta(\bx_i) - R(\bx_i)\right)\left(\frac{\partial}{\partial \theta}\mathcal{L}_ju_\theta(\bx_i)\right)\right]^2\\
&= \frac{1}{|B|^2|J|^2N_{\mathcal{L}}^2}\mathbb{E}_{B, J}\Bigg[\sum_{i,i' \in B}\sum_{j,j' \in J}\left(\mathcal{L}u_\theta(\bx_i) - R(\bx_i)\right)\left(\frac{\partial}{\partial \theta}\mathcal{L}_ju_\theta(\bx_i)\right)\cdot\\
&\quad \left(\mathcal{L}u_\theta(\bx_{i'}) - R(\bx_{i'})\right)\left(\frac{\partial}{\partial \theta}\mathcal{L}_{j'}u_\theta(\bx_{i'})\right)\Bigg].
\end{aligned}
\end{equation}
The entire expectation can be decomposed into four parts (1) $i=i', j=j'$, (2) $i \neq i', j=j'$, (3) $i=i', j\neq j'$, (4) $i\neq i', j\neq j'$.
For the first part
\begin{equation}
\begin{aligned}
&\quad\frac{1}{|B|^2|J|^2N_{\mathcal{L}}^2}\mathbb{E}_{B, J}\Bigg[\sum_{i\in B}\sum_{j\in J}\left(\mathcal{L}u_\theta(\bx_i) - R(\bx_i)\right)^2\left(\frac{\partial}{\partial \theta}\mathcal{L}_ju_\theta(\bx_i)\right)^2\Bigg]\\
&=\frac{1}{|B||J|N_{\mathcal{L}}^2}\mathbb{E}_{i,j}\Bigg[\left(\mathcal{L}u_\theta(\bx_i) - R(\bx_i)\right)^2\left(\frac{\partial}{\partial \theta}\mathcal{L}_ju_\theta(\bx_i)\right)^2\Bigg]\\
&=\frac{1}{|B||J|N_{\mathcal{L}}^2}\mathbb{E}_{i,j}[g_{i,j}(\theta)^2]\\
&= \frac{1}{|B||J|N_rN_{\mathcal{L}}^3}\sum_{i=1}^{N_r}\sum_{j=1}^{N_{\mathcal{L}}} g_{i,j}(\theta)^2\\
&= \frac{C_1}{|B||J|},
\end{aligned}
\end{equation}
where we denote $C_1 = \frac{1}{N_rN_{\mathcal{L}}^3}\sum_{i=1}^{N_r}\sum_{j=1}^{N_{\mathcal{L}}} g_{i,j}(\theta)^2$ which is independent of $B, J$.

In the second case, since the $i$th sample and the $i'$th sample are independent for $i \neq i'$, we have
\begin{equation}
\begin{aligned}
&\quad\frac{1}{|B|^2|J|^2N_{\mathcal{L}}^2}\mathbb{E}_{B, J}\Bigg[\sum_{i\neq i'\in B}\sum_{j\in J}\left(\mathcal{L}u_\theta(\bx_i) - R(\bx_i)\right)\left(\mathcal{L}u_\theta(\bx_{i'}) - R(\bx_{i'})\right)\left(\frac{\partial}{\partial \theta}\mathcal{L}_ju_\theta(\bx_i)\right)\left(\frac{\partial}{\partial \theta}\mathcal{L}_{j}u_\theta(\bx_{i'})\right)\Bigg]\\
&=\frac{|B|-1}{|B||J|N_{\mathcal{L}}^2}\mathbb{E}_{i,i',j}\Bigg[\left(\mathcal{L}u_\theta(\bx_i) - R(\bx_i)\right)\left(\mathcal{L}u_\theta(\bx_{i'}) - R(\bx_{i'})\right)\left(\frac{\partial}{\partial \theta}\mathcal{L}_ju_\theta(\bx_i)\right)\left(\frac{\partial}{\partial \theta}\mathcal{L}_ju_\theta(\bx_{i'})\right)\Bigg]\\
&=C_2\cdot \frac{|B|-1}{|B||J|}.
\end{aligned}
\end{equation}
In the third case, due to the same reason,
\begin{equation}
\begin{aligned}
&\quad\frac{1}{|B|^2|J|^2N_{\mathcal{L}}^2}\mathbb{E}_{B, J}\Bigg[\sum_{i\in B}\sum_{j\neq j'\in J}\left(\mathcal{L}u_\theta(\bx_i) - R(\bx_i)\right)^2\left(\frac{\partial}{\partial \theta}\mathcal{L}_ju_\theta(\bx_i)\right)\left(\frac{\partial}{\partial \theta}\mathcal{L}_{j'}u_\theta(\bx_i)\right)\Bigg]\\
&=\frac{|J|-1}{|B||J|N_{\mathcal{L}}^2}\mathbb{E}_{i,j,j'}\Bigg[\left(\mathcal{L}u_\theta(\bx_i) - R(\bx_i)\right)^2\left(\frac{\partial}{\partial \theta}\mathcal{L}_ju_\theta(\bx_i)\right)\left(\frac{\partial}{\partial \theta}\mathcal{L}_{j'}u_\theta(\bx_i)\right)\Bigg]\\
&=C_3\cdot \frac{|J|-1}{|B||J|}.
\end{aligned}
\end{equation}
In the last case,
\begin{equation}
\begin{aligned}
&\quad\frac{1}{|B|^2|J|^2N_{\mathcal{L}}^2}\mathbb{E}_{B, J}\Bigg[\sum_{i\neq i'\in B}\sum_{j\neq j'\in J}\left(\mathcal{L}u_\theta(\bx_i) - R(\bx_i)\right)\left(\mathcal{L}u_\theta(\bx_{i'}) - R(\bx_{i'})\right)\left(\frac{\partial}{\partial \theta}\mathcal{L}_ju_\theta(\bx_i)\right)\left(\frac{\partial}{\partial \theta}\mathcal{L}_{j'}u_\theta(\bx_{i'})\right)\Bigg]\\
&=\frac{(|B|-1)(|J| - 1)}{|B||J|N_{\mathcal{L}}^2}\mathbb{E}_{i,i',j,j'}\Bigg[\left(\mathcal{L}u_\theta(\bx_i) - R(\bx_i)\right)\left(\mathcal{L}u_\theta(\bx_{i'}) - R(\bx_{i'})\right)\left(\frac{\partial}{\partial \theta}\mathcal{L}_ju_\theta(\bx_i)\right)\left(\frac{\partial}{\partial \theta}\mathcal{L}_{j'}u_\theta(\bx_{i'})\right)\Bigg]\\
&=\frac{(|B|-1)(|J| - 1)}{|B||J|N_{\mathcal{L}}^2}\left(\mathbb{E}_{i,j}\Bigg[\left(\mathcal{L}u_\theta(\bx_i) - R(\bx_i)\right)\left(\mathcal{L}u_\theta(\bx_{i'}) - R(\bx_{i'})\right)\Bigg]\right)^2\\
&=\frac{(|B|-1)(|J| - 1)}{|B||J|}g(\theta)^2
\end{aligned}
\end{equation}
Taking together
\begin{equation}
\begin{aligned}
\mathbb{V}_{B, J}[g_{B, J}(\theta)] &= \mathbb{E}_{B, J}[g_{B, J}(\theta)^2] - g(\theta)^2\\
&=\frac{C_1 + C_2(|B|-1) + C_3(|J| - 1) + (1-|B|-|J|)g(\theta)^2}{|B||J|}\\
&= \frac{C_4|J| + C_5|B| + C_6}{|B||J|}.
\end{aligned}
\end{equation}

\end{proof}

\subsection{Proof of Lemma \ref{lemma:nn_derivative_bound}}\label{proof:nn_derivative_bound}
\begin{lemma*} (Lemma \ref{lemma:nn_derivative_bound} in the Main Text, Bounding the Neural Network Derivatives) Consider the neural network defined as Definition \ref{def:DNN} with parameters $\theta$, depth $L$, and width $h$, then the neural network's derivatives can be bounded as follows.
\begin{align}
\left\|\operatorname{vec}\left(\frac{\partial^n}{\partial\bx^n}u_\theta(\bx)\right)\right\| &\leq (n-1)!d^{n-1}(L-1)^{n-1}M(L) \prod_{l=1}^{L-1} M(l)^n,\\
\left\|\operatorname{vec}\left(\frac{\partial}{\partial \theta}\frac{\partial^n}{\partial\bx^n}u_\theta(\bx)\right)\right\|&\leq h^2n!d^{n}(L-1)^{n}M(L) \prod_{l=1}^{L-1} M(l)^{n+1} \max\left\{\Vert\bx\Vert, 1\right\},
\end{align}
where $M(l) = \max\left\{\Vert W_l \Vert, 1\right\}$, the norms are all vector or matrix 2 norms and $\operatorname{vec}$ denotes vectorization of the high-order derivative tensor, and $h$ is the maximal width of the neural network, i.e., $ h = \max(m_L, \cdots, m_0)$.
\end{lemma*}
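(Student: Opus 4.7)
The plan is to prove both bounds by induction on the network depth, propagating per-layer derivative bounds through the chain rule and invoking Assumption \ref{assumption:activation} to control the activation derivatives uniformly by $1$. Writing $z_0=\bx$, $y_l=W_l z_{l-1}$, and $z_l=\sigma(y_l)$ for $l=1,\ldots,L-1$, with $u_\theta(\bx)=W_L z_{L-1}$, I would first establish an auxiliary layerwise bound of the form
\[
\left\|\operatorname{vec}\!\left(\frac{\partial^n z_l}{\partial \bx^n}\right)\right\| \leq (n-1)!\, d^{n-1}\, l^{n-1} \prod_{k=1}^{l} M(k)^n,
\]
from which the first statement of the lemma is immediate by applying the final linear map and using submultiplicativity of the operator norm: $\|\partial^n u_\theta/\partial \bx^n\|\leq \|W_L\|\cdot\|\partial^n z_{L-1}/\partial\bx^n\|\leq M(L)\cdot(n-1)!\,d^{n-1}(L-1)^{n-1}\prod_{k=1}^{L-1}M(k)^n$.

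For the layerwise inequality I would induct on $l$. At $l=1$, Faà di Bruno's formula applied to $z_1=\sigma(W_1\bx)$ expresses the $n$-th derivative tensor as a sum over compositions $(n_1,\ldots,n_r)$ with $n_1+\cdots+n_r=n$ of terms of the shape $\sigma^{(r)}(W_1\bx)\cdot W_1^{\otimes r}$. Each factor is bounded by $M(1)$ or by $1$ using $|\sigma^{(k)}|\leq 1$; summing over compositions contributes the combinatorial prefactor. For the inductive step, the same expansion applied to $z_{l+1}=\sigma(W_{l+1}z_l)$ gives terms $\sigma^{(r)}(y_{l+1})\cdot W_{l+1}^{\otimes r}\cdot \prod_{j=1}^r \partial^{n_j}z_l/\partial \bx^{n_j}$. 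Using the inductive hypothesis on each factor, the bound of the product at depth $l+1$ is controlled by $M(l+1)^r\prod_j (n_j-1)! d^{n_j-1}l^{n_j-1}\prod_k M(k)^{n_j}$. Tracking the extra $d$ for the new tensor index and bounding the composition sum by $(n-1)!$ (via $\binom{n-1}{r-1}\leq 2^{n-1}\leq (n-1)!$) yields the stated form at depth $l+1$.

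For the gradient with respect to $\theta$, I would differentiate the expression for $\partial^n u_\theta/\partial \bx^n$ once more with respect to an arbitrary entry of some weight matrix $W_{l'}$ and apply the product rule. Each summand is a product of $O(nL)$ factors; $\partial/\partial\theta$ either hits a weight matrix entry (producing a single matrix unit of unit norm) or hits an activation derivative $\sigma^{(k)}(y_{l'})$ (producing $\sigma^{(k+1)}$, still bounded by 1, times $\partial y_{l'}/\partial \theta$ whose norm is bounded by $\prod_{k<l'}M(k)\cdot\max\{\|\bx\|,1\}$ through forward propagation). Summing over all $m_{l'}m_{l'-1}\leq h^2$ entries of $W_{l'}$ produces the $h^2$ factor, the product rule over $n$ factors upgrades $(n-1)!$ to $n!$, the additional tensor index contributes $d$, the extra layer factor contributes $(L-1)$, and the forward-pass norm contributes $\max\{\|\bx\|,1\}$ together with one extra power of each $M(l)$ for $l\leq L-1$.

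The main obstacle is the combinatorial accounting in Faà di Bruno: a naive bound would produce a Bell-number count of partition terms, which is much larger than $(n-1)!$. Obtaining the claimed factorial-only prefactor requires reorganizing the expansion as an ordered sum over compositions of $n$ and carefully exploiting the binomial identity above to absorb multiplicities into $(n-1)!$; an inductive placement of $d^{n-1}$ and $l^{n-1}$ must be done simultaneously so the layer and tensor-index counts multiply cleanly. A secondary nuisance is the parameter-gradient bookkeeping: the $\max\{\|\bx\|,1\}$ factor arises because $\|\partial y_{l'}/\partial \theta\|\leq \|\bx\|\prod_{k<l'}M(k)$ when $\theta$ lies in the first layer and $\leq\prod_{k<l'}M(k)$ otherwise, and both cases must be absorbed uniformly into the stated bound.
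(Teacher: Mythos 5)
Your strategy is genuinely different from the paper's: you induct on layer depth with Faà di Bruno's formula, whereas the paper inducts on the derivative order $n$, representing $\frac{\partial^n}{\partial\bx^n}u_\theta(\bx)$ as a sum of $\mathbb{R}^d$ vector terms, each a product of weight matrices and diagonal factors $\Phi_l^{(k)}(\bx)$, and counting directly how the product rule proliferates these terms: each term contains at most $n(L-1)$ $\bx$-dependent factors, so differentiating once yields at most $n(L-1)\cdot d$ new terms, turning $(n-1)!d^{n-1}(L-1)^{n-1}$ into $n!d^n(L-1)^n$, with each new term's norm bounded by an extra $\prod_{l=1}^{L-1}M(l)$ coming from $\partial\Phi_l^{(k)}/\partial\bx = \Phi_l^{(k+1)}W_l\cdots\Phi_1^{(k+1)}W_1$. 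This completely sidesteps Faà di Bruno and, with it, the combinatorial accounting that you rightly identify as your main obstacle.

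That obstacle is, however, not resolved in your sketch, and the concrete step you offer to resolve it is false: you bound the number of compositions by $\binom{n-1}{r-1}\leq 2^{n-1}\leq (n-1)!$, but $2^{n-1}\leq (n-1)!$ fails for $n=2,3,4$ (e.g., $2>1!$, $4>2!$, $8>3!$). More importantly, even after fixing that, you have not actually verified that
\begin{equation*}
\sum_{\text{partitions/compositions}} M(l+1)^r\, d^{n-r}\, l^{n-r}\prod_{j}(n_j-1)! \ \leq\ (n-1)!\,d^{n-1}(l+1)^{n-1}M(l+1)^n
\end{equation*}
holds in the inductive step; it happens to check out for $n=2,3$ by hand, but the sum over all partitions carries multiplicity weights (the multivariate Faà di Bruno coefficients) that your composition-based bookkeeping glosses over, and the term $\prod_j(n_j-1)!$ alone already equals $(n-1)!$ at $r=1$ with no room to absorb the remaining $r\geq 2$ contributions without using the $d^{n-r}l^{n-r}$ decay, which you do not track explicitly. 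Until that accounting is written out, the bound is not established; consider either completing the partition-sum estimate carefully or switching to the paper's derivative-order induction, which replaces the Faà di Bruno sum with a simple at-most-$n(L-1)$ product-rule count and yields the factorial cleanly. A secondary caution for the $\theta$-derivative bound: your $h^2$ arises from summing over $m_{l'}m_{l'-1}$ entries of a single $W_{l'}$, but $\theta$ collects all layers, so you must either restrict to the entrywise (max) norm per layer and assemble, or argue that the stated vectorized bound is over a single layer's block, matching the paper's per-$W_l$ treatment.
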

\begin{proof}
Recall that the PINN's neural network structure in Definition \ref{def:DNN}:
\begin{equation*}
    u_\theta(\bx)=W_L \sigma (W_{L-1} \sigma(\cdots \sigma(W_1\bx)\cdots ).
\end{equation*}
After taking the derivative with respect to $\bx$, there is only one $\mathbb{R}^d$ vector term:
\begin{equation}
\frac{\partial u_{\theta}(\bx)}{\partial \bx} = W_L \cdot \Phi_{L-1}^{(1)}(\bx) W_{L-1} \cdot \dots \cdot \Phi_1^{(1)}(\bx) W_1 \in \mathbb{R}^{d},
\end{equation}
where
$\Phi_l^{(n)}(\bx) = \text{diag}[\sigma^{(n)}(W_l\sigma(W_{l-1}\sigma(\cdots\sigma(W_1\bx)\cdots)] \in \mathbb{R}^{m_l\times m_l}$ and $\sigma^{(n)}$ denotes the activation function's $n$th-order derivative which is also a pointwise function.
Hence, the first-order derivative's two norm has an upper bound of $\prod_{l=1}^L \Vert M(l)\Vert$ due to bounded activation function in Assumption \ref{assumption:activation}, i.e., $\Vert \Phi_l^{(n)}(\bx) \Vert \leq 1$. More specifically, since $\Vert AB\Vert \leq \Vert A \Vert\Vert B \Vert$ for all matrices $A, B$, we have
\begin{align}
\left\|\frac{\partial u_{\theta}(\bx)}{\partial \bx}\right\| \leq \prod_{l=1}^L \Vert W_l \Vert  \prod_{l=1}^{L-1} \left\| \Phi_{l}^{(1)}(\bx) \right\| \leq \prod_{l=1}^L \Vert W_l \Vert \leq\prod_{l=1}^L M(l).
\end{align}

After taking the second derivative, due to the nested structure of the neural network, the result will be the sum of $L-1$ parts induced by the derivative of each $\Phi_l^{(1)}(\bx), 1 \leq l \leq L-1$ with respect to $\bx$. Each of the $L-1$ parts is a $\mathbb{R}^{d \times d}$ matrix, and we further treat each row of them as one $\mathbb{R}^d$ vector term. Hence, there are $d(L-1)$ number of $\mathbb{R}^d$ vector terms in total:
\begin{equation}
\frac{\partial^2 u_{\theta}(\bx)}{\partial \bx^2} =\left\{\sum_{l=1}^{L-1} 
(W_L\Phi^{(1)}_{L-1}(\bx)\cdots W_{l+1})
\text{diag}(\Phi^{(2)}_l(\bx)W_l\cdots\Phi^{(2)}_1(\bx)(W_1)_{:,j})
(W_l\cdots \Phi_1^{(1)}(\bx)W_1)\right\}_{1 \leq j \leq d}
\end{equation}
where we call each 
$(W_L\Phi^{(1)}_{L-1}(\bx)\cdots W_{l+1})
\text{diag}(\Phi^{(2)}_l(\bx)W_l\cdots\Phi^{(2)}_1(\bx)(W_1)_{:,j})
(W_l\cdots \Phi_1^{(1)}(\bx)W_1) \in \mathbb{R}^d$
as a $\mathbb{R}^d$ vector term.
Each of these $d(L-1)$ number of $\mathbb{R}^d$ vector terms has an upper bound of $M_L \prod_{l=1}^{L-1} M_l^2$:
\begin{align}
&\quad\left\|(W_L\Phi^{(1)}_{L-1}(\bx)\cdots W_{l+1})
\text{diag}(\Phi^{(2)}_l(\bx)W_l\cdots\Phi^{(2)}_1(\bx)(W_1)_{:,j})
(W_l\cdots \Phi_1^{(1)}(\bx)W_1)\right\|\\
&\leq \left(\prod_{k=1}^L \Vert W_k \Vert\right)\left\|\Phi^{(2)}_l(\bx)W_l\cdots\Phi^{(2)}_1(\bx)(W_1)_{:,j}\right\| \leq \left(\prod_{k=1}^L \Vert W_k \Vert\right)\left(\prod_{k=1}^l \Vert W_l \Vert\right) \leq M_L \prod_{l=1}^{L-1} M_l^2.
\end{align}
In sum, for the second-order derivative, we have the lemma holds for the second-order case:
\begin{align}
\left\|\operatorname{vec}\left(\frac{\partial^2 u_{\theta}(\bx)}{\partial \bx^2}\right)\right\| \leq d(L-1)M_L \prod_{l=1}^{L-1} M_l^2.
\end{align}

Next, we try to discover the induction rule.
For the $n$th order derivative with $n \geq 1$, it is the sum and the concatenation of $(n-1)!d^{n-1}(L-1)^{n-1}$ number of $\mathbb{R}^d$ vector terms (0! = 1), where each of them is bounded by $M(L) \prod_{l=1}^{L-1} M(l)^n$. Furthermore, each of the terms contains the product of at most $n(L-1)$ matrix-valued function depending on $\bx$, which are $\{\Phi_l^{(1)}(\bx)\}_{l=1}^{L-1},\{\Phi_l^{(2)}(\bx)\}_{l=1}^{L-1}, \cdots \{\Phi_l^{(n)}(\bx)\}_{l=1}^{L-1}$. They are crucial in the product rule of derivatives.

For instance, in the first-order derivative where $n=1$, there is only one $\mathbb{R}^d$ term, bounded by $\prod_{l=1}^L M(l)$. This term also contains the product of $L-1$ matrix-valued function depending on $\bx$, which are $\{\Phi_l^{(1)}(\bx)\}_{l=1}^{L-1}$.

For instance, in the second-order case where $n=2$, the result is the sum and the concatenation of $d(L-1)$ terms, which are $(W_L\Phi^{(1)}_{L-1}(\bx)\cdots W_{l+1})
\text{diag}(\Phi^{(2)}_l(\bx)W_l\cdots\Phi^{(2)}_1(\bx)(W_1)_{:,j})
(W_l\cdots \Phi_1^{(1)}(\bx)W_1) \in \mathbb{R}^d$. Each of them can be bounded by $M(L)\prod_{l=1}^{L-1} M(l)^2$. These terms also contain the product of at most $2(L-1)$ matrix-valued function depending on $\bx$, which are $\{\Phi_l^{(1)}(\bx)\}_{l=1}^{L-1}$
and $\{\Phi_l^{(2)}(\bx)\}_{l=1}^{L-1}$.

Using the principle of induction, after taking the $n$th derivative, there will be $(n-1)!d^{n-1}(L-1)^{n-1}$ number of $\mathbb{R}^d$ vector terms, and each term has an upper bound of $M_L \prod_{l=1}^{L-1} M_l^n$. Each term contains at most $n(L-1)$ matrix-valued function depending on $\bx$. The key to the proof is that
\begin{align}
\frac{\partial \phi_l^{(n)}(\bx)}{\partial \bx} = \phi_l^{(n+1)}(\bx)W_l \cdots 
\phi_1^{(n+1)}(\bx) W_1, \quad \left\|\frac{\partial \phi_l^{(n)}(\bx)}{\partial \bx}\right\| \leq \prod_{k=1}^l \Vert W_k \Vert \leq \prod_{l=1}^{L-1} M(l), \quad \forall l \leq L-1,
\end{align}
where $\phi_l^{(n)}(\bx) = \sigma^{(n)}(W_l\sigma(W_{l-1}\sigma(\cdots\sigma(W_1\bx)\cdots)$ and $\Phi_l^{(n)}(\bx) = \text{diag}\left(\phi_l^{(n)}(\bx)\right)$.
In other words, taking derivative to $\Phi_l^{(n)}(\bx) = \text{diag}\left(\phi_l^{(n)}(\bx)\right)$ with respect to $\bx$ will generate $(l-1)$ more matrix-valued functions depending on $\bx$, which are $\left\{\Phi^{(n+1)}_k(\bx)\right\}_{k=1}^{l-1}$. 
The term's upper bound is multiplied/enlarged by at most $\prod_{l=1}^{L-1} M(l)$. 

Consequently, for the case of $(n+1)$th order derivative, it is the sum and the concatenation of at most
$(n-1)!d^{n-1}(L-1)^{n-1} \cdot n(L-1) \cdot d = n!d^n(L-1)^n$ number of $\mathbb{R}^d$ vector terms, where $(n-1)!d^{n-1}(L-1)^{n-1}$ is the number of $\mathbb{R}^d$ vector terms in the $n$th order derivative, $n(L-1)$ is the number of $\bx$-dependent matrix-valued functions in each term which is crucial in the product rule of derivatives, and $d$ is due to the derivative tensor size. Each of these terms is bounded by $M(L) \cdot \prod_{i=1}^{L-1} M(l)^n \cdot \prod_{i=1}^{L-1} M(l) = M(L) \cdot \prod_{i=1}^{L-1} M(l)^{n+1}$. Each of these terms contains the product of at most $(n+1)(L-1)$ matrix-valued functions depending on $\bx$, which are $\{\Phi_l^{(1)}(\bx)\}_{l=1}^{L-1},\{\Phi_l^{(2)}(\bx)\}_{l=1}^{L-1}, \cdots \{\Phi_l^{(n+1)}(\bx)\}_{l=1}^{L-1}$.
Hence, our induction assumption holds.
Also, the norm of $(n+1)$th order derivative tensor will be upper bounded by
\begin{align}
&\quad\left\|\operatorname{vec}\left(\frac{\partial^{n+1}}{\partial\bx^{n+1}}u_\theta(\bx)\right)\right\|\\
&\leq (n-1)!d^{n-1}(L-1)^{n-1} \cdot nd(L-1) \cdot M(L) \prod_{l=1}^{L-1} M(l)^{n} \cdot \prod_{l=1}^{L-1} M(l) \\
&= n!d^n(L-1)^n M(L) \prod_{l=1}^{L-1} M(l)^{n+1}.
\end{align}

If we further take the derivative with respect to $W_l$, the same logic and induction still apply. 
Concretely, we consider the derivatives with respect to the model parameter $\theta$ first. Since $\theta = \{W_l\}_{l=1}^L$, we consider $W_l$
\begin{align}
\frac{\partial u_\theta(\bx)}{\partial W_l} &=\left[W_L \Phi_{L-1}^{(1)}(\bx) \cdots W_{l}\Phi_{l}^{(1)}(\bx)\right]^\mathrm{T} \cdot  \sigma\left(W_{l-1} \cdots \sigma(W_1\bx) \cdots\right) \in \mathbb{R}^{m_l \times m_{l-1}}, \quad l > 1.\\
\frac{\partial u_\theta(\bx)}{\partial W_1} &=\left[W_L \Phi_{L-1}^{(1)}(\bx) \cdots W_{1}\Phi_{1}^{(1)}(\bx)\right]^\mathrm{T} \cdot  \bx \in \mathbb{R}^{m_1 \times d}
\end{align}
For all layer index $l$, $\frac{\partial u_\theta(\bx)}{\partial W_l}$ is at most a $\mathbb{R}^{h^2}$ vector, whose each entry is bounded by $\prod_{l=1}^{L} M(l)$ for $l \neq 1$. For $l = 1$, it is bounded by $\prod_{l=1}^{L} M(l) \cdot \Vert \bx \Vert$. Furthermore, each of the terms contains the product of at most $(L-1)$ matrix-valued function depending on $\bx$, which are $\{\Phi_l^{(1)}(\bx)\}_{l=1}^{L-1}$. They are crucial in the product rule of derivatives. Hence, our previous induction still applies to each of the $h^2$ entries, and the final bound will be
$h^2n!d^{n}(L-1)^{n}M(L) \prod_{l=1}^{L-1} M(l)^{n+1} \max\left\{\Vert\bx\Vert, 1\right\}$.
Intuitively, $\max\left\{\Vert\bx\Vert, 1\right\}$ is due to the derivative concerning $W_1$, $h^2$ is due to the size of model parameters, $n+1$ is due to the $n$th order derivatives for $\bx$ and another derivative for $\theta$.

\end{proof}

\subsection{Proof of Theorem \ref{thm:convergence}} \label{appendix:convergence}

\begin{theorem*} (Theorem \ref{thm:convergence} in the Main Text, Convergence of SDGD under Bounded Gradient Variance) Assume Assumptions \ref{assumption:activation} and \ref{assumption:operator} hold, fix some tolerance level $\delta >0$, suppose that the SGD trajectories given in equations (\ref{eq:SGD1}, \ref{eq:SGD2}) are bounded, i.e., $\Vert W_l^n \Vert\leq M(l)$ for all epoch $n$ where the collection of all $\{W_l^n\}_{l=1}^L$ is $\theta^n$, and that the regular minimizer of the PINN loss is $\theta^*$ as defined in Definition \ref{def:reg_min}, and that the SGD step size follows the form $\gamma_n = \gamma / (n + m)^p$ for some $p \in (1/2,1]$ and large enough $\gamma, m > 0$, then
\begin{enumerate}
\item There exist neighborhoods $\mathcal{U}$ and $\mathcal{U}_1$ of $\theta^*$ such that, if $\theta^1 \in \mathcal{U}_1$, the event
\begin{equation}
E_\infty = \left\{\theta^n \in \mathcal{U} \ \text{for all } n \in \mathbb{N}\right\}
\end{equation}
occurs with probability at least $1 - \delta$, i.e., $\mathbb{P}(E_\infty | \theta^1 \in\mathcal{U}_1) \geq 1 - \delta$.
\item Conditioned on $E_\infty$, we have
\begin{equation}
\mathbb{E}[\Vert\theta^n - \theta^*\Vert^2|E_\infty] \leq \mathcal{O}(1/n^p).
\end{equation}
\end{enumerate}
\end{theorem*}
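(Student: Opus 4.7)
The plan is to verify that our setting satisfies the hypotheses of Theorem 4 in Mertikopoulos et al.\ \cite{mertikopoulos2020almost} and then invoke that theorem directly. The conclusions (local capture with high probability and the $\mathcal{O}(1/n^p)$ rate conditional on capture) follow verbatim from that result. The conditions to check are: (i) the stochastic gradient is an unbiased estimator of $\nabla \mathcal{L}(\theta)$; (ii) its conditional variance is uniformly bounded along the trajectory; (iii) $\mathcal{L}$ is twice continuously differentiable in a neighborhood of $\theta^*$ with $H(\theta^*)\succ 0$; and (iv) the step size satisfies $\sum_n \gamma_n = \infty$ and $\sum_n \gamma_n^2 < \infty$. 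Item (i) is exactly Theorem \ref{thm:unbiased}. Item (iii) is the definition of a regular minimizer together with the smoothness of $\sigma$ assumed in Assumption \ref{assumption:activation}, which makes $\mathcal{L}$ smooth in $\theta$. Item (iv) is immediate for $\gamma_n = \gamma/(n+m)^p$ with $p\in(1/2,1]$.

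The main work is therefore item (ii): establishing a uniform bound on $\mathbb{V}[g_{B,I}(\theta^n)]$ and $\mathbb{V}[g_{B,I,J}(\theta^n)]$ along the trajectory. First I would use the trajectory boundedness hypothesis $\Vert W_l^n \Vert \leq M(l)$ together with Lemma \ref{lemma:nn_derivative_bound} to obtain, uniformly in $n$ and in any fixed compact collocation set $\{\bx_i\}_{i=1}^{N_r}$, an upper bound on $\Vert \operatorname{vec}(\partial^k u_{\theta^n}(\bx)/\partial \bx^k)\Vert$ and on $\Vert \operatorname{vec}(\partial_\theta \partial^k u_{\theta^n}(\bx)/\partial \bx^k)\Vert$ for every $0\leq k\leq n$ (the PDE order). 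This packs the inputs to each operator $\mathcal{L}_i$ into a compact set of $\mathbb{R}^{d+1+d+\cdots+d^n}$. Invoking Assumption \ref{assumption:operator} then yields uniform bounds $|\mathcal{L}_i u_{\theta^n}(\bx)| \leq C_{\mathcal{L}}$ and, by an analogous argument applied to the $\theta$-derivative of $\mathcal{L}_i u_{\theta}$ (which is a polynomial in the same ingredients together with the parameter-derivatives bounded by Lemma \ref{lemma:nn_derivative_bound}), $\Vert \partial_\theta \mathcal{L}_i u_{\theta^n}(\bx)\Vert \leq C_{\mathcal{L}}'$. Combined with $|R(\bx)|\leq R$ from Assumption \ref{assumption:activation}, this gives a uniform bound on each summand of $g_{B,I}(\theta^n)$ and $g_{B,I,J}(\theta^n)$, so the second moment, and therefore the variance, of the stochastic gradient is uniformly bounded along the trajectory by a constant depending only on $N_\mathcal{L}$, $R$, and the architectural constants in Lemma \ref{lemma:nn_derivative_bound}.

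With items (i)--(iv) verified, the statement follows by directly quoting Theorem 4 of \cite{mertikopoulos2020almost}: there exist neighborhoods $\mathcal{U}_1\subset \mathcal{U}$ of $\theta^*$ such that conditional on $\theta^1\in\mathcal{U}_1$, the capture event $E_\infty = \{\theta^n\in\mathcal{U}\text{ for all }n\}$ holds with probability at least $1-\delta$ (by choosing $\gamma, m$ large enough so that $\gamma_1$ is small and the first-step displacement is controlled), and conditional on $E_\infty$ the iterates satisfy $\mathbb{E}[\Vert \theta^n-\theta^*\Vert^2 \mid E_\infty] \leq \mathcal{O}(1/n^p)$.

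The main obstacle is step (ii), specifically propagating Lemma \ref{lemma:nn_derivative_bound} through the $\theta$-derivative of $\mathcal{L}_i u_\theta$: one needs to argue that the mapping from $(\bx, u_\theta(\bx), \partial_\bx u_\theta(\bx), \dots, \partial^n_\bx u_\theta(\bx))$ to $\partial_\theta[\mathcal{L}_i u_\theta(\bx)]$ is also bounded on compact input sets, which by the chain rule reduces to bounding the partials of $\mathcal{L}_i$ with respect to its arguments; this is a mild strengthening of Assumption \ref{assumption:operator} that holds automatically when $\mathcal{L}_i$ is a polynomial differential operator with continuous coefficients, the case covered by all our examples. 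Everything else is a direct translation of the general stochastic approximation result to the PINN setting.
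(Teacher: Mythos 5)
Your proposal follows essentially the same route as the paper: it uses the bounded-trajectory hypothesis together with Lemma \ref{lemma:nn_derivative_bound} and Assumption \ref{assumption:operator} to obtain uniform bounds on $|\mathcal{L}_j u_{\theta^n}(\bx_i)|$ and $\Vert\partial_\theta \mathcal{L}_j u_{\theta^n}(\bx_i)\Vert$, hence on the conditional variance of the SDGD estimator, and then directly invokes Theorem~4 of Mertikopoulos et al.\ \cite{mertikopoulos2020almost}. Your final caveat is well-taken and worth recording: Assumption \ref{assumption:operator} as literally stated only bounds $\mathcal{L}_i$ itself on compact inputs, so passing to $\partial_\theta[\mathcal{L}_i u_\theta]$ via the chain rule tacitly requires bounded partial derivatives of $\mathcal{L}_i$ with respect to its arguments --- a mild strengthening that the paper's own proof also uses without flagging, so this is not a defect peculiar to your write-up.
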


\begin{proof}
Since we assume the SGD trajectories in equations (\ref{eq:SGD1}, \ref{eq:SGD2}) are bounded, we can denote such bound as $M(l) := \max\left\{\max_n\left\{\Vert W_l^n \Vert\right\}, 1\right\} < \infty$. 

By Lemma \ref{lemma:nn_derivative_bound}, the norm of high-order derivatives of the neural network throughout the optimization will fall into a compact domain. 

Due to Assumption \ref{assumption:operator}, the values generated by the PDE operator acting on the neural network and its derivative with respect to model parameters $\theta$ (the gradient for optimization) will also fall into a bounded domain throughout the optimization, whose closure is compact.

Since we bound the gradient produced by physics-informed loss functions, we can further bound the gradient variance during PINN training using SDGD.

For the stochastic gradient produced by Algorithm \ref{algo:1},
\begin{align}
\mathbb{V}_{B,J}(g_{B, J}(\theta)) &\leq \frac{1}{|B||J|N_rN^2_{\mathcal{L}}}\sum_{i=1}^{N_r}\sum_{j=1}^{N_{\mathcal{L}}}\Vert g_{i,j}(\theta) - g(\theta)\Vert^2 \leq \frac{2}{|B||J|N_{\mathcal{L}}}\max_{i,j} \Vert g_{i,j}(\theta) \Vert^2,
\end{align}
where
$
g_{i,j}(\theta) := \left(\mathcal{L}u_\theta(\bx_i) - R(\bx_i)\right)\left(\frac{\partial}{\partial \theta}\mathcal{L}_ju_\theta(\bx_i)\right).
$

For the stochastic gradient produced by Algorithm \ref{algo:2},
\begin{equation}
\begin{aligned}
g_{B, J, K}(\theta) &= \frac{1}{|B||J|N_{\mathcal{L}}}\sum_{i \in B}\left(\frac{N_{\mathcal{L}}}{|K|}\left(\sum_{k \in K}\mathcal{L}_ku_\theta(\bx_i)\right) - R(\bx_i)\right)\left(\sum_{j \in J}\frac{\partial}{\partial \theta}\mathcal{L}_ju_\theta(\bx_i)\right)\\
&=\frac{1}{|B||J||K|}\sum_{i \in B}\sum_{j \in J}\sum_{k \in K}\left(\mathcal{L}_ku_\theta(\bx_i) - \frac{R(\bx_i)}{N_{\mathcal{L}}}\right)\left(\frac{\partial}{\partial \theta}\mathcal{L}_ju_\theta(\bx_i)\right),
\end{aligned}
\end{equation}
\begin{align}
\mathbb{V}_{B,J,K}(g_{B, J, K}(\theta)) &\leq \frac{1}{|B||J||K|N_rN^2_{\mathcal{L}}}\sum_{i=1}^{N_r}\sum_{j=1}^{N_{\mathcal{L}}}\sum_{k=1}^{N_{\mathcal{L}}}(g_{i,j,k}(\theta) - g(\theta))^2 \leq \frac{2}{|B||J||K|}\max_{i,j,k} \Vert g_{i,j,k}(\theta) \Vert^2,
\end{align}
where
$
g_{i,j,k}(\theta) := \left(\mathcal{L}_ku_\theta(\bx_i) - \frac{R(\bx_i)}{N_{\mathcal{L}}}\right)\left(\frac{\partial}{\partial \theta}\mathcal{L}_ju_\theta(\bx_i)\right).
$

Consequently,
\begin{equation}
\Vert g_{i,j}(\theta)\Vert\leq \left\|\mathcal{L}u_\theta(\bx_i) - R(\bx_i)\right\|\left\|\frac{\partial}{\partial \theta}\mathcal{L}_ju_\theta(\bx_i)\right\|\leq \left(\max_{i, j}|\mathcal{L}_ju_\theta(\bx_i)| + R\right)\max_{i, j}\left\|\frac{\partial}{\partial \theta}\mathcal{L}_ju_\theta(\bx_i)\right\|,
\end{equation}
\begin{equation}
\Vert g_{i,j,k}(\theta)\Vert\leq \left\|\mathcal{L}_ku_\theta(\bx_i) - \frac{R(\bx_i)}{N_{\mathcal{L}}}\right\|\left\|\frac{\partial}{\partial \theta}\mathcal{L}_ju_\theta(\bx_i)\right\|\leq \left(\max_{i, j}|\mathcal{L}_ju_\theta(\bx_i)| + R\right)\max_{i, j}\left\|\frac{\partial}{\partial \theta}\mathcal{L}_ju_\theta(\bx_i)\right\|.
\end{equation}
Since $|\mathcal{L}_ju_\theta(\bx_i)|$ and $\left\|\frac{\partial}{\partial \theta}\mathcal{L}_ju_\theta(\bx_i)\right\|$ can be bounded throughout the optimization process thanks to Lemma \ref{lemma:nn_derivative_bound} and Assumption \ref{assumption:operator}, the gradient variance during optimization is universally bounded and finite, i.e., the gradient variance bounds are agnostic of the epoch $n$.

Thus, we complete the proof by applying Theorem 4 in Mertikopoulos et al. \cite{mertikopoulos2020almost}.
\end{proof}

\newpage
\bibliographystyle{plain}
\bibliography{cas-refs.bib}

\end{document}